\documentclass[twoside]{article}

\usepackage{subfiles} % Best loaded last in the preamble
\usepackage{xr}
\usepackage{hyperref}       % hyperlinks
\usepackage{url}            % simple URL typesetting
\usepackage{booktabs}       % professional-quality tables
\usepackage{amsfonts}       % blackboard math symbols
\usepackage{nicefrac}       % compact symbols for 1/2, etc.
\usepackage{microtype}      % microtypography

\usepackage{amsmath}
\usepackage{subcaption, graphicx}
\usepackage{appendix}
\usepackage[ruled,vlined]{algorithm2e}
\usepackage{amsthm}

\usepackage{floatrow}
\newfloatcommand{capbtabbox}{table}[][\FBwidth]
\floatsetup[table]{capposition=top}

\usepackage{blindtext}
\usepackage{enumitem}
\usepackage{bbold}

\newtheorem{theorem}{Theorem}
\newtheorem{lemma}{Lemma}
\newtheorem{definition}{Definition}
\newtheorem{proposition}{Proposition}
\newtheorem{corollary}{Corollary}

% If your paper is accepted, change the options for the package
% aistats2021 as follows:
%
\usepackage[accepted]{aistats2021}
%
% This option will print headings for the title of your paper and
% headings for the authors names, plus a copyright note at the end of
% the first column of the first page.

% If you set papersize explicitly, activate the following three lines:

\setlength{\pdfpageheight}{11in}
\setlength{\pdfpagewidth}{8.5in}

% If you use natbib package, activate the following three lines:
\usepackage[round]{natbib}

% If you use BibTeX in apalike style, activate the following line:
\bibliographystyle{apalike}

\begin{document}

% If your paper is accepted and the title of your paper is very long,
% the style will print as headings an error message. Use the following
% command to supply a shorter title of your paper so that it can be
% used as headings.
%
\runningtitle{Deep Probabilistic Accelerated Evaluation (Deep-PrAE)}

% If your paper is accepted and the number of authors is large, the
% style will print as headings an error message. Use the following
% command to supply a shorter version of the authors names so that
% they can be used as headings (for example, use only the surnames)
%
%\runningauthor{Surname 1, Surname 2, Surname 3, ...., Surname n}

\twocolumn[

\aistatstitle{Deep Probabilistic Accelerated Evaluation: A Robust Certifiable Rare-Event Simulation Methodology for Black-Box \\Safety-Critical Systems}

\aistatsauthor{ Mansur Arief \And Zhiyuan Huang \And  Guru K.S. Kumar \And Yuanlu Bai }

\aistatsaddress{ Carnegie Mellon University \And  Carnegie Mellon University \And Carnegie Mellon University \And Columbia University} 

\aistatsauthor{Shengyi He \And Wenhao Ding \And Henry Lam \And Ding Zhao }

\aistatsaddress{Columbia University \And Carnegie Mellon  University \And Columbia University \And Carnegie Mellon University} ]

\runningauthor{Arief, Huang, Kumar, Bai, He, Ding, Lam, Zhao}

\begin{abstract}
  Evaluating the reliability of intelligent physical systems against rare safety-critical events poses a huge testing burden for real-world applications. Simulation provides a useful platform to evaluate the extremal risks of these systems before their deployments. Importance Sampling (IS), while proven to be powerful for rare-event simulation, faces challenges in handling these learning-based systems due to their black-box nature that fundamentally undermines its efficiency guarantee, which can lead to under-estimation without diagnostically detected. We propose a framework called Deep Probabilistic Accelerated Evaluation (Deep-PrAE) to design statistically guaranteed IS, by converting black-box samplers that are versatile but could lack guarantees, into one with what we call a relaxed efficiency certificate that allows accurate estimation of bounds on the safety-critical event probability. We present the theory of Deep-PrAE that combines the dominating point concept with rare-event set learning via deep neural network classifiers, and demonstrate its effectiveness in numerical examples including the safety-testing of an intelligent driving algorithm.
\end{abstract}

\section{Introduction}

The unprecedented deployment of intelligent physical systems on many real-world applications comes with the need for safety validation and certification \citep{kalra2016driving,koopman2017autonomous, uesato2018rigorous}. For systems that interact with humans and are potentially safety-critical - which can range from medical systems to self-driving cars and personal assistive robots - it is imperative to rigorously assess their risks before their full-scale deployments. The challenge, however, is that these risks are often associated precisely to how AI reacts in rare and catastrophic scenarios which, by their own nature, are not sufficiently observed. 

The challenge of validating the safety of intelligent systems described above is, unfortunately, insusceptible to traditional test methods. In the self-driving context, for instance, the goal of validation is to ensure the AI-enabled system reduces human-level accident rate (in the order of 1.5 per $10^{8}$ miles of driving), thus delivering enhanced safety promise to the public \citep{Evan2016FatalPerfect,kalra2016driving, PreliminaryHWY16FH018}. Formal verification, which mathematically analyzes and verifies autonomous design, faces challenges when applied to black-box or complex models due to the lack of analytic tractability to formulate failure cases or consider all execution trajectories \citep{clarke2018handbook}. Automated scenario selection approaches generate test cases based on domain knowledge \citep{wegener2004evaluation} or adaptive searching algorithms (such as adaptive stress testing; \citealt{koren2018adaptive}), which is more implementable but falls short of rigor. Test matrix approaches, such as Euro NCAP \citep{national2007new}, use prepopulated test cases extracted from crash databases, but they only contain historical human-driver information. The closest analog to the latter for self-driving vehicles is ``naturalistic tests'', which means placing them in real-world environments and gathering observations. This method, however, is economically prohibitive because of the rarity of the target conflict events \citep{zhao2017accelerated, arief2018accelerated, claybrook2018autonomous,o2018scalable}. 

Because of all these limitations, simulation-based tests surface as a powerful approach to validate complex black-box designs \citep{corso2020survey}. This approach operates by integrating the target intelligent algorithm into an interacting virtual simulation platform that models the surrounding environment. By running enough Monte Carlo sampling of this (stochastic) environment, one hopes to observe catastrophic conflict events and subsequently conduct statistical analyses. This approach is flexible and scalable, as it hinges on building a virtual environment instead of physical systems, and provides a probabilistic assessment on the occurrences and behaviors of safety-critical events  \citep{ koopman2018toward}.  
 
 Nonetheless, similar to the challenge encountered by naturalistic tests, because of their rarity, safety-critical events are seldom observed in the simulation experiments. In other words, it could take an enormous amount of Monte Carlo simulation runs to observe one ``hit'', and this in turn manifests statistically as a large estimation variance per simulation run relative to the target probability of interest (i.e., the so-called \emph{relative error}; \citealt{l2010asymptotic}). This problem, which is called rare-event simulation \citep{bucklew2013introduction}, is addressed conventionally under the umbrella of variance reduction, which includes a range of techniques from importance sampling (IS) \citep{juneja2006rare,blanchet2012state} to multi-level splitting \citep{glasserman1999multilevel,villen1994restart}. Typically, to ensure the relative error is dramatically reduced, one has to analyze the underlying model structures to gain understanding of the rare-event behaviors, and leverage this knowledge to design good Monte Carlo schemes \citep{juneja2006rare,dean2009splitting}. For convenience, we call such relative error reduction guarantee an \emph{efficiency certificate}.
 
Our main focus of this paper is on rare-event problems with the underlying model unknown or too complicated to support analytical tractability. In this case, traditional variance reduction approaches may fail to provide an efficiency certificate. Moreover, we will explain how some existing ``black-box'' variance reduction techniques, while versatile and powerful, could lead to dangerous \emph{under-estimation} of a rare-event probability without detected diagnostically due to a lack of efficiency certificate. This motivates us to study a framework to convert these black-box methods into one that has rigorous certificate. More precisely, our framework consists of three ingredients:

\textbf{Relaxed efficiency certificate: }We shift the estimation of target rare-event probability to an upper (and lower) bound, in a way that supports the integration of learning errors into variance reduction without giving up estimation correctness.
 
\textbf{Set-learning with one-sided error: }We design learning algorithms based on deep neural network classifier to create outer (or inner) approximations of rare-event sets. This classifier has a special property that, under a geometric property called orthogonal monotonicity, it exhibits zero false negative rates. 

\textbf{Deep-learning-based IS: }With the deep-learning based rare-event set approximation, we search the so-called \emph{dominating points} in rare-event analysis to create IS that achieves the relaxed efficiency certificate.

We call our framework consisting of the three ingredients above \emph{Deep Probabilistic Accelerated Evaluation (Deep-PrAE)}, where ``Accelerated Evaluation'' follows terminologies in recent approaches for the safety-testing of autonomous vehicles \citep{zhao2016accelerated, 8116682}. In the set-learning step in Deep-PrAE, the samples fed into our deep classifier can be generated by any black-box algorithms including the cross-entropy (CE) method \citep{de2005tutorial,rubinstein2013cross} and particle approaches such as adaptive multi-level splitting (AMS) \citep{au2001estimation,cerou2007adaptive, webb2018statistical}. Deep-PrAE turns these samples into an IS with an efficiency certificate against undetected under-estimation. Our approach is robust in the sense that it provides a tight bound for the target rare-event probability if the underlying classifier is expressive enough, while it still provides a correct, though conservative, bound if the classifier is weak. To our best knowledge, such type of guarantees and robustness features is the first of its kind in the rare-event simulation literature, and we envision our work to lay the foundation for further improvements to design certified methods for evaluating more sophisticated intelligent designs. 

\section{Statistical Challenges in Black-Box Rare-Event Simulation}\label{sec:existing}

Our evaluation goal is the probabilistic assessment of a complex physical system invoking rare but catastrophic events in a stochastic environment. For concreteness, we write this rare-event probability $\mu=P(X\in\mathcal S_\gamma)$. Here $X$ is a random vector in $\mathbb R^d$ that denotes the environment, and is distributed according to $p$. $\mathcal S_\gamma$ denotes a safety-critical set on the interaction between the physical system and the environment. The ``rarity'' parameter $\gamma\in\mathbb R$ is considered a large number, with the property that as $\gamma\to\infty$, $\mu\to0$ (Think of, e.g., $\mathcal S_\gamma=\{x:f(x)\geq\gamma\}$ for some risk function $f$ and exceedance threshold $\gamma$). We will work with Gaussian $p$ for the ease of analysis, but our framework is more general (i.e., applies to Gaussian mixtures and other light-tailed distributions). Here, we explain intuitively the main concepts and challenges in black-box rare-event simulation, leaving the details to Appendix \ref{append:challenge}.

\textbf{Monte Carlo Efficiency. }Suppose we use a Monte Carlo estimator $\hat\mu_n$ to estimate $\mu$, by running $n$ simulation runs in total. Since $\mu$ is tiny, the error of a meaningful estimation must be measured in relative term, i.e., we would like 
\begin{equation}
P(|\hat\mu_n-\mu|>\epsilon \mu)\leq\delta\label{efficiency certificate}
\end{equation}where $\delta$ is some confidence level (e.g., $\delta=5\%$) and $0<\epsilon<1$. 

Suppose that $\hat\mu_n$ is unbiased and is an average of $n$ i.i.d. simulation runs, i.e., $\hat\mu_n=(1/n)\sum_{i=1}^n Z_i$ for some random unbiased output $Z_i$. We define the \emph{relative error} $RE=Var(Z_i)/\mu^2$ as the ratio of variance (per-run) and squared mean. Importantly, to attain \eqref{efficiency certificate}, a sufficient condition is $n\geq RE/(\delta\epsilon^2)$. So, when $RE$ is large, the required Monte Carlo size is also large.

\textbf{Challenges in Naive Monte Carlo. }Let $Z_i=I(X_i\in\mathcal S_\gamma)$ where $I(\cdot)$ denotes the indicator function, and $X_i$ is an i.i.d. copy of $X$. Since $Z_i$ follows a Bernoulli distribution, $RE=(1-\mu)/\mu$. Thus, the required $n$  scales linearly in $1/\mu$ (when $\mu$ is tiny). This demanding condition is a manifestation of the difficulty in hitting $\mathcal S_\gamma$. In the standard large deviations regime \citep{dembo2010large,dupuis2011weak} where $\mu$ is exponentially small in $\gamma$, the required Monte Carlo size $n$ would grow \emph{exponentially} in $\gamma$. 

\textbf{Variance Reduction. }The severe burden when using naive Monte Carlo motivates techniques to drive down $RE$. 
First we introduce the following notion:

\begin{definition}
We say an estimator $\hat\mu_n$ satisfies an efficiency certificate to estimate $\mu$ if it achieves \eqref{efficiency certificate} with $n=\tilde{O}(\log(1/\mu))$, for given $0<\epsilon,\delta<1$.
\end{definition}
In the above, $\tilde O(\cdot)$ denotes a polynomial growth in $\cdot$. If $\hat\mu_n$ is constructed from $n$ i.i.d.~samples, then the efficiency certificate can be attained with $RE=\tilde O(\log(1/\mu))$. Note that in the large deviations regime, the sample size $n$ used in a certifiable estimator is reduced from exponential in $\gamma$ in naive Monte Carlo to \emph{polynomial} in $\gamma$. 

\emph{Importance sampling (IS)} is a prominent technique to achieve efficiency certificate \citep{glynn1989importance}. IS generates $X$ from another distribution $\tilde p$ (called IS distribution), and outputs $\hat\mu_n=(1/n)\sum_{i=1}^n L(X_i)I(X_i\in\mathcal S_\gamma)$ where $L=dp/d\tilde p$ is the likelihood ratio, or the Radon-Nikodym derivative, between $p$ and $\tilde p$. Via a change of measure, it is easy to see that $\hat\mu_n$ is unbiased for $\mu$. The key is to control its $RE$ by selecting a good $\tilde p$. This requires analyzing the behavior of the likelihood ratio $L$ under the rare event, and in turn understanding the rare-event sample path dynamics \citep{juneja2006rare}. 

\textbf{Perils of Black-Box Variance Reduction Algorithms. }
Unfortunately, in black-box settings where complete model knowledge and analytical tractability are unavailable, the classical IS methodology faces severe challenges. To explain this, we first need to understand how efficiency certificate can be obtained based on the concept of \emph{dominating points}. From now on, we consider input $X\in\mathbb{R}^d$ from a Gaussian distribution $N(\lambda,\Sigma)$ where $\Sigma$ is positive definite. 
\begin{definition}\label{dominating point def}
A set $A_\gamma\subset \mathbb{R}^d$ is a \emph{dominating set} for the set $\mathcal S_\gamma\subset \mathbb{R}^d$ associated with the distribution $N(\lambda,\Sigma)$ if for any $x\in\mathcal S_\gamma$, there exists at least one $a\in A_\gamma$ such that $(a-\lambda)^T\Sigma^{-1}(x-a)\geq0$. Moreover, this set is minimal in the sense that if any point in $A_\gamma$ is removed, then the remaining set no longer satisfies the above condition. We call any point in $A_\gamma$ a dominating point.
\end{definition}

The dominating set comprises the ``corner'' cases where the rare event occurs \citep{sadowsky1990large}. In other words, each dominating point $a$ encodes, in a local region, the most likely scenario should the rare event happen, and this typically corresponds to the highest-density point in this region. Locality here refers to the portion of the rare-event set that is on one side of the hyperplane cutting through $a$ (see Figure \ref{fig:illustration}(a)). 

Intuitively, to increase the frequency of hitting the rare-event set (and subsequently to reduce variance), an IS would translate the distributional mean from $\lambda$ to the global highest-density point in the rare-event set. The delicacy, however, is that this is \emph{insufficient} to control the variance, due to the ``overshoots'' arising from sampling randomness. In order to properly control the overall variance, one needs to divide the rare-event set into local regions governed by dominating points, and using a mixture IS distribution that accounts for \emph{all} of them. This approach gives a certifiable IS, described as follows:

\begin{figure}[t]
  \begin{subfigure}{.48\textwidth}
  \centering
    \includegraphics[width=\textwidth]{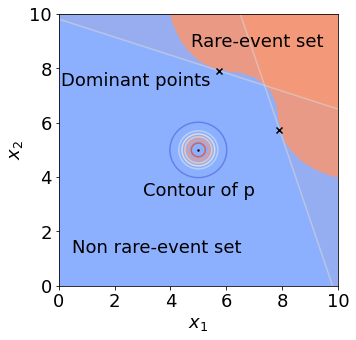}
    \caption{}
  \end{subfigure}%
  \begin{subfigure}{.48\textwidth}
  \centering
    \includegraphics[width=\textwidth]{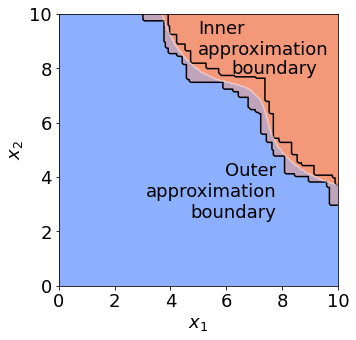}
    \caption{}
  \end{subfigure}
  
    \begin{subfigure}{.48\textwidth}
  \centering
    \includegraphics[width=\textwidth]{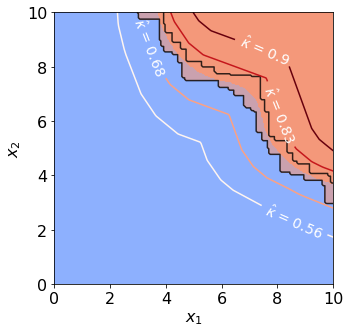}
    \caption{}
  \end{subfigure}
  \begin{subfigure}{.48\textwidth}
  \centering
    \includegraphics[width=\textwidth]{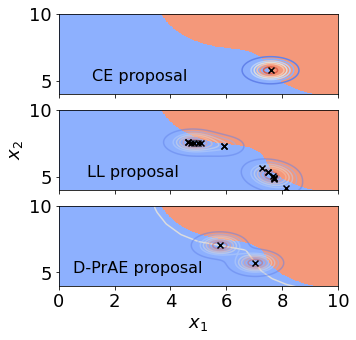}
    \caption{}
  \end{subfigure}
  
\caption{(a) An example of $\mathcal S_\gamma$ with two dominating points (b) Outer- and inner- approximations of $\mathcal S_\gamma$ (c) $\hat \kappa$ tuning for Stage 1 Alg.  \ref{algo:stage1}  (d) IS proposals: too few dominating points for CE with too simple parametric class, too many for LL, and a balance for Deep-PrAE.}
\label{fig:illustration}
\end{figure}

\begin{theorem}[Certifiable IS]
Suppose $\mathcal S_{\gamma}=\bigcup_j \mathcal S_{\gamma}^j$, where each $\mathcal S_\gamma^j$ is a ``local'' region corresponding to a dominating point $a_j\in A_\gamma$ associated with the distribution $N(\lambda,\Sigma)$, with conditions stated precisely in Theorem \ref{general IS} in the Appendix. Then the IS estimator constructed by $n$ i.i.d. outputs drawn from the IS distribution $\sum_j \alpha_j N(a_j,\Sigma)$ achieves an efficiency certificate in estimating $\mu=P(X\in \mathcal S_\gamma)$. \label{general IS simplified}
\end{theorem}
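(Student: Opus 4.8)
The plan is to verify the efficiency certificate by bounding the relative error of the mixture IS estimator directly, using the defining inequality of dominating points to control the likelihood ratio piece by piece. Write $\tilde p=\sum_j\alpha_j N(a_j,\Sigma)$, $L=dp/d\tilde p$, and $\hat\mu_n=(1/n)\sum_{i=1}^n L(X_i)I(X_i\in\mathcal S_\gamma)$ with $X_i$ i.i.d.\ from $\tilde p$. Unbiasedness is the change-of-measure identity, and since $RE=Var_{\tilde p}(L(X)I(X\in\mathcal S_\gamma))/\mu^2\le M_2/\mu^2$ with $M_2:=E_{\tilde p}[L(X)^2 I(X\in\mathcal S_\gamma)]$, the whole task reduces to showing $M_2\le\mathrm{poly}(\gamma)\cdot\mu^2$; plugging $RE=\mathrm{poly}(\gamma)$ into the sufficient condition $n\ge RE/(\delta\epsilon^2)$ from the Monte Carlo efficiency discussion then yields the certificate.

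The crux is a region-wise bound on $L$. Because $\tilde p$ is a mixture, $L(x)\le\phi_{\lambda,\Sigma}(x)/(\alpha_j\,\phi_{a_j,\Sigma}(x))$ for every $j$, and completing the square gives $\phi_{\lambda,\Sigma}(x)/\phi_{a_j,\Sigma}(x)=\exp\big(-(a_j-\lambda)^T\Sigma^{-1}(x-a_j)-I_j\big)$ where $I_j:=\tfrac12(a_j-\lambda)^T\Sigma^{-1}(a_j-\lambda)$. On the local piece $\mathcal S_\gamma^j$ the dominating-point inequality $(a_j-\lambda)^T\Sigma^{-1}(x-a_j)\ge 0$ kills the first term, so $L(x)\le e^{-I_j}/\alpha_j$ \emph{uniformly} on $\mathcal S_\gamma^j$. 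This is precisely how the mixture defeats the ``overshoot'' that ruins a single mean shift: each piece's likelihood ratio is controlled by its own rate. A union bound on the indicator then gives $M_2\le\sum_j E_{\tilde p}[L^2 I(X\in\mathcal S_\gamma^j)]\le\sum_j(e^{-I_j}/\alpha_j)\,E_{\tilde p}[L\,I(X\in\mathcal S_\gamma^j)]=\sum_j(e^{-I_j}/\alpha_j)\,P_p(X\in\mathcal S_\gamma^j)\le\mu\sum_j e^{-I_j}/\alpha_j$; note that disjointness of the pieces is not needed.

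To close the gap I need a matching lower bound $\mu\ge c\,e^{-I^\ast}/\mathrm{poly}(\gamma)$ with $I^\ast:=\min_j I_j$, and this is where the regularity hypotheses of Theorem~\ref{general IS} enter: each $\mathcal S_\gamma^j$ locally contains a halfspace supported at $a_j$ with inner normal $\Sigma^{-1}(a_j-\lambda)$, so projecting along that direction and using a one-sided Gaussian tail bound ($\bar\Phi(t)\ge\phi(t)(t^{-1}-t^{-3})$) gives $\mu\ge P_p(X\in\mathcal S_\gamma^{j^\ast})\ge c\,e^{-I^\ast}/\mathrm{poly}(\gamma)$. Since also $P_p(X\in\mathcal S_\gamma^j)\le\bar\Phi(\sqrt{2I_j})\le e^{-I_j}$, we get $\sum_j e^{-I_j}\le m\,e^{-I^\ast}\le m\,\mathrm{poly}(\gamma)\,\mu/c$, where $m$ is the number of dominating points. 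Taking the weights bounded below (e.g.\ $\alpha_j=1/m$) yields $M_2/\mu^2\le\big(\sum_j e^{-I_j}/\alpha_j\big)/\mu\le m^2\,\mathrm{poly}(\gamma)/c=\mathrm{poly}(\gamma)$, which is $\tilde O(\log(1/\mu))$ under the large-deviations scaling of the appendix, so the efficiency certificate follows.

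I expect the main obstacle to be the lower bound on $\mu$: the region-wise upper bound on $M_2$ is essentially forced by the definition of a dominating point, but guaranteeing that $\mu$ is not super-exponentially smaller than $e^{-I^\ast}$ requires the precise geometry of the decomposition $\mathcal S_\gamma=\bigcup_j\mathcal S_\gamma^j$ (local halfspace containment at each $a_j$, finitely or polynomially many dominating points, and weights bounded away from $0$) — which is presumably why the statement defers these conditions to Theorem~\ref{general IS}. A minor point to dispatch along the way is that the $\mathcal S_\gamma^j$ need not be disjoint, but since only a union bound on indicators is used, this is harmless.
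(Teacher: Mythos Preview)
Your approach is the paper's: bound $L$ on each $\mathcal S_\gamma^j$ via the dominating-point inequality to get $\tilde E[Z^2]\lesssim\sum_j e^{-2I_j}/\alpha_j$, then match against a Laplace-type lower bound $\mu\gtrsim e^{-I^\ast}/\mathrm{poly}(\gamma)$. Your upper-bound chain (keeping the factor $P_p(X\in\mathcal S_\gamma^j)\le\mu$) is a slight variant of the paper's but equivalent, and you are right that disjointness is inessential.

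The one step that does not work as you wrote it is the lower bound on $\mu$. The dominating-point condition gives $\mathcal S_\gamma^j\subset\{x:(a_j-\lambda)^T\Sigma^{-1}(x-a_j)\ge 0\}$, the \emph{opposite} containment to ``$\mathcal S_\gamma^j$ locally contains a halfspace,'' so a one-dimensional $\bar\Phi$ bound after projection is unavailable. The paper's actual hypothesis (the one you defer to) is that $\mathcal S_\gamma$ contains a \emph{cone intersected with a small ball} at $a^\ast$, namely $\{x:B(x-a^\ast)\ge 0,\ (x-a^\ast)^T\Sigma^{-1}(x-a^\ast)\le\varepsilon^2\}$ for some invertible $B$ and $\varepsilon>0$; the lower bound then comes from integrating $\phi_{\lambda,\Sigma}$ over this region via the change of variables $Y=B(X-\lambda)$ and a product of $d$ one-dimensional integrals, each contributing at worst a polynomial factor (this is where the assumption that $a^\ast$ has polynomial components in $\gamma$ is used). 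Once you replace your halfspace heuristic with this cone-ball computation, the rest of your argument goes through unchanged.
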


On the contrary, if the Gaussian (mixture) IS distribution misses any of the dominating points, then the resulting estimate may be utterly unreliable for two reasons. First, not only that efficiency certificate may fail to hold, but its RE can be arbitrarily large. Second, even more dangerously, this poor performance can be empirically hidden and leads to a systematic \emph{under-estimation} of the rare-event probability without being detected. In other words, in a given experiment, we may observe a reasonable empirical relative error (i.e., sample variance over squared sample mean), yet the estimate is much lower than the correct value. These are revealed in the following example:

\begin{theorem}[Perils of under-estimation]
Suppose we estimate $\mu=P(X\geq\gamma\text{ or }X\leq -k\gamma)$ where $X\sim p=N(0,1)$ and $0<k<3$. We choose $\tilde p=N(\gamma,1)$ as the IS distribution to obtain $\hat\mu_n$. Then 1) The relative error of $\hat\mu_n$ grows exponentially in $\gamma$. 2) If $n$ is polynomial in $\gamma$, we have $P\left(\left|\hat\mu_n-\bar{\Phi}(\gamma)\right|>\varepsilon\bar{\Phi}(\gamma)\right)=O\left(\frac{\gamma}{n\varepsilon^2}\right)$ for any $\varepsilon>0$ where $\bar\Phi(\gamma)=P(X\geq\gamma)<\mu$, and the empirical relative error $=O(n^2)$ with probability higher than $1-1/2^n$.

\label{counterexample}
\end{theorem}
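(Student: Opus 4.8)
The plan is to exploit that the proposed IS distribution $\tilde p=N(\gamma,1)$ is tailored to the right piece $\{X\ge\gamma\}$ of $\mathcal S_\gamma$ and all but ignores the left piece $\{X\le -k\gamma\}$ --- the ``missing dominating point''. Throughout, the likelihood ratio is $L(x)=p(x)/\tilde p(x)=e^{\gamma^2/2-\gamma x}$, and I write $Z=L(X)\,I(X\in\mathcal S_\gamma)$ with $X\sim\tilde p$, so $\hat\mu_n=\frac1n\sum_{i=1}^n Z_i$. Since $\{X\ge\gamma\}$ and $\{X\le-k\gamma\}$ are disjoint, $\mu=\bar\Phi(\gamma)+\bar\Phi(k\gamma)>\bar\Phi(\gamma)$, and I decompose $\hat\mu_n=\hat\mu_n^R+\hat\mu_n^L$ according to which of the two pieces a sample falls in (samples in neither contribute $0$ to both). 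Two elementary facts are used repeatedly: the change of measure $E_{\tilde p}[L(X)^2 g(X)]=E_p[L(X)g(X)]$, and the Mills-ratio bounds $\frac{1}{2x}\phi(x)\le\bar\Phi(x)\le\frac1x\phi(x)$ for $x\ge1$ together with $\bar\Phi(x)\le\frac12 e^{-x^2/2}$, where $\phi$ is the standard normal density.

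For part 1, I would lower-bound $RE=Var_{\tilde p}(Z)/\mu^2\ge E_{\tilde p}[L(X)^2 I(X\le-k\gamma)]/\mu^2-1$ and evaluate the numerator by the change of measure plus completing the square: $E_{\tilde p}[L(X)^2 I(X\le-k\gamma)]=E_p[L(X)I(X\le-k\gamma)]=e^{\gamma^2}\,\Phi\big((1-k)\gamma\big)$. Combined with the crude bound $\mu\le e^{-\min(1,k)^2\gamma^2/2}$ and the Mills estimates, a short case split shows $RE\ge\frac12 e^{(1+k^2)\gamma^2}-1$ when $k\le1$ (where $\Phi((1-k)\gamma)\ge\frac12$, the argument being nonnegative), and $RE\ge \frac{c_k}{\gamma}\,e^{(2-(k-1)^2/2)\gamma^2}-1$ when $1<k<3$ (using $\bar\Phi((k-1)\gamma)\ge\frac{1}{2(k-1)\gamma}\phi((k-1)\gamma)$ for $\gamma$ large). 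The exponent $2-(k-1)^2/2$ is strictly positive exactly when $|k-1|<2$, i.e.\ $k<3$ --- this is precisely where the hypothesis enters --- so $RE$ grows at least exponentially in $\gamma$ in all cases.

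For part 2, the concentration claim follows from a union bound over the decomposition. The left contribution vanishes with overwhelming probability, $P(\hat\mu_n^L\neq0)\le n\,P_{\tilde p}(X\le-k\gamma)=n\,\bar\Phi\big((k+1)\gamma\big)$, which is super-polynomially small in $\gamma$ and hence negligible relative to $\gamma/(n\varepsilon^2)$ once $n$ is polynomial in $\gamma$. On the complementary event $\hat\mu_n=\hat\mu_n^R$, and $\hat\mu_n^R$ is the unbiased exponential-tilting IS estimator of $\bar\Phi(\gamma)$ with per-run second moment $E_{\tilde p}[L(X)^2 I(X\ge\gamma)]=e^{\gamma^2}\bar\Phi(2\gamma)$; the Mills bounds give $e^{\gamma^2}\bar\Phi(2\gamma)/\bar\Phi(\gamma)^2=O(\gamma)$, so Chebyshev yields $P(|\hat\mu_n^R-\bar\Phi(\gamma)|>\varepsilon\bar\Phi(\gamma))\le O(\gamma/(n\varepsilon^2))$, and adding the two pieces gives the stated bound. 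For the empirical relative error I would use the deterministic inequality that, for nonnegative $Z_i$ with $\max_i Z_i>0$, the sample variance is $\le\frac1n\sum_i Z_i^2\le(\max_i Z_i)^2$ while $\hat\mu_n\ge\frac1n\max_i Z_i$, so the empirical relative error (sample variance over squared sample mean) is $\le n^2$; and $\max_i Z_i>0$ holds whenever some $X_i\ge\gamma$, which --- since $P_{\tilde p}(X\ge\gamma)=P(N(\gamma,1)\ge\gamma)=\frac12$ --- occurs with probability $1-2^{-n}$. The punchline is that this $O(n^2)$ is merely polynomial in $\gamma$, hence exponentially smaller than the true $RE$ from part 1: the experimenter observes a healthy-looking empirical relative error while $\hat\mu_n$ silently converges to $\bar\Phi(\gamma)<\mu$.

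I expect the main obstacle to be part 1: pushing the Mills-ratio asymptotics to be sharp enough, and correctly tracking the sign of $(1-k)\gamma$ (positive when $k<1$, so that $\Phi((1-k)\gamma)\to1$ rather than to $0$), so that the threshold $k=3$ emerges cleanly instead of being swamped by loose constants. The remaining ingredients --- the change of measure, completing the square, the union bound, Chebyshev, and the sample-variance inequality --- are routine.
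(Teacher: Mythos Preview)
Your proposal is correct and follows essentially the same route as the paper's proof: the same decomposition into the right and left pieces of $\mathcal S_\gamma$, the same second-moment computation via change of measure (the paper writes $\tilde E[Z^2]=e^{\gamma^2}(\bar\Phi(2\gamma)+\bar\Phi((k-1)\gamma))$, you keep only the left term as a lower bound, which is enough), the same case split at $k=1$ with the exponent $2-(k-1)^2/2$ pinpointing $k<3$, and for part~2 the same union bound on $\{\exists i:X_i\le -k\gamma\}$ plus Chebyshev on the right-tail estimator, together with the deterministic bound on the empirical relative error once some $Z_i>0$. The only cosmetic difference is that your $\max_i Z_i$ route yields $n^2$ directly, whereas the paper bounds $\big(\frac1n\sum Z_i^2\big)\big/\hat\mu_n^2$; both land at $O(n^2)$.
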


The second conclusion in Theorem \ref{counterexample} implies that the estimator $\hat\mu_n$, built from an IS with a missed dominating point, systematically under-estimates the target $\mu$, yet with high probability its empirical relative error grows only polynomially in $\gamma$, thus wrongly fooling the user that the estimator is efficient.

With this, we now explain why using black-box variance reduction algorithms can be dangerous - in the sense of not having an efficiency certificate and, more importantly, the risk of an unnoticed systematic under-estimation. In the literature, there are two lines of techniques that apply to black-box problems. The first line is the CE method, which uses optimization to search for a good parametrization over a parametric class of IS. The objective criteria include the cross-entropy (with an oracle-best zero-variance IS distribution; \citealt{rubinstein2013cross,de2005tutorial}) and estimation variance \citep{arouna2004adaptative}. Without closed-form expressions, and also to combat the rare-event issue, one typically solves a sequence of empirical optimization problems, starting from a ``less rare'' problem (i.e., smaller $\gamma$) and gradually increasing the rarity with updated empirical objectives using better IS samples. Achieving efficiency requires both a sufficiently expressive parametric IS class and parameter convergence (so that at the end all the dominating points are accounted for).
The second line of methods is the multi-level splitting or subsimulation \citep{au2001estimation,cerou2007adaptive}, a particle method in lieu of IS, which relies on enough mixing of descendant particles. Full analyses on these methods to reach efficiency certificate appear challenging, and without one the estimators could be under-estimated, and without detected, as illustrated in Theorem \ref{counterexample}. We discuss more details of CE and AMS in  Appendix \ref{app:ce_ams}. 

Note that there are other variants of CE and AMS. The former include enhanced CE such as Markov chain IS \citep{botev2013markov,botev2016semiparametric,grace_kroese_sandmann_2014}, neural network IS \citep{muller2019neural} and nonparametric CE \citep{rubinstein2005stochastic}.

The latter include RESTART which works similarly as subset simulation and splitting but performs a number of simulation retrials after entering regions with a higher importance function value \citep{VILLENALTAMIRANO2010156}. Similar to standard CE and AMS, these methods also face challenges in satisfying an efficiency certificate. 

Lastly, we briefly review several other methods with guarantees similar to our efficiency certificate, but relies heavily on structual knowledge. The first one is large-deviations-based IS including sequential exponential tilting \citep{Bucklew2004,Asmussen2007,Siegmund1976importance} and mixture-based proposals \citep{chen2019efficient}. Another method, which is especially powerful for heavy tailed problems, is conditional Monte Carlo which reduces the variance by sampling conditional on some auxiliary random variables \citep{asmussen2006improved}.

Compared to existing methods as reviewed above, our novelty is to tackle black-box problems while sustaining a correctness guarantee, via a new certificate and a careful integration of set-learning with the dominating point machinery.

\section{The Deep Probabilistic Accelerated Evaluation Framework}\label{sec:Deep-PrAE}

 We propose the Deep-PrAE framework to overcome the challenges faced by black-box variance reduction algorithms. This framework comprises two stages: First is to learn the rare-event set from a first-stage sample batch, by viewing set learning as a classification task. These first-stage samples can be drawn from any rare-event sampling methods including CE and AMS. Second is to apply an efficiency-certified IS on the rare-event probability over the learned set. Algorithm \ref{algo:stage1} shows our main procedure. The key to achieving an ultimate efficiency certificate lies in how we learn the rare-event set in Stage 1, which requires two properties: 

 \paragraph{Small one-sided generalization error: }``One-sided'' generalization error here means the learned set is either an outer or an inner approximation of the unknown true rare-event set, with probability 1. Converting this into a classification, this means the false negative (or positive) rate is exactly 0. ``Small'' here then refers to the other type of error being controlled.

\textbf{Decomposability: }The learned set is decomposable according to dominating points in the form of Theorem \ref{general IS simplified}, so that an efficient mixture IS can apply.

\begin{algorithm}[h]
\KwIn{Black-box evaluator $I(\cdot\in\mathcal S_\gamma)$, initial Stage 1 samples $\{(\tilde X_i, Y_i) \}_{i=1,\ldots,n_1}$, Stage 2 sampling budget $n_2$, input distribution $N(\lambda,\Sigma)$.}
\KwOut{IS estimate $\hat\mu_n$.}

\nl \textbf{Stage 1 (Set Learning):}\\
\nl Train classifier with positive decision region $\overline{\mathcal  S}_\gamma^{\kappa}=\{x:\hat{g}(x) \geq\kappa\}$ using $\{(\tilde X_i, Y_i) \}_{i=1,\ldots,n_1}$;\\
\nl Replace $\kappa$ by $\hat\kappa=\max\{\kappa\in\mathbb R: (\overline{\mathcal  S}_\gamma^{\kappa})^c\subset\mathcal H(T_0)\}$;\\

\nl \textbf{Stage 2 (Mixture IS based on Searched dominating points):}\\
\nl Start with $\hat A_\gamma = \emptyset$;\\

\nl {\bf While } $\{x: \hat g(x) \geq \hat \kappa , (x^*_j-\lambda)^T\Sigma^{-1}(x-x^*_j) <0, \mbox{ $\forall x^*_j \in \hat A_\gamma$} \} \neq  \emptyset$ {\bf do }\\ \label{algo:while_line}
\nl \ \ \ \ \ \ Find a dominating point $x^*$ by solving the optimization problem \begin{align*} \label{eq:opt_ite}
    x^* =\arg \min_{x} &\ \  (x-\lambda)^T \Sigma^{-1}(x-\lambda) \ \ \ \\
\text{s.t.}\ \ \  &\hat g(x) \geq \hat \kappa,\ \  \\ &(x^*_j-\lambda)^T\Sigma^{-1}(x-x^*_j) <0, \ \mbox{$\forall x^*_j \in\hat A_\gamma$}
\end{align*}

\ \ \ \ and update $\hat A_\gamma \leftarrow\hat A_\gamma \cup \{x^*\}$;\\
\nl {\bf End}\\

\nl Sample $X_1,...,X_{n_2}$ from the mixture distribution $ \sum_{a\in\hat A_\gamma} (1/|\hat A_\gamma|) N(a,\Sigma) $.\\

\nl Compute the IS estimator $\hat\mu_n=( 1/n_2) \sum_{i =1}^ {n_2 } L(X_i) I(X_i \in \bar{\mathcal{S}}_{\gamma}^{\hat{\kappa}})$, where the likelihood ratio $L(X_i)=\phi (X_i;\lambda,\Sigma)/( \sum_{a\in \hat A_\gamma} (1/|\hat A_\gamma|) \phi (X_i;a,\Sigma))$ and $\phi(\cdot;\alpha,\Sigma)$ denotes the density of $N(\alpha,\Sigma)$.

    \caption{{\bf Deep-PrAE to estimate $\mu=P(X\in\mathcal S_\gamma)$.} 
    \label{algo:stage1}}
\end{algorithm}

The first property ensures that, even though the learned set can contain errors, the learned rare-event probability is either an upper or lower bound of the truth. This requirement is important as it is difficult to translate the impact of generalization errors into rare-event estimation errors. By Theorem \ref{counterexample}, we know that any non-zero error implies the risk of missing out on important regions of the rare-event set, undetectably. The one-sided generalization error allows a shift of our target to valid upper and lower bounds that can be correctly estimated, which is the core novelty of Deep-PrAE. 

To this end, we introduce a new efficiency notion:

\begin{definition}
We say an estimator $\hat\mu_n$ satisfies an upper-bound relaxed efficiency certificate to estimate $\mu$ if
$P(\hat\mu_n-\mu<-\epsilon\mu)\leq\delta$
with $n\geq\tilde O(\log(1/\mu))$, for given $0<\epsilon,\delta<1$. \label{relaxed certificate}
\end{definition}
Compared with the efficiency certificate in \eqref{efficiency certificate}, Definition \ref{relaxed certificate} is relaxed to only requiring $\hat\mu_n$ to be an upper bound of $\mu$, up to an error of $\epsilon\mu$. An analogous lower-bound relaxed efficiency certificate can be seen in Appendix \ref{app:lowerbound}. From a risk quantification viewpoint, the upper bound for $\mu$ is more crucial, and the lower bound serves to assess an estimation gap. The following provides a handy certification:

\begin{proposition}[Achieving relaxed efficiency certificate]
Suppose $\hat\mu_n$ is upward biased, i.e., $\overline\mu:=E[\hat\mu_n]\geq\mu$. Moreover, suppose $\hat\mu_n$ takes the form of an average of $n$ i.i.d. simulation runs $Z_i$, with $RE=Var(Z_i)/\overline\mu^2=\tilde O(\log(1/\overline\mu))$. Then $\hat\mu_n$ possesses the upper-bound relaxed efficiency certificate. \label{certificate prop simple}
\end{proposition}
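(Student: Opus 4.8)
The plan is to obtain the upper-bound relaxed efficiency certificate from a one-line Chebyshev concentration bound, after first converting the one-sided, $\mu$-normalized deviation event in Definition \ref{relaxed certificate} into a two-sided, $\overline\mu$-normalized deviation event to which the hypothesis $RE = Var(Z_i)/\overline\mu^2 = \tilde O(\log(1/\overline\mu))$ applies directly. The only nontrivial ingredient is a short algebraic containment that trades the normalizing constant $\mu$ for $\overline\mu$, and it is precisely here that the upward bias $\overline\mu\geq\mu$ is used.

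First I would rewrite the target event as $\{\hat\mu_n - \mu < -\epsilon\mu\} = \{\hat\mu_n < (1-\epsilon)\mu\}$. The key step: on this event $\hat\mu_n - \overline\mu < (1-\epsilon)\mu - \overline\mu$, and since $\overline\mu\geq\mu$ with $1-\epsilon>0$ we have $(1-\epsilon)\mu \leq (1-\epsilon)\overline\mu$, hence $(1-\epsilon)\mu - \overline\mu \leq -\epsilon\overline\mu$. Therefore $\{\hat\mu_n - \mu < -\epsilon\mu\} \subseteq \{\hat\mu_n - \overline\mu < -\epsilon\overline\mu\} \subseteq \{|\hat\mu_n - \overline\mu| \geq \epsilon\overline\mu\}$. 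Intuitively, the upward bias lifts the estimator's mean to $\overline\mu\geq\mu$, so a $\mu$-relative downward shortfall forces the estimator to miss its \emph{own} mean $\overline\mu$ by at least $\epsilon\overline\mu$.

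Next I would apply Chebyshev's inequality to the i.i.d. average: $P(|\hat\mu_n - \overline\mu| \geq \epsilon\overline\mu) \leq Var(\hat\mu_n)/(\epsilon^2\overline\mu^2) = Var(Z_i)/(n\epsilon^2\overline\mu^2) = RE/(n\epsilon^2)$, using $Var(\hat\mu_n) = Var(Z_i)/n$ together with the definition $RE = Var(Z_i)/\overline\mu^2$. Taking $n = \lceil RE/(\delta\epsilon^2)\rceil$ makes this at most $\delta$. Finally, since $\overline\mu\geq\mu$ (both probabilities, bounded away from $1$ in the rare-event regime) we have $\log(1/\overline\mu)\leq\log(1/\mu)$, so $RE = \tilde O(\log(1/\overline\mu)) \leq \tilde O(\log(1/\mu))$ and hence $n = \tilde O(\log(1/\mu))$, matching the sample-size requirement of Definition \ref{relaxed certificate}. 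This yields the claimed upper-bound relaxed efficiency certificate.

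I do not anticipate a genuine obstacle: the only subtlety is the normalization mismatch between the hypothesis (relative error measured against $\overline\mu$) and the certificate (a deviation measured against $\mu$), which is exactly what the containment $\{\hat\mu_n < (1-\epsilon)\mu\} \subseteq \{|\hat\mu_n - \overline\mu| \geq \epsilon\overline\mu\}$ dissolves, and which crucially relies on $\overline\mu\geq\mu$. One could tighten constants with a one-sided Chebyshev--Cantelli bound, but that is unnecessary for the $\tilde O$ statement. The same template, run with the inequalities reversed, gives the lower-bound counterpart referenced in Appendix \ref{app:lowerbound}.
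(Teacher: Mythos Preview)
Your proposal is correct and follows essentially the same approach as the paper's proof: both establish the containment $\{\hat\mu_n-\mu<-\epsilon\mu\}\subseteq\{\hat\mu_n-\overline\mu<-\epsilon\overline\mu\}$ via $\overline\mu\geq\mu$ and $1-\epsilon>0$, then apply Chebyshev (the paper calls it Markov) to get $n\geq RE/(\delta\epsilon^2)=\tilde O(\log(1/\overline\mu))=\tilde O(\log(1/\mu))$. Your write-up is slightly more explicit about the containment algebra and the passage through the two-sided event, but the argument is the same.
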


Proposition \ref{certificate prop simple} stipulates that a relaxed efficiency certificate can be attained by an upward biased estimator that has a logarithmic relative error with respect to the biased mean. Appendix \ref{app:conservativeness}
shows an extension of Proposition \ref{certificate prop simple} to two-stage procedures, where the first stage determines the upward biased mean. This upward biased mean, in turn, can be obtained by learning an outer approximation for the rare-event set, giving:

\begin{corollary}[Set-learning + IS]
Consider estimating $\mu=P(X\in\mathcal S_\gamma)$. Suppose we can learn a set $\overline{\mathcal S}_\gamma$ with any number $n_1$ of i.i.d. samples $D_{n_1}$ (drawn from some distribution) such that $\overline{\mathcal S}_\gamma\supset\mathcal S_\gamma$ with probability 1. Also suppose that there is an efficiency certificate for an IS estimator for $\overline\mu(D_{n_1}):=P(X\in\overline{\mathcal S}_\gamma)$. Then a two-stage estimator where a constant $n_1$ number of samples $D_{n_1}$ are first used to construct $\overline{\mathcal S}_\gamma$, and $n_2=\tilde O(\log(1/\overline\mu(D_{n_1}))$ samples are used for the IS in the second stage, achieves the upper-bound relaxed efficiency certificate.\label{relaxed prob}
\end{corollary}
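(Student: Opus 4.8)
The plan is to reduce the statement to the (two-stage form of the) already-established Proposition \ref{certificate prop simple} by conditioning on the first-stage sample. First I would fix $D_{n_1}$ and hence the learned set $\overline{\mathcal S}_\gamma=\overline{\mathcal S}_\gamma(D_{n_1})$. Because $\overline{\mathcal S}_\gamma\supset\mathcal S_\gamma$ holds with probability one, for almost every realization of $D_{n_1}$ we get $\overline\mu(D_{n_1})=P(X\in\overline{\mathcal S}_\gamma)\ge P(X\in\mathcal S_\gamma)=\mu$, so the quantity actually targeted by the second stage is an upward-biased surrogate for $\mu$. This is exactly where the one-sided (zero false-negative) generalization property of the set learner enters.

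Next I would bring in the assumed efficiency certificate for the second-stage IS estimator $\hat\mu_{n_2}$ of $\overline\mu(D_{n_1})$. Since the second-stage draws are independent of $D_{n_1}$, conditionally on $D_{n_1}$ the estimator $\hat\mu_{n_2}$ is unbiased for $\overline\mu(D_{n_1})$ and, with $n_2=\tilde O(\log(1/\overline\mu(D_{n_1})))$ i.i.d.\ runs, satisfies $P(|\hat\mu_{n_2}-\overline\mu(D_{n_1})|>\epsilon\,\overline\mu(D_{n_1})\mid D_{n_1})\le\delta$. On the complementary good event, $\hat\mu_{n_2}\ge(1-\epsilon)\overline\mu(D_{n_1})\ge(1-\epsilon)\mu$, i.e.\ $\hat\mu_{n_2}-\mu\ge-\epsilon\mu$; the key point is that $\overline\mu(D_{n_1})\ge\mu$ makes the lower tail against $\mu$ at least as well controlled as the two-sided deviation against $\overline\mu(D_{n_1})$, with the \emph{same} constant $\epsilon$. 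Hence $P(\hat\mu_n-\mu<-\epsilon\mu\mid D_{n_1})\le\delta$ for a.e.\ $D_{n_1}$, and since $\delta$ does not depend on $D_{n_1}$, taking expectation over $D_{n_1}$ yields $P(\hat\mu_n-\mu<-\epsilon\mu)\le\delta$, which is the upper-bound relaxed efficiency certificate of Definition \ref{relaxed certificate}.

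It remains to account for the total budget $n=n_1+n_2$. Here $n_1$ is a fixed constant, and $n_2=\tilde O(\log(1/\overline\mu(D_{n_1})))$; because $\overline\mu(D_{n_1})\ge\mu$ gives $\log(1/\overline\mu(D_{n_1}))\le\log(1/\mu)$, the random budget $n_2$ is dominated by a fixed polynomial in $\log(1/\mu)$, so $n=\tilde O(\log(1/\mu))$. This is exactly the two-stage analogue of Proposition \ref{certificate prop simple} recorded in Appendix \ref{app:conservativeness}: the first stage constructs an upward-biased target $\overline\mu(D_{n_1})$ through the outer approximation, and the second stage delivers a logarithmic relative error with respect to that biased mean.

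I expect the main obstacle to be the careful handling of the random learned set: one must apply the efficiency certificate \emph{conditionally} on each realized $\overline{\mathcal S}_\gamma$ (using independence of the two stages), verify that the data-dependent budget $n_2(D_{n_1})$ still aggregates to $\tilde O(\log(1/\mu))$, and confirm that integrating the conditional tail bounds over $D_{n_1}$ is legitimate (which holds since $\delta$ is uniform in $D_{n_1}$). The one-sidedness direction — that outer containment turns two-sided control around $\overline\mu(D_{n_1})$ into the desired lower-tail control around $\mu$ without loosening $\epsilon$ — is the conceptual crux but is short once set up.
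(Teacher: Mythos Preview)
Your proposal is correct and matches the paper's approach: the paper proves the corollary by invoking Proposition~\ref{prop:extend} (the two-stage extension of Proposition~\ref{certificate prop simple}), noting that $\overline{\mathcal S}_\gamma\supset\mathcal S_\gamma$ gives $\overline\mu(D_{n_1})\geq\mu$ almost surely, which is precisely the conditioning-plus-upward-bias argument you spell out. The only cosmetic difference is that the paper's Proposition~\ref{prop:extend} phrases the second-stage hypothesis via the conditional relative error and Chebyshev, whereas you invoke the efficiency certificate of Definition~1 directly; both yield the same conditional tail bound and the same $n_2=\tilde O(\log(1/\mu))$ conclusion.
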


To execute the procedure in Corollary \ref{relaxed prob}, we need to learn an outer approximation of the rare-event set. To this end, consider set learning as a classification problem. Suppose we have collected $n_1$ Stage 1 samples $\{(\tilde X_i, Y_i) \}_{i=1,\ldots,n_1}$, where $Y_i=1$ if $\tilde X_i$ is in the rare-event set $\mathcal S_\gamma$, and 0 otherwise. Here, it is beneficial to use Stage 1 samples that have sufficient presence in $\mathcal S_\gamma$, which can be achieved via any black-box variance reduction methods.
We then consider the pairs $\{(\tilde X_i, Y_i)\}$ where $\tilde X_i$ is regarded as the feature and $Y_i$ as the binary label, and construct a classifier, say $\hat g(x):\mathbb R^d\to[0,1]$, from some hypothesis class $\mathcal G$ that (nominally) signifies $P(Y=1|X=x)$. The learned rare-event set $\overline{\mathcal S}_\gamma$ is taken to be $\{x:\hat g(x)\geq\kappa\}$ for some threshold $\kappa\in\mathbb R$.

The outer approximation requirement $\overline{\mathcal S}_\gamma\supset\mathcal S_\gamma$ means that all true positive (i.e., 1) labels must be correctly classified, or in other words, the false negative (i.e., 0) rate is zero, i.e.,
\begin{equation}
P(X\in\overline{\mathcal S}_\gamma^c,Y=1)=0
\label{zero mis}
\end{equation}
Typically, achieving such a zero ``Type I'' misclassification rate is impossible for any finite sample except in degenerate cases. However, this is achievable under a geometric premise on the rare-event set $\mathcal S_\gamma$ that we call \emph{orthogonal monotonicity}. To facilitate discussion, suppose from now on that the rare-event set is known to lie entirely in the positive quadrant $\mathbb R_+^d$, so in learning the set, we only consider sampling points in $\mathbb R_+^d$ (analogous development can be extended to the entire space).

\begin{definition}
We call a set $\mathcal S\subset\mathbb R_+^d$ \emph{orthogonally monotone} if for any two points $x,x'\in\mathbb R_+^d$, we have $x\leq x'$ (where the inequality is defined coordinate-wise) and $x\in\mathcal S$ implies $x'\in\mathcal S$ too. \label{OM def}
\end{definition}

Definition \ref{OM def} means that any point that is more ``extreme'' than a point in the rare-event set must also lie inside the same set. This is an intuitive assumption that appears to hold in some safety-critical rare-event settings (see Section \ref{sec:numerics}). Note that, even with such a monotonicity property, the boundary of the rare-event set can still be very complex. The key is that, with orthogonal monotonicity, we can now produce a classification procedure that satisfies \eqref{zero mis}. In fact, the simplest approach is to use what we call an orthogonally monotone hull:

\begin{definition}
For a set of points $D=\{x_1,\ldots,x_n\}\subset\mathbb R_+^d$, we define the \emph{orthogonally monotone hull} of $D$ (with respect to the origin) as $\mathcal H(D)=\cup_i\mathcal R(x_i)$, where $\mathcal R(x_i)$ is the rectangle that contains both $x_i$ and the origin as two of its corners.
\label{OM hull def}
\end{definition}
In other words, the orthogonally monotone hull consists of the union of all the rectangles each wrapping each point $x_i$ and the origin $0$. Now, denote $T_0=\{\tilde X_i:Y_i=0\}$ as the non-rare-event sampled points. Evidently, if $\mathcal S_\gamma$ is orthogonally monotone, then $\mathcal H(T_0)\subset\mathcal S_\gamma^c$ (where complement is with respect to $\mathbb R_+^d$), or equivalently, $\mathcal H(T_0)^c\supset\mathcal S_\gamma$, i.e., $\mathcal H(T_0)^c$ is an outer approximation of the rare-event set $\mathcal S_\gamma$. Figure \ref{fig:illustration}(b) shows this outer approximation (and also the inner counterpart). Moreover, $\mathcal H(T_0)^c$ is the smallest region (in terms of set volume) such that \eqref{zero mis} holds, because any smaller region could exclude a point that has label 1 with positive probability.

\textbf{Lazy-Learner IS. }We now consider an estimator for $\mu$ where, given the $n_1$ samples in Stage 1, we build the mixture IS depicted in Theorem 1 to estimate $P(X\in\mathcal H(T_0)^c)$ in Stage 2. Since $\mathcal H(T_0)^c$ takes the form $(\cup_{i:Y_i=0}\mathcal R(\tilde X_i))^c$, it has a finite number of dominating points, which can be found by a sequential algorithm (similar to the one that we will discuss momentarily). We call this the ``lazy-learner'' approach. Its problem, however, is that $\mathcal H(T_0)^c$ tends to have a very rough boundary. This generates a large number of dominating points, many of which are unnecessary in that they do not correspond to any ``true'' dominating points in the original rare-event set $\mathcal S_\gamma$ (see the middle of Figure \ref{fig:illustration}(d)). This in turn leads to a large number of mixture components that degrades the IS efficiency, as the RE bound in Theorem 1 scales linearly with the number of mixture components.

\textbf{Deep-Learning-Based IS. }Our main approach is a deep-learning alternative that resolves the statistical degradation of the lazy learner. We train a neural network classifier, say $\hat g$, using all the Stage 1 samples $\{(\tilde X_i,Y_i)\}$, and obtain an approximate non-rare-event region $(\overline{\mathcal S}_\gamma^\kappa)^c=\{x:\hat g(x)<\kappa\}$, where $\kappa$ is say $1/2$. Then we adjust $\kappa$ minimally away from $1/2$, say to $\hat\kappa$, so that $(\overline{\mathcal  S}_\gamma^{\hat\kappa})^c\subset\mathcal H(T_0)$, i.e., $\hat\kappa=\max\{\kappa\in\mathbb R: (\overline{\mathcal  S}_\gamma^{\kappa})^c\subset\mathcal H(T_0)\}$. Then $\overline{\mathcal  S}_\gamma^{\hat\kappa}\supset\mathcal H(T_ 0)^c\supset\mathcal S_\gamma$, so that $\overline{\mathcal  S}_\gamma^{\hat\kappa}$ is an outer approximation for $\mathcal S_\gamma$ (see Figure \ref{fig:illustration}(c), where $\hat\kappa=0.68$). Stage 1 in Algorithm \ref{algo:stage1} shows this procedure. With this, we can run mixture IS to estimate $P(X\in\overline{\mathcal  S}_\gamma^{\hat\kappa})$ in Stage 2. 

The execution of this algorithm requires the set  $\overline{\mathcal S}_\gamma^{\hat\kappa}=\{x:\hat g(x)\geq\hat\kappa\}$ to be in a form susceptible to Theorem \ref{general IS simplified} and the search of all its dominating points. When $\hat g$ is a ReLU-activated neural net, the boundary of $\hat g(x)\geq\hat\kappa$ is piecewise linear and $\overline{\mathcal S}_\gamma^{\hat\kappa}$ is a union of polytopes, and Theorem \ref{general IS simplified} applies. Finding all dominating points is done by a sequential ``cutting-plane'' method that iteratively locates the next dominating point by minimizing $(x-\mu)^T\Sigma^{-1}(x-\mu)$ over the remaining portion of $\overline{\mathcal S}_\gamma^{\hat\kappa}$ not covered by the local region of any previously found points $x_j^*$. These optimization sequences can be solved via mixed integer program (MIP) formulations for ReLU networks (\citealt{tjeng2017evaluating,huang2018designing}; see Appendix \ref{app:mip}). Note that a user can control the size of these MIPs via the neural net architecture. Regardless of the expressiveness of these networks, Algorithm \ref{algo:stage1} enjoys the following guarantee:

\begin{theorem}[Relaxed efficiency certificate for deep-learning-based mixture IS]
Suppose $\mathcal S_\gamma$ is orthogonally monotone, and $\overline{\mathcal S}_\gamma^{\hat\kappa}$ satisfies the same conditions for $\mathcal S_{\gamma}$ in Theorem \ref{general IS simplified}. Then Algorithm \ref{algo:stage1} attains the upper-bound relaxed efficiency certificate by using a constant number of Stage 1 samples.\label{NN main}
\end{theorem}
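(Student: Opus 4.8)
The plan is to combine three previously established facts: Corollary \ref{relaxed prob} (set-learning plus certified IS gives the relaxed certificate), the zero false-negative property of the orthogonally monotone hull construction, and Theorem \ref{general IS simplified} (mixture IS over dominating points is efficiency-certified). First I would verify that the learned set $\overline{\mathcal S}_\gamma^{\hat\kappa}$ produced in Stage 1 is a genuine \emph{outer} approximation of $\mathcal S_\gamma$, with probability one and for \emph{any} realization of the $n_1$ Stage 1 samples. This is exactly the chain of inclusions $\overline{\mathcal S}_\gamma^{\hat\kappa}\supset\mathcal H(T_0)^c\supset\mathcal S_\gamma$: the second inclusion is the orthogonal monotonicity argument already given in the text (any sampled non-rare point generates a rectangle lying in $\mathcal S_\gamma^c$, so the union $\mathcal H(T_0)\subset\mathcal S_\gamma^c$), and the first inclusion is immediate from the definition $\hat\kappa=\max\{\kappa:(\overline{\mathcal S}_\gamma^\kappa)^c\subset\mathcal H(T_0)\}$, which forces $(\overline{\mathcal S}_\gamma^{\hat\kappa})^c\subset\mathcal H(T_0)$ and hence $\overline{\mathcal S}_\gamma^{\hat\kappa}\supset\mathcal H(T_0)^c$. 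One should check $\hat\kappa$ is well-defined, i.e. the set over which we maximize is nonempty and the max is attained; nonemptiness holds because for $\kappa$ small enough the region $(\overline{\mathcal S}_\gamma^\kappa)^c$ shrinks (eventually empty, trivially contained in $\mathcal H(T_0)$), and attainment follows from the closedness of the constraint for a ReLU net $\hat g$ (piecewise-linear, so the feasible set of $\kappa$ is closed).

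Next I would invoke Theorem \ref{general IS simplified}: by hypothesis $\overline{\mathcal S}_\gamma^{\hat\kappa}$ satisfies the decomposition conditions of that theorem, so once Stage 2 has found \emph{all} its dominating points $\hat A_\gamma$ (which is what the while-loop in Algorithm \ref{algo:stage1} does — it terminates precisely when no portion of $\overline{\mathcal S}_\gamma^{\hat\kappa}$ lies outside the union of the half-space ``local regions'' of the already-found points, which is the defining minimality/covering property of a dominating set), the mixture IS $\sum_{a\in\hat A_\gamma}(1/|\hat A_\gamma|)N(a,\Sigma)$ over the learned set has an efficiency certificate for $\overline\mu(D_{n_1}):=P(X\in\overline{\mathcal S}_\gamma^{\hat\kappa})$; that is, its relative error with respect to $\overline\mu(D_{n_1})$ is $\tilde O(\log(1/\overline\mu(D_{n_1})))$. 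I would note here that the number of dominating points, while possibly large, is finite and deterministic given the (fixed) network and the fixed Stage 1 sample, and that the while-loop terminates in finitely many iterations because $\overline{\mathcal S}_\gamma^{\hat\kappa}$ is a finite union of polytopes so it has finitely many dominating points; this is where the deep-learning construction beats the lazy learner, but for the correctness statement it suffices that the count is a constant not growing with $\gamma$ for fixed architecture.

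Then I would apply Corollary \ref{relaxed prob} (equivalently the two-stage extension of Proposition \ref{certificate prop simple} mentioned in Appendix \ref{app:conservativeness}): $\hat\mu_n$ of Algorithm \ref{algo:stage1} estimates $\overline\mu(D_{n_1})\geq\mu$, it is the average of $n_2$ i.i.d. IS outputs each unbiased for $\overline\mu(D_{n_1})$, and its per-run relative error is $\tilde O(\log(1/\overline\mu(D_{n_1})))$. Since $\overline\mu(D_{n_1})\geq\mu$ gives $\log(1/\overline\mu(D_{n_1}))\le\log(1/\mu)$, choosing $n_2=\tilde O(\log(1/\mu))$ (equivalently $\tilde O(\log(1/\overline\mu(D_{n_1}))$) with a constant $n_1$ yields $P(\hat\mu_n-\mu<-\epsilon\mu)\le P(|\hat\mu_n-\overline\mu(D_{n_1})|>\epsilon\overline\mu(D_{n_1}))\le\delta$ via Chebyshev, which is precisely the upper-bound relaxed efficiency certificate of Definition \ref{relaxed certificate}. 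The total sample count is $n_1+n_2=\tilde O(\log(1/\mu))$ with $n_1$ constant, as claimed.

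The main obstacle I anticipate is not any single inequality but the careful handling of the conditioning on $D_{n_1}$: the learned set $\overline{\mathcal S}_\gamma^{\hat\kappa}$, hence $\overline\mu(D_{n_1})$ and the dominating-point count, are random through Stage 1, so Theorem \ref{general IS simplified}'s certificate must hold for (almost) every realization with a rate constant that does not blow up — this forces the assumption, stated in the theorem, that $\overline{\mathcal S}_\gamma^{\hat\kappa}$ satisfies the Theorem \ref{general IS simplified} conditions (a nondegeneracy/regularity condition on the polytopal approximation, which one may need to argue holds a.s. or simply assume), and it forces $n_1$ to be held \emph{constant} so that the ``$\tilde O$'' hides no hidden $\gamma$-dependence coming from Stage 1. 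I would therefore spend the bulk of the argument making the two-stage probability bound rigorous: decompose $P(\hat\mu_n-\mu<-\epsilon\mu)$ by conditioning on $D_{n_1}$, use $\overline\mu(D_{n_1})\ge\mu$ inside the conditional to pass from a one-sided deviation below $\mu$ to a two-sided deviation below $\overline\mu(D_{n_1})$, apply the conditional Chebyshev bound from the efficiency certificate, and then take expectation over $D_{n_1}$ — the uniform-in-$D_{n_1}$ control being the delicate point, which is exactly what the hypotheses of the theorem are designed to grant.
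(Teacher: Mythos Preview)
Your proposal is correct and follows essentially the same approach as the paper: establish the outer-approximation inclusion $\overline{\mathcal S}_\gamma^{\hat\kappa}\supset\mathcal S_\gamma$, invoke Theorem~\ref{general IS simplified} (via the assumed hypotheses) to get an efficiency certificate for the Stage~2 IS on the learned set, and then apply Corollary~\ref{relaxed prob}. The paper's own proof is a two-sentence sketch that simply cites Theorem~\ref{general IS} and Corollary~\ref{relaxed prob}; your version fills in exactly the details (well-definedness of $\hat\kappa$, loop termination, the conditional-on-$D_{n_1}$ Chebyshev argument) that the paper leaves implicit.
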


Figure \ref{fig:illustration}(d) shows how our deep-learning-based IS achieves superior efficiency compared to other alternatives. The cross-entropy method can miss a dominating point (1) and result in systematic under-estimation. The lazy-learner IS, on the other hand, generates too many dominating points (64) and degrades efficiency. Algorithm \ref{algo:stage1} finds the right number (2) and approximate locations of the dominating points.

Moreover, whereas the upper-bound certificate is guaranteed in our design, in practice, the deep-learning-based IS also appears to work well in controlling the conservativeness of the bound, as dictated by the false positive rate  $P(X\in \overline{\mathcal S}_\gamma^{\hat\kappa},Y=0)$ (see our experiments next). We close this section with a finite-sample bound on the false positive rate. Here, in deriving our bound, we assume the use of a sampling distribution $q$ in generating independent Stage 1 samples, and we use empirical risk minimization (ERM) to train $\hat g$, i.e., $\hat g:=\text{argmin}_{g\in\mathcal{G}}\{ {R}_{n_1}(g):=\frac{1}{n_1}\sum_{i=1}^{n_1}\ell(g(\tilde X_{i}),Y_{i})\}$
where $\ell$ is a loss function and $\mathcal G$ is the considered hypothesis class. Correspondingly, let $R(g):=E_{X\sim q}\ell(g(X),I(X\in\mathcal S_\gamma))$
be the true risk function and $g^{*}:=\arg\min_{g\in\mathcal{G}}R(g)$ its minimizer.
Also let $\kappa^{*}:=\min_{x\in\mathcal{S}_{\gamma}}g^{*}(x)$ be
the true threshold associated with $g^{*}$ in obtaining the smallest outer rare-event set approximation. 

\begin{theorem}[Conservativeness]\label{thm: set_ERM} 
Consider $\overline{\mathcal S}_\gamma^{\hat\kappa}$ obtained in Algorithm \ref{algo:stage1} where $\hat g$ is trained from an ERM. 
Suppose  the density $q$ has bounded 
support $K\subset[0,M]^{d}$ and $0<q_{l}\leq q(x)\leq q_{u}$ for any $x\in K$. Also suppose there exists
a function $h$ such that for any $g\in\mathcal{G}$, $g(x)\geq\kappa$ implies $\ell(g(x),0)\geq h(\kappa)>0$.
(e.g., if $\ell$ is the squared loss, then $h(\kappa)$ could be chosen 
as $h(\kappa)=\kappa^{2}$). Then, with probability at least $1-\delta$,
\begin{align*}
P_{X\sim q} &\left(X\in\bar{\mathcal{S}}_{\gamma}^{\hat{\kappa}}\setminus\mathcal{S}_{\gamma}\right)
\leq \\ 
&\frac{R(g^{*})+2\sup_{g\in\mathcal{G}}\left|R_{n_1}(g)-R(g)\right|}{h(\kappa^{*}-t(\delta,n_1)\sqrt{d} \text{Lip}(g^{*})-\left\Vert \hat{g}-g^{*}\right\Vert _{\infty})}.
\end{align*}
Here, $\text{Lip}(g^{*})$ is the Lipschitz parameter of 
$g^{*}$, and $t(\delta,n_1)=3\left(\frac{\log(n_1q_{l})+d\log M+\log\frac{1}{\delta}}{n_1q_{l}}\right)^{\frac{1}{d}}$.
\end{theorem}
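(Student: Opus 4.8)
\emph{Proof plan.} Unwinding the definition in Algorithm~\ref{algo:stage1}, $\hat\kappa=\max\{\kappa:(\overline{\mathcal S}_\gamma^{\kappa})^c\subset\mathcal H(T_0)\}=\inf\{\hat g(x):x\in[0,M]^d\setminus\mathcal H(T_0)\}$, the region $[0,M]^d\supset\mathrm{supp}(q)$ being where set-learning is performed. The plan is to prove the bound in three steps: (i) reduce the target false-positive probability to the risk ratio $R(\hat g)/h(\hat\kappa)$; (ii) bound $R(\hat g)$ from above by a standard ERM argument; and (iii) lower-bound $\hat\kappa$ via a geometric covering argument that exploits orthogonal monotonicity. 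For step (i): on the event $\{X\in\overline{\mathcal S}_\gamma^{\hat\kappa}\setminus\mathcal S_\gamma\}$ one has $I(X\in\mathcal S_\gamma)=0$ and $\hat g(X)\ge\hat\kappa$, so the hypothesis on $h$ yields $\ell(\hat g(X),0)\ge h(\hat\kappa)$; since $\ell\ge0$, taking expectations gives $R(\hat g)\ge h(\hat\kappa)\,P_{X\sim q}(X\in\overline{\mathcal S}_\gamma^{\hat\kappa}\setminus\mathcal S_\gamma)$. For step (ii), I would invoke the usual chain $R(\hat g)\le R_{n_1}(\hat g)+\sup_{g\in\mathcal G}|R_{n_1}(g)-R(g)|\le R_{n_1}(g^*)+\sup_{g\in\mathcal G}|R_{n_1}(g)-R(g)|\le R(g^*)+2\sup_{g\in\mathcal G}|R_{n_1}(g)-R(g)|$, where the middle step is ERM optimality; this is deterministic given the sample. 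It then remains only to show $\hat\kappa\ge\kappa^*-t(\delta,n_1)\sqrt d\,\mathrm{Lip}(g^*)-\|\hat g-g^*\|_\infty=:\underline\kappa$ with probability at least $1-\delta$, after which combining with (i)--(ii) and the monotonicity of $h$ (so $h(\hat\kappa)\ge h(\underline\kappa)$, consistent with the squared-loss example $h(\kappa)=\kappa^2$) gives the stated inequality.

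The heart of the argument is the geometric claim: with probability at least $1-\delta$, every $x\in\mathcal H(T_0)^c$ satisfies $\mathrm{dist}_2(x,\mathcal S_\gamma)\le t(\delta,n_1)\sqrt d$. To prove it, fix $x\in\mathcal H(T_0)^c$, set $c=\big((\log(n_1q_l)+d\log M+\log(1/\delta))/(n_1q_l)\big)^{1/d}$ (so that $t(\delta,n_1)=3c$), and look at $x^+=x+c\mathbf 1$ together with the box $Q_x=\prod_{i=1}^d[x_i,x_i+c]$. Either $x^+\in\mathcal S_\gamma$, in which case $\mathrm{dist}_2(x,\mathcal S_\gamma)\le\|x-x^+\|_2=c\sqrt d$; or $x^+\notin\mathcal S_\gamma$, in which case orthogonal monotonicity makes $\mathcal S_\gamma^c$ downward-closed, forcing $Q_x\subset\mathcal S_\gamma^c$, and moreover $Q_x$ contains no Stage-1 sample, since any sample in $Q_x$ would dominate $x$ coordinate-wise, carry label $0$ (being in $\mathcal S_\gamma^c$), and hence place $x$ in $\mathcal H(T_0)$ --- a contradiction. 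So if the claim fails, some axis-aligned box of side $c$ lies in $\mathcal S_\gamma^c$ and is empty of samples. Discretizing $[0,M]^d$ into at most $(M/c)^d$ cells and using $q\ge q_l$ on its support, a union bound makes the failure probability at most $(M/c)^d(1-q_lc^d)^{n_1}\le(M/c)^d e^{-n_1q_lc^d}$, which is $\le\delta$ for the chosen $c$; the factor $3$ in $t(\delta,n_1)$ is the slack that covers the discretization and the cells straddling $\partial[0,M]^d$ (where the mass inside $\mathrm{supp}(q)$ is reduced).

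Granting the claim, step (iii) follows quickly: for any $x\in\mathcal H(T_0)^c$ pick $x'\in\mathcal S_\gamma$ with $\|x-x'\|_2\le t(\delta,n_1)\sqrt d$; then $\hat g(x)\ge g^*(x)-\|\hat g-g^*\|_\infty\ge g^*(x')-\mathrm{Lip}(g^*)\|x-x'\|_2-\|\hat g-g^*\|_\infty\ge\kappa^*-\mathrm{Lip}(g^*)\,t(\delta,n_1)\sqrt d-\|\hat g-g^*\|_\infty$, the last step using $g^*(x')\ge\kappa^*=\min_{\mathcal S_\gamma}g^*$. Taking the infimum over $x\in[0,M]^d\setminus\mathcal H(T_0)$ gives $\hat\kappa\ge\underline\kappa$, and substituting into the chain from the first paragraph completes the proof. (Note the only use of the $1-\delta$ probability is the geometric claim; (i) and (ii) hold surely.)

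The main obstacle will be the geometric claim. The delicate points are setting up the dichotomy correctly --- choosing which box and which shift direction so that a point of $\mathcal H(T_0)^c$ far from $\mathcal S_\gamma$ truly certifies a sizeable sample-free box \emph{inside} $\mathcal S_\gamma^c$ --- extracting exactly the $d$-th-root rate $t(\delta,n_1)$ from the union bound, and the bookkeeping near $\partial[0,M]^d$ together with the argument that the infimum defining $\hat\kappa$ is controlled by behavior on $\mathrm{supp}(q)$ rather than by the values of $\hat g$ far away. By comparison the ERM and Lipschitz manipulations, and the mild monotonicity/positivity requirements on $h$, are routine.
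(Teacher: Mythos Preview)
Your three-step decomposition---(i) the Markov-type bound $P_{X\sim q}(X\in\bar{\mathcal S}_\gamma^{\hat\kappa}\setminus\mathcal S_\gamma)\le R(\hat g)/h(\hat\kappa)$, (ii) the ERM chain for the numerator, and (iii) a covering lemma to lower-bound $\hat\kappa$---is exactly the paper's argument. The paper states the geometric claim as a separate lemma phrased contrapositively via $B_t:=\{x\in\mathcal S_\gamma^c:x+t\mathbf 1\in\mathcal S_\gamma^c\}$ (showing $B_{t(\delta,n_1)}\subset\mathcal H(T_0)$ with probability $\ge 1-\delta$), which is equivalent to your distance formulation, and then the Lipschitz step in (iii) reads identically.

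The only substantive difference is inside the covering lemma. The paper discretizes just the first $d-1$ coordinates into columns $Z_j$ of cross-section $c^{d-1}$, and for each column meeting $B_{2c}$ constructs a specific near-boundary point $p_j$ whose ``upper rectangle'' $G_j=\{x\in\mathcal S_\gamma^c:x\ge p_j\}$ has volume $\ge c^d$; a hit in each $G_j$ forces $\mathcal R(p_j)\subset\mathcal H(T_0)$, and a union bound over $(M/c)^{d-1}$ columns gives the stated $t(\delta,n_1)$. Your full $d$-dimensional grid argument also works in principle, but as written the union-bound step is a gap: the sample-free box $Q_x=\prod_i[x_i,x_i+c]$ is not one of the $(M/c)^d$ fixed grid cells, so its emptiness does not imply some grid cell is empty. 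Your remark that ``the factor $3$ covers the discretization'' is on the right track but not yet an argument; the clean fix is to run the dichotomy with $x^+=x+2c\mathbf 1$ (or $3c\mathbf 1$) so that the resulting empty box has side $\ge 2c$ and therefore contains a full grid cell of side $c$, after which the union bound over the $(M/c)^d$ cells is legitimate and yields the same $t(\delta,n_1)$ up to the mild condition $\log(n_1q_l)+d\log M+\log(1/\delta)\ge 1$.
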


Theorem \ref{thm: set_ERM} reveals a tradeoff between overfitting (measured by $\sup_{g\in\mathcal{G}}\left|R_{n_1}(g)-R(g)\right|$ and $\left\Vert \hat{g}-g^{*}\right\Vert _{\infty}$) and underfitting (measured by $R(g^*)=\inf_{g\in\mathcal{G}}R(g)$).  Appendix \ref{app:conservativeness} discusses related results on the sharp estimates of these quantities for deep neural networks, a more sophisticated version of Theorem \ref{thm: set_ERM} that applies to the cross-entropy loss, a corresponding bound for the lazy learner, as well as results to interpret Theorem \ref{thm: set_ERM} under the original distribution $p$.

Finally, we point out some works in the literature that approximate the Pareto frontier of a monotone function \citep{Wu2018EfficientlyAT,legriel2010approximating}. While the  boundary of an orthogonally monotone set looks similar to the Pareto frontier, our recipe (outer/inner approximation using piecewise-linear-activation NN) is designed to minimize the number of dominating points while simultaneously achieve the relaxed efficiency certificate for rare-event estimation. Such a guarantee is novelly beyond these previous works. 

\section{Numerical Experiments}\label{sec:numerics}

We implement and compare the estimated probabilities and the REs of deep-learning-based IS for the upper bound (Deep-PrAE UB) and lazy-learner IS (LL UB). We also show the corresponding lower-bound estimator (Deep-PrAE LB and LL LB) and benchmark with the cross entropy method (CE), adaptive multilevel splitting (AMS), and naive Monte Carlo (NMC). For CE, we run a few variations testing different parametric classes and report two in the following figures: CE that uses a single Gaussian distribution (CE Naive), representing an overly-simplified CE implementation, and CE with Gaussian Mixture Model with $k$ components (CE GMM-$k$), representing a more sophisticated CE implementation. For Deep-PrAE, we use the samples from CE Naive as the Stage 1 samples. We also run a modification of Deep-PrAE (Deep-PrAE Mod) that replaces $\overline{\mathcal S}_\gamma^{\hat\kappa}$ by $\mathcal S_\gamma$ in the last step of Algorithm \ref{algo:stage1} as an additional comparison. We use a 2-dimensional numerical example and a safety-testing of an intelligent driving algorithm for a car-following scenario. These two experiments are representative as the former is low-dimensional (visualizable) yet with \textit{extremely} rare events while the latter is moderately high-dimensional, challenging for most of the existing methods.

\paragraph{2D Example. }We estimate $\mu=P(X \in \mathcal S_\gamma)$ where $X \sim N([5,5]^T, 0.25I_{2\times2})$, and $\gamma$ ranging from 1.0 to 2.0.  We use $n=30,000$ (10,000 for Stage 1 and 20,000 for Stage 2), and use 10,000 of the CE samples as our Stage 1 samples. Figure \ref{fig:illustration} illustrates the shape of $\mathcal S_\gamma$, which has two dominating points. This probability is microscopically small (e.g., $\gamma = 1.8$  gives $\mu=4.1\times 10^{-24}$) and serves to investigate our performance in ultra-extreme situations. 

\begin{figure}[h]
  \begin{subfigure}{\textwidth}
  \centering
    \includegraphics[width=\textwidth]{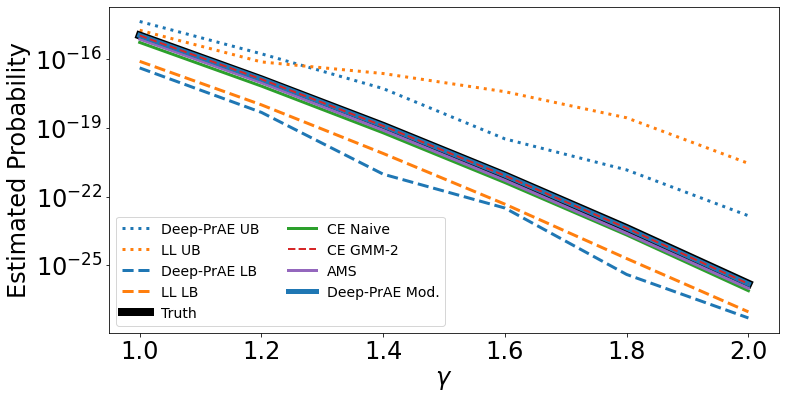}
    \caption{Estimated rare-event probability}
  \end{subfigure}
  
  \begin{subfigure}{\textwidth}
  \centering
    \includegraphics[width=0.95\textwidth]{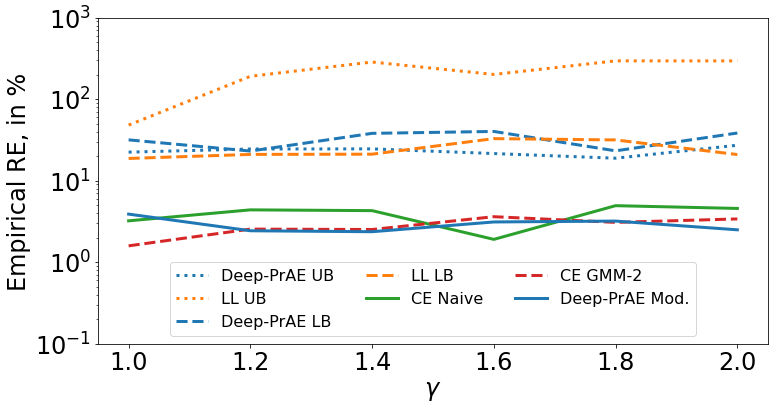}
    \caption{Estimator's empirical relative error}
  \end{subfigure}
  
  \caption{2-dimensional example. Naive Monte Carlo failed in all cases and hence not shown.}
  \label{fig:nonstd_result}
\end{figure}

\begin{figure}[h]
  \centering
    \includegraphics[width=0.95\textwidth]{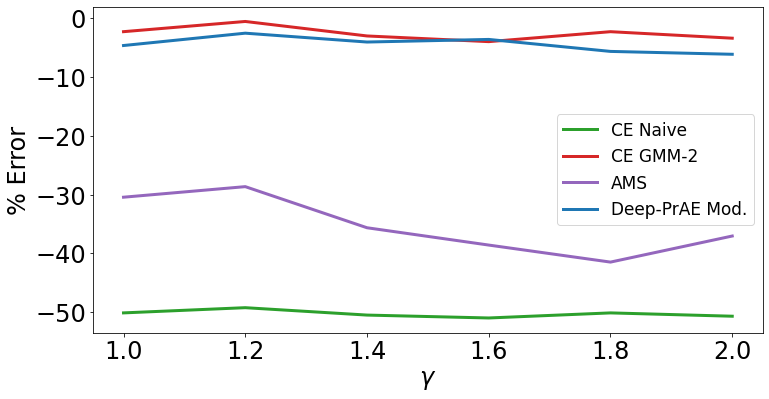}
  \caption{Percentage error of CE, AMS, and modified Deep-PrAE (minus \% error means under-estimation)}
  \label{fig:pe_result}
\end{figure}

Figure \ref{fig:nonstd_result} compares all approaches to the true value, which we compute via a proper mixture IS with 50,000 samples assuming the full knowledge of $\mathcal S_\gamma$. It shows several observations. First, Deep-PrAE and LL (both UB and LB) always provide valid bounds that contain the truth. Second, the UB for LL is more conservative than Deep-PrAE in up to two orders of magnitudes, which is attributed to the overly many (redundant) dominating points. Correspondingly, the RE of LL UB is tremendously high, reaching over $500\%$ when $\gamma = 2.0$, compared to around 40\% for Deep-PrAE UB, Deep-PrAE LB, and LL LB. Third, CE Naive, which finds only one dominating point, consistently under-estimates the truth by about $50\%$, yet it gives an over-confident RE, e.g., $<5\%$ when $\gamma < 2$. This shows a systematic undetected under-estimation issue when CE is implemented overly-naively. AMS also underestimates the true value by 30\%-40\%, while CE GMM-2 and Deep-PrAE Mod perform empirically well.  Figure \ref{fig:pe_result} summarizes the zoomed-in performances of CE Naive, CE GMM-2, AMS, and Deep-PrAE Mod in terms of percentage error, which is the difference between the estimated and true probability as a percentage of the true value. It shows that while CE performs well when the IS parametric class is well-chosen (CE GMM-2), a poor CE parametric class (CE Naive) as well as AMS could under-estimate. Yet our Deep-PrAE, despite using samples from a poor CE class in Stage 1, can recover valid results: Deep-PrAE provides a valid UB, and Deep-PrAE Mod gives an estimate as good as the good CE class.

\paragraph{Intelligent Driving Example.} In this example, we evaluate the crash probability of car-following scenarios involving a human-driven lead vehicle (LV) followed by an autonomous vehicle (AV). The AV is controlled by the Intelligent Driver Model (IDM) to maintain safety distance while ensuring smooth ride and maximum efficiency. IDM model is widely used for autonomy evaluation and microscopic transportation simulations 
\citep{Treiber_2000, dlforcf, rssidm}. The state at time $t$ is given by 6 states consisting of the position, velocity, and acceleration of both LV and AV. The dynamic system has a stochastic input $u_t$ related to the acceleration of the LV and subject to uncertain human behavior. We consider an evaluation horizon $T=60$ seconds and draw a sequence of 15 Gaussian random actions at a 4-second epoch, leading to a 15-dimensional LV action space. A (rare-event) crash occurs at time $t \leq T$ if the longitudinal distance $r_t$ between the two vehicles is negative, with $\gamma$ parameterizing the AV maximum throttle and brake pedals. This rare-event set is analytically challenging (see \citealt{zhao2017accelerated} for a similar setting). More details are in Appendix \ref{app:experiment}.

\begin{figure}[h]
  \begin{subfigure}{\textwidth}
  \centering
    \includegraphics[width=\textwidth]{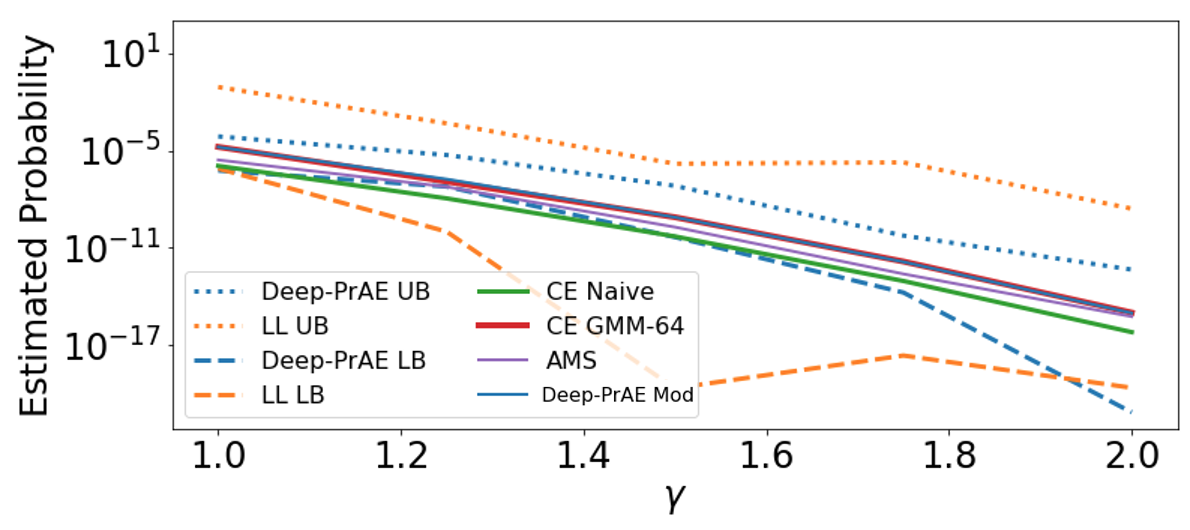}
    \caption{Estimated rare-event probability}
  \end{subfigure}%
  
  \begin{subfigure}{\textwidth}
  \centering
    \includegraphics[width=\textwidth]{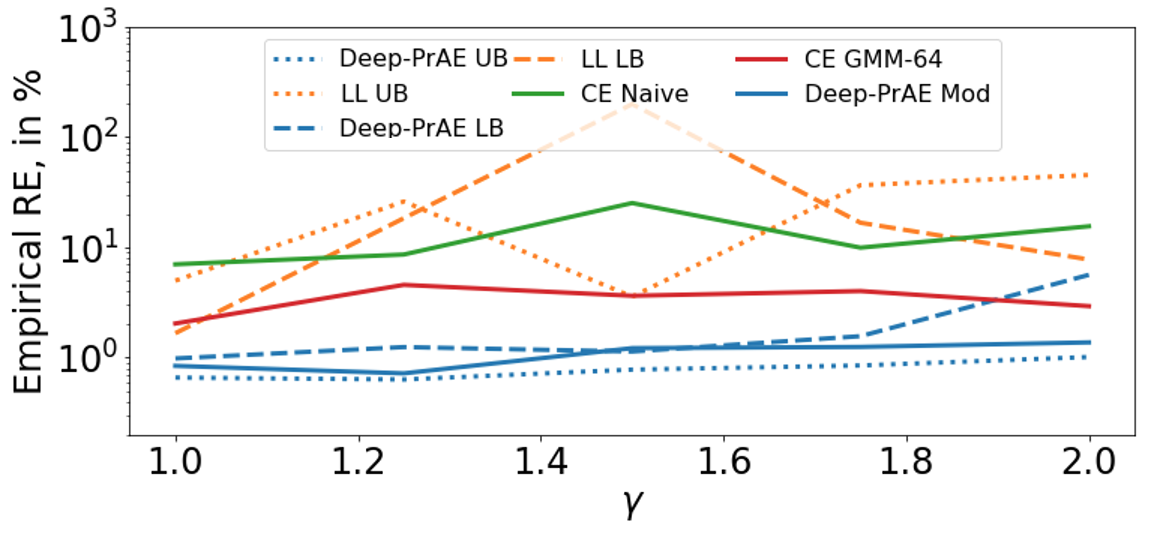}
    \caption{Estimator's empirical relative error}
  \end{subfigure}
  \caption{Intelligent driving example. Naive Monte Carlo failed in all cases and hence not shown.}
  \label{fig:nonstd_result_highdim}
\end{figure}
  
Figure  \ref{fig:nonstd_result_highdim} shows the performances of competing approaches, using $n = 10,000$. For CE, we use a single Gaussian (CE Naive) and a large number of mixtures (CE GMM-64). Deep-PrAE and LL (UB and LB) appear consistent in giving upper and lower bounds for the target probability, and Deep-PrAE produces tighter bounds than LL ($10^{-2}$ vs $10^{-6}$ in general). LL UB has $5,644$ dominating points when $\gamma=1$ vs 42 in Deep-PrAE, and needs 4 times more time to search for them than Deep-PrAE. Moreover, the RE of Deep-PrAE is 3 times lower than LL across the range (in both UB and LB). Thus, Deep-PrAE outperforms LL in both tightness, computation speed, and RE. CE Naive and AMS seem to give a stable estimation, but evidence shows they are under-estimated: Deep-PrAE Mod and CE GMM-64 lack efficiency certificates and thus could under-estimate, and the fact that their estimates are higher than CE Naive and AMS suggests both CE Naive and AMS are under-estimated. Lastly, NMC fails to give a single hit in all cases (thus not shown on the graphs). Thus, among all approaches, Deep-PrAE stands out as giving reliable and tight bounds with low RE.

\paragraph{Summary of Practical Benefits. }Our investigation shows strong evidence of the practical benefits of using Deep-PrAE for rare-event estimation. It generates valid bounds for the target probability with low RE and improved efficiency. The use of classifier prediction helps reduce the computational effort from running more simulations. For example, to assess whether the AV crash rate is below $10^{-8}$ for $\gamma = 1.0$, only 1000 simulation runs would be needed by Deep-PrAE UB or LB to get around $1\%$ RE, which takes about $400$ seconds in total. This is in contrast to 3.7 months for naive Monte Carlo.

\section{Discussion and Future Work}

In this paper, we proposed a robust certifiable approach to estimate rare-event probabilities in safety-critical applications. The proposed approach designs efficient IS distribution by combining the dominating point machinery with deep-learning-based rare-event set learning. We study the theoretical guarantees and present numerical examples. The key property that distinguishes our approach with existing black-box rare-event simulation methods is our correctness guarantee. Leveraging on a new notion of relaxed efficiency certificate and the orthogonal monotonicity assumption, our approach avoids the perils of undetected under-estimation as potentially encountered by other methods.

We discuss some key assumptions in our approach and related prospective follow-up works. First, the orthogonal monotonicity assumption appears an important first step to give new theories on black-box rare-event estimation beyond the existing literature. Indeed, we show that even with this assumption, black-box approaches such as CE and splitting can suffer from the dangerous pitfall of undiagnosed under-estimation, and our approach corrects for it. The real-world values of our approach are: (1) We rigorously show why our method has better performances in the orthogonally monotone cases; (2) For tasks close to being orthogonal monotonic (e.g., the IDM example), our method is empirically more robust; (3) For non-orthogonally-monotone tasks, though directly using our approach does not provide guarantees, we could potentially train mappings to latent spaces that are orthogonally monotone. We believe such type of geometric assumptions comprises a key ingredient towards a rigorous theory for black-box rare-event estimation that warrants much further developments.

Second, the dominating point search algorithm in our approach assumes Gaussian randomness. To this end we can relax it in two directions: by fitting a GMM with a sufficiently large number of components, or using light-tailed distributions (i.e., with finite exponential moments), since the dominating point machinery applies. These extensions will be left for future work.

Finally, the tightness of the upper bound depends on the sample quality. An ideal method in Stage 1 would generate samples close to the rare-event boundary to produce good approximations. Cutting the Stage 1 effort by, e.g., designing iterative schemes between Stages 1 and 2, will also be a topic for future investigation.

\section*{Acknowledgments}
We gratefully acknowledge support from the National Science Foundation under grants CAREER CMMI-1834710,  IIS-1849280 and IIS-1849304. Mansur Arief and Wenhao Ding are supported in part by Bosch.

\bibliography{references}

\newpage
%\subfile{sections/final_supplementary}
\onecolumn

\appendix
\appendixpage

We present supplemental results and discussions. Appendix \ref{append:challenge} expands Section \ref{sec:existing} regarding Monte Carlo efficiency and variance reduction. Appendix \ref{app:mip} provides further details on Algorithm \ref{algo:stage1}, in particular the mixed integer formulation used to solve the underlying optimization problems. Appendix \ref{app:conservativeness} expands the efficiency and conservativeness results in Section \ref{sec:Deep-PrAE}. Appendix \ref{app:lowerbound} presents the lower-bound relaxed efficiency certificate and estimators in parallel to the upper-bound results in Section \ref{sec:Deep-PrAE}. Appendix \ref{app:ce_ams} provides an overview of the cross-entropy method and multi-level splitting (or subset simulation) and discusses their perils for black-box problems. Appendix \ref{app:experiment} illustrates further experimental results. Finally, Appendix \ref{app:proofs} shows all technical proofs.

\section{Further Details for Section~\ref{sec:existing}}
\label{append:challenge}
This section expands the discussions in Section \ref{sec:existing}, by explaining in more detail the notion of relative error, challenges in naive Monte Carlo, the concept of dominating points, and the perils of black-box variance reduction algorithms. 

\subsection{Explanation of the Role of Relative Error}
As described in Section \ref{sec:existing}, to estimate a tiny $\mu$ using $\hat\mu_n$, we want to ensure a high accuracy in relative term, namely, \eqref{efficiency certificate}. Suppose that $\hat\mu_n$ is unbiased and is an average of $n$ i.i.d. simulation runs, i.e., $\hat\mu_n=(1/n)\sum_{i=1}^n Z_i$ for some random unbiased output $Z_i$. The Markov inequality gives that
$$
P(|\hat\mu_n-\mu|>\epsilon\mu)\leq\frac{Var(\hat\mu_n)}{\epsilon^2\mu^2}=\frac{Var(Z_i)}{n\epsilon^2\mu^2}$$
so that
$$\frac{Var(Z_i)}{n\epsilon^2\mu^2}\leq\delta$$
ensures \eqref{efficiency certificate}. Equivalently, 
$$n\geq\frac{Var(Z_i)}{\delta\epsilon^2\mu^2}=\frac{RE}{\delta\epsilon^2}$$
is a sufficient condition to achieve \eqref{efficiency certificate}, where $RE=Var(Z_i)/\mu^2$ is the relative error defined as the ratio of variance (per-run) and squared mean.

Note that replacing the second $\mu$ with $\hat\mu_n$ in the left hand side of \eqref{efficiency certificate} does not change the condition fundamentally, as either is equivalent to saying the ratio $\hat{\mu}_n/\mu$ should be close to 1. Also, note that we focus on the nontrivial case that the target probability $\mu$ is non-zero but tiny; if $\mu=0$, then no non-zero Monte Carlo estimator can achieve a good relative error. 

\subsection{Further Explanation on the Challenges in Naive Monte Carlo}
We have seen in Section \ref{sec:existing} that for the naive Monte Carlo estimator, where $Z_i=I(X_i\in\mathcal S_\gamma)$, the relative error is $RE=\mu(1-\mu)/\mu^2=(1-\mu)/\mu$. Thus, when $\mu$ is tiny, the sufficient condition for $n$ to attain \eqref{efficiency certificate} scales at least linearly in $1/\mu$. In fact, this result can be seen to be tight by analyzing $n\hat\mu_n$ as a binomial variable. To be more specific, we know that $P(|\hat{\mu}_n-\mu|>\varepsilon\mu)=P(|n\hat{\mu}_n-n\mu|>\varepsilon n\mu)$ and that $n\hat{\mu}_n$ takes values in $\{0,1,\dots,n\}$. Therefore, if $n\mu\to0$, then $P(|\hat{\mu}_n-\mu|>\varepsilon\mu)\to1$, and hence \eqref{efficiency certificate} does not hold.

Moreover, the following provides a concrete general statement that an $n$  that grows only polynomially in $\gamma$ would fail to estimate $\mu$ that decays exponentially in $\gamma$ with enough relative accuracy, of which \eqref{efficiency certificate} fails to hold is an implication. 
\begin{proposition}
Suppose that $\mu=P(X\in\mathcal{S}_{\gamma})$ is exponentially decaying in $\gamma$ and $n$ is polynomially growing in $\gamma$. Define $\hat{\mu}_n=(1/n)\sum_{i=1}^n I(X_i\in \mathcal{S}_{\gamma})$. Then for any $0<\varepsilon<1$, 
$$
\lim_{\gamma\rightarrow\infty}P(|\hat{\mu}_n-\mu|>\varepsilon\mu)=1.
$$
\label{naive Monte Carlo}
\end{proposition}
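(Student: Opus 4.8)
The plan is to exploit the elementary fact that $S_n:=n\hat\mu_n$ is a $\mathrm{Binomial}(n,\mu)$ random variable whose mean $n\mu$ vanishes under the stated hypotheses, so that with overwhelming probability not a single sample lands in $\mathcal S_\gamma$, and in that case the relative error is exactly $1>\varepsilon$.

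First I would record the consequence of the growth/decay assumptions: since $\mu$ decays exponentially in $\gamma$, there are constants $C,c>0$ with $\mu\le Ce^{-c\gamma}$ for all large $\gamma$, while $n\le Q(\gamma)$ for some polynomial $Q$; hence $n\mu\le CQ(\gamma)e^{-c\gamma}\to 0$ as $\gamma\to\infty$.

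Second, the key observation is that on the event $\{S_n=0\}$ we have $\hat\mu_n=0$, so $|\hat\mu_n-\mu|=\mu>\varepsilon\mu$ because $0<\varepsilon<1$; thus $\{S_n=0\}$ is contained in the event whose probability we want to lower-bound. Therefore
\[
P\bigl(|\hat\mu_n-\mu|>\varepsilon\mu\bigr)\ \ge\ P(S_n=0)\ =\ (1-\mu)^n\ \ge\ 1-n\mu,
\]
where the last step is Bernoulli's inequality. Letting $\gamma\to\infty$ and using $n\mu\to 0$ together with the trivial bound $P(\cdot)\le 1$ gives $P(|\hat\mu_n-\mu|>\varepsilon\mu)\to 1$, as claimed.

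There is essentially no hard step here; the only thing to be careful about is the direction of the estimate — we need a \emph{lower} bound on $P(|\hat\mu_n-\mu|>\varepsilon\mu)$, so it suffices to retain only the single most likely outcome $S_n=0$ and discard every other realization of the binomial. In fact one can observe a slightly sharper phenomenon: once $\gamma$ is large enough that $n(1+\varepsilon)\mu<1$, the interval $[n(1-\varepsilon)\mu,\,n(1+\varepsilon)\mu]$ contains no integer at all (it excludes $0$ since $\mu>0$), so $P(|\hat\mu_n-\mu|\le\varepsilon\mu)=0$ exactly and the probability in question equals $1$ for all sufficiently large $\gamma$. Either way, this is precisely the statement that \eqref{efficiency certificate} cannot hold for naive Monte Carlo in the large-deviations regime when $n$ is only polynomial in $\gamma$.
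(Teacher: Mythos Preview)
Your proof is correct and takes essentially the same approach as the paper: both hinge on $n\mu\to 0$ together with the fact that $n\hat\mu_n$ is integer-valued. The paper's one-line argument is precisely your ``sharper phenomenon'' that once $n(1+\varepsilon)\mu<1$ the interval $[n(1-\varepsilon)\mu,\,n(1+\varepsilon)\mu]$ contains no integer, so the complementary probability is exactly $0$; your route via $P(S_n=0)\ge 1-n\mu$ is an equally valid lower bound.
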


We have used the term efficiency certificate to denote an estimator that achieves \eqref{efficiency certificate} with $n=\tilde{O}(\log(1/\mu))$. In the rare-event literature, such an estimator is known as ``logarithmically efficient" or ``weakly efficient" \citep{juneja2006rare,blanchet2012state}. 

\subsection{Further Explanations of Dominating Points }
We have mentioned that a certifiable IS should account for all dominating points, defined in Definition \ref{dominating point def}. We provide more detailed explanations here. Roughly speaking, for $X\sim N(\lambda,\Sigma)$ and a rare-event set $\mathcal S_\gamma$, the Laplace approximation gives $P(X\in\mathcal S_\gamma)\approx e^{-\inf_{a\in\mathcal S_\gamma}\frac{1}{2}(a-\lambda)^T\Sigma^{-1}(a-\lambda)}$ (see the proof of Theorem \ref{general IS}).
Thus, to obtain an efficiency certificate, IS estimator given by $Z=L(X)I(X\in\mathcal S_\gamma)$, where $X\sim\tilde p$ and $L=dp/d\tilde p$, needs to have $\widetilde{Var}(Z)\leq\tilde E[Z^2]\approx e^{-\inf_{a\in\mathcal S_\gamma}(a-\lambda)^T\Sigma^{-1}(a-\lambda)}$ (where $\widetilde{Var}(\cdot)$ and $\tilde E[\cdot]$ denote the variance and expectation under $\tilde p$, and $\approx$ is up to some factor polynomial in $\inf_{a\in\mathcal S_\gamma}(a-\lambda)^T\Sigma^{-1}(a-\lambda)$; note that the last equality relation cannot be improved, as otherwise it would imply that $\widetilde{Var}(Z)=\tilde E[Z^2]-(\tilde E[Z])^2<0$). 

Now consider an IS that translates the mean of the distribution from $\mu$ to $a^*=\mbox{argmin}_{a\in\mathcal S_\gamma}(a-\lambda)^T\Sigma^{-1}(a-\lambda)$, an intuitive choice since $a^*$ contributes the highest density among all points in $\mathcal S_\gamma$ (this mean translation also bears the natural interpretation as an exponential change of measure; \citep{bucklew2013introduction}). The likelihood ratio is $L(x)=e^{(\mu-a^*)^T\Sigma^{-1}(x-\lambda)+\frac{1}{2}(\lambda-a^*)^T\Sigma^{-1}(\lambda-a^*)}$, giving
\begin{equation}\label{second moment}
\tilde E[Z^2]=\tilde E[L(X)^2I(X\in\mathcal S_\gamma)]=e^{-(a^*-\lambda)^T\Sigma^{-1}(a^*-\lambda)}\tilde E[e^{-2(a^*-\lambda)^T\Sigma^{-1}(x-a^*)}I(X\in\mathcal S_\gamma)]
\end{equation}
If the ``overshoot''  $(a^*-\lambda)^T\Sigma^{-1}(x-a^*)$, i.e., the remaining term in the exponent of $L(x)$ after moving out $-(a^*-\lambda)^T\Sigma^{-1}(a^*-\lambda)$,  satisfies $(a^*-\lambda)^T\Sigma^{-1}(x-a^*)\geq0$ for all $x\in\mathcal S_\gamma$, then the expectation in the right hand side of (\ref{second moment}) is bounded by 1, and an efficiency certificate is achieved. This, however, is not true for all set $\mathcal S_\gamma$, which motivates the following definition of the dominant set and points in Definition \ref{dominating point def}.

For instance, if $\mathcal S_\gamma$  is convex, then, noting that $(x-\lambda)^T\Sigma^{-1}$ is precisely the gradient of the function $(1/2)(x-\lambda)^T\Sigma^{-1}(x-\lambda)$, we get that $a^*$ gives a singleton dominant set since  $(a^*-\lambda)^T\Sigma^{-1}(x-a^*)\geq0$ for all $x\in\mathcal S_\gamma$ is precisely the first order optimality condition of the involved quadratic optimization. In general, if we can decompose 
$\mathcal S_\gamma=\bigcup_j\mathcal S_\gamma^j$ where $\mathcal S_\gamma^j=\{x:(a_j-\lambda)^T\Sigma^{-1}(x-a_j)\geq0\}$ for a dominating point $a_j\in A_\gamma$, then each $\mathcal S_\gamma^j$ can be viewed as a ``local'' region where the dominating point $a_j$ is the highest-density, or the most likely point such that the rare event occurs.

The following is the detailed version of Theorem \ref{general IS simplified}:

\begin{theorem}[Certifiable IS]
Suppose that $A_{\gamma}$ is the dominant set for $\mathcal S_\gamma$ associated with the distribution $N(\lambda,\Sigma)$. Then we can decompose $\mathcal S_\gamma=\bigcup_j \mathcal S_\gamma^j$ where $\mathcal S_\gamma^j$'s are disjoint, $a_j\in \mathcal S_\gamma^j$ and $\mathcal S_\gamma^j\subset\{x:(a_j-\lambda)^T\Sigma^{-1}(x-a_j)\geq 0\}$ for $a_j\in A_{\gamma}$. Denote $a^*=\arg\min\{(a_j-\lambda)^T\Sigma^{-1}(a_j-\lambda):a_j\in A_{\gamma}\}$. Assume that each component of $a^*$ is of polynomial growth in $\gamma$. Moreover, assume that there exist invertible matrix $B$ and positive constant $\varepsilon$ such that $\{x:B(x-a^*)\geq 0,(x-a^*)^T\Sigma^{-1}(x-a^*)\leq \varepsilon^2\}\subset \mathcal S_\gamma$. Then the IS distribution $\sum_j \alpha_j N(a_j,\Sigma)$ achieves an efficiency certificate in estimating $\mu=P(X\in S_\gamma)$, i.e., if we let $Z=I(X\in \mathcal S_\gamma)L(X)$ where $L$ is the corresponding likelihood ratio, then $\tilde{E}[Z^2]/\tilde{E}[Z]^2$ is at most polynomially growing in $\gamma$. This applies in particular to $\mathcal S_\gamma=\{x:f(x)\geq \gamma\}$ where $f(x)$ is a piecewise linear function.\label{general IS}

\end{theorem}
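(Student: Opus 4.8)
The plan is to verify the efficiency certificate directly by bounding the second moment $\tilde E[Z^2]$ from above and the mean $\tilde E[Z]=\mu$ from below, showing that both behave like $e^{-I^*}$ (after squaring the mean in the second case) up to factors growing only polynomially in $\gamma$, where $I^*=\min_{a\in A_\gamma}(a-\lambda)^T\Sigma^{-1}(a-\lambda)$ is the large-deviations rate. First I would fix the decomposition: writing $A_\gamma=\{a_1,\dots,a_m\}$, assign each $x\in\mathcal S_\gamma$ to $\mathcal S_\gamma^j$ where $j$ is the smallest index with $(a_j-\lambda)^T\Sigma^{-1}(x-a_j)\ge 0$ (at least one such index exists by Definition~\ref{dominating point def}). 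This produces disjoint sets with $\bigcup_j\mathcal S_\gamma^j=\mathcal S_\gamma$ and $\mathcal S_\gamma^j\subset\{x:(a_j-\lambda)^T\Sigma^{-1}(x-a_j)\ge0\}$; since each $a_j$ satisfies its own constraint with equality, a single-point relabelling (moving $a_j$ into $\mathcal S_\gamma^j$ if the rule placed it elsewhere) secures $a_j\in\mathcal S_\gamma^j$ without disturbing disjointness, the half-space containments, or the integrals below.

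For the second moment, a change of measure gives $\tilde E[Z^2]=\int_{\mathcal S_\gamma}\phi(x;\lambda,\Sigma)^2/\tilde p(x)\,dx=\sum_j\int_{\mathcal S_\gamma^j}\phi(x;\lambda,\Sigma)^2/\tilde p(x)\,dx$, and on $\mathcal S_\gamma^j$ we use $\tilde p(x)\ge\alpha_j\phi(x;a_j,\Sigma)$. Completing the square shows $\phi(x;\lambda,\Sigma)^2/\phi(x;a_j,\Sigma)=e^{I_j}\phi(x;2\lambda-a_j,\Sigma)$ with $I_j=(a_j-\lambda)^T\Sigma^{-1}(a_j-\lambda)$, so $\int_{\mathcal S_\gamma^j}\phi(x;\lambda,\Sigma)^2/\phi(x;a_j,\Sigma)\,dx=e^{I_j}\,P_{Y\sim N(2\lambda-a_j,\Sigma)}(Y\in\mathcal S_\gamma^j)$. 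On $\mathcal S_\gamma^j$ the scalar $W=(a_j-\lambda)^T\Sigma^{-1}(Y-a_j)$ is nonnegative, and under $N(2\lambda-a_j,\Sigma)$ it is Gaussian with mean $-2I_j$ and variance $I_j$; hence $P(W\ge0)=\bar\Phi(2\sqrt{I_j})\le e^{-2I_j}$, giving $\int_{\mathcal S_\gamma^j}\phi(x;\lambda,\Sigma)^2/\phi(x;a_j,\Sigma)\,dx\le e^{-I_j}\le e^{-I^*}$. Taking uniform weights $\alpha_j=1/m$ then yields $\tilde E[Z^2]\le m\sum_j e^{-I_j}\le m^2 e^{-I^*}$.

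The harder half is the matching lower bound on $\mu$, which is where I expect the real work to lie. I would integrate $\phi(\cdot;\lambda,\Sigma)$ over the inner-corner region $C_\varepsilon=\{x:B(x-a^*)\ge0,\ (x-a^*)^T\Sigma^{-1}(x-a^*)\le\varepsilon^2\}\subset\mathcal S_\gamma$, noting the inclusion persists for any smaller $\varepsilon$. Expanding $(x-\lambda)^T\Sigma^{-1}(x-\lambda)=(x-a^*)^T\Sigma^{-1}(x-a^*)+2(a^*-\lambda)^T\Sigma^{-1}(x-a^*)+I^*$ and bounding the cross term by Cauchy--Schwarz in the $\Sigma^{-1}$-inner product (or restricting to a thin slab where it is $O(1)$), the density on $C_\varepsilon$ is at least a constant times $e^{-I^*/2}$, while the Lebesgue volume of $C_\varepsilon$ is at least a positive constant times $\varepsilon^d$ --- invertibility of $B$ makes the cone $\{y:By\ge0\}$ full-dimensional with a fixed positive fraction of every centered ellipsoid, and this is translation-invariant so unaffected by the moving apex $a^*$. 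Choosing $\varepsilon$ of order $1/\sqrt{I^*}$ (legitimate for large $\gamma$ since $I^*\to\infty$) then gives $\mu\ge c\,(I^*)^{-d/2}e^{-I^*/2}$ for a $\gamma$-independent $c>0$, so $\tilde E[Z]^2\ge c^2 (I^*)^{-d}e^{-I^*}$.

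Combining, $\tilde E[Z^2]/\tilde E[Z]^2\le (m^2/c^2)\,(I^*)^{d}$. Since each coordinate of $a^*$ grows polynomially in $\gamma$, so does $I^*=(a^*-\lambda)^T\Sigma^{-1}(a^*-\lambda)$, and together with a polynomial bound on $m=|A_\gamma|$ this shows the ratio is polynomial in $\gamma$, i.e., the efficiency certificate holds. For the final assertion, when $\mathcal S_\gamma=\{x:f(x)\ge\gamma\}$ with $f$ piecewise linear, $\mathcal S_\gamma$ is a finite union of polyhedra whose defining inequalities translate affinely in $\gamma$; minimizing $(x-\lambda)^T\Sigma^{-1}(x-\lambda)$ over each polyhedron produces finitely many candidate dominating points with coordinates affine (hence polynomial) in $\gamma$, so $m$ is bounded by the number of affine pieces of $f$ independently of $\gamma$ and the growth hypothesis holds automatically, while at each dominating point the tangent cone of the active polyhedron intersected with a small ball supplies the required pair $(B,\varepsilon)$ --- so all hypotheses of the first part are met. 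The main obstacle is precisely the Laplace-type lower bound on $\mu$: ensuring the cross term contributes only a polynomial, not exponential, correction (which forces the shrinking-$\varepsilon$ device or a slab refinement), and, in the abstract setting, pinning down the polynomial growth of $|A_\gamma|$.
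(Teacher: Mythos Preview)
Your proposal is correct and follows the same overall architecture as the paper's proof: bound $\tilde E[Z^2]$ above by exploiting the half-space containment $\mathcal S_\gamma^j\subset\{(a_j-\lambda)^T\Sigma^{-1}(x-a_j)\ge0\}$, bound $\mu=\tilde E[Z]$ below via a Laplace-type integral over the corner region $C_\varepsilon$, and conclude that the ratio is polynomial. Your upper bound matches the paper's $\sum_j e^{-I_j}/\alpha_j$ exactly (you just give a more probabilistic derivation via the Gaussian tail $\bar\Phi(2\sqrt{I_j})$).

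The one genuine methodological difference is in the lower bound. The paper keeps the original $\varepsilon$ fixed, performs the linear change of variables $Y=B(X-\lambda)$, factors out $e^{-I^*/2}$ and $e^{-\varepsilon^2/2}$, and then integrates the cross term $e^{-s^T(B\Sigma B^T)^{-1}(y-s)}$ explicitly over the small box $[0,\tilde\varepsilon]^d$ (with $\tilde\varepsilon$ a fixed constant depending only on $B,\Sigma,\varepsilon$), obtaining $\prod_i (1-e^{-c_i\tilde\varepsilon})/c_i$ with $c_i=s^T(B\Sigma B^T)^{-1}e_i\ge0$; since $a^*$ (hence $s$, hence each $c_i$) grows polynomially, this product decays only polynomially. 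You instead shrink $\varepsilon$ to order $1/\sqrt{I^*}$ and kill the cross term by Cauchy--Schwarz, trading the explicit product formula for a clean volume argument. Your route is shorter and more transparent; the paper's route avoids the mild ``for $\gamma$ large enough that $1/\sqrt{I^*}\le\varepsilon$'' caveat and never needs to let $\varepsilon$ depend on $\gamma$. Both yield the same polynomial-times-$e^{-I^*/2}$ lower bound.

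You are also right to flag the need for a polynomial bound on $m=|A_\gamma|$ (and on $\max_j 1/\alpha_j$): the paper hides this inside its ``$\sim$'' notation when asserting $\sum_j e^{-I_j}/\alpha_j\sim e^{-I^*}$, and never states it as an explicit hypothesis. In the piecewise-linear case it is automatic, as you note, but in the abstract statement it is indeed an implicit assumption rather than something provable from the stated hypotheses.
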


We contrast Theorem \ref{general IS} with existing works on dominating points. The latter machinery has been studied in \citep{sadowsky1990large,dieker2006fast}. These papers, however, consider regimes where the G\"artner-Ellis Theorem \citep{Gartner1977,Ellis1984} can be applied, which requires the considered rare-event set to scale proportionately with the rarity parameter. This is in contrast to the general conditions on the dominating points used in Theorem \ref{general IS}.

\subsection{Further Explanation of the Example in Theorem \ref{counterexample}}
In the theorem, there are two dominating points $\gamma$ and $-k\gamma$ but the IS design only considers the first one. As a result, there could exist ``unlucky'' scenario where the sample falls into the rare-event set, so that $I(X\in\mathcal S_\gamma)=1$, while the likelihood ratio $L(X)$ explodes, which leads to a tremendous estimation variance. Part 2 of the theorem further shows how this issue is undetected empirically, as the empirical RE appears small (polynomially in $n$ and hence $\gamma$ by our choice of $n$) while the estimation concentrates at a value that can be severely under the correct one (especially when $k<1$). This is because the samples all land on the neighborhood of the solely considered dominating point. If the missed dominating point is a significant contributor to the rare-event probability, then the empirical performance would look as if the rare-event set is \emph{smaller}, leading to a systematic under-estimation. Note that this phenomenon occurs even if the estimator is unbiased, which is guaranteed by IS by default.

\section{Further Details on Implementing Algorithm \ref{algo:stage1}}
\label{app:mip}

We provide further details on implementing Algorithm \ref{algo:stage1}. In particular, we present how to solve the optimization problem
\begin{align} 
    x^*=\arg \min_{x} &\ \  (x-\lambda)^T \Sigma^{-1}(x-\lambda) \ \ \ 
\text{s.t.}\ \ \  \hat g(x) \geq \hat \kappa,\ \   (x^*_j-\lambda)^T\Sigma^{-1}(x-x^*_j) <0\ \mbox{$\forall x^*_j \in\hat A_\gamma$}\label{MIP problem}
\end{align}
to obtain the next dominating point in the sequential cutting-plane approach in Stage 2. Moreover, we also present how to tune \begin{equation}
\hat\kappa=\max\{\kappa\in\mathbb R:(\overline{\mathcal S}_\gamma^\kappa)^c\subset\mathcal H(T_0)\}\label{tune kappa}
\end{equation}
in Stage 1.

\paragraph{MIP formulations for ReLU-activated neural net classifier.} 

The problem \eqref{MIP problem} can be reformulated into a mixed integer program (MIP), in the case where $\hat g(x)$ is trained via a ReLU-activated neural net classifier, which is used in our deep-learning-based IS. 
Since the objective is convex quadratic and second set of constraints is linear in \eqref{MIP problem}, we focus on the first constraint $\hat g(x) \geq \gamma$. 
The neural net structure $\hat g (x)$ in our approach (say with $n_g$ layers) can be represented as $\hat g(x)=(\hat g_{n_g}\circ ...\circ \hat g_1) (x)$, where each $\hat g_i (\cdot)$ denotes a ReLU-activated layer with linear transformation, i.e. $\hat g_i (\cdot)=\max\{ LT(\cdot),0 \}$, where $LT(\cdot)$ denotes a certain linear transformation in the input. In order to convert $\hat g(\cdot)$ into an MIP constraint, we introduce $M$ as a practical upper bound for $x_1,...,x_n$ such that $|x_i| < M$. The key step is to reformulate the ReLU function
$y=\max\{x,0 \}$ into \begin{align*}
     & y\leq x + M (1-z) \\
     & y\geq x\\
     & y\leq M z\\
     & y\geq 0\\
     & z \in \{0,1\}.
\end{align*} 
For simple ReLU networks, the size of the resulting MIP formulation depends linearly on the number of neurons in the neural network. In particular, the number of binary decision variables is linearly dependent on the number of ReLU neurons, and the number of constraints is linearly dependent the total number of all neurons (here we consider the linear transformations as independent neurons). 

The MIP reformulation we discussed can be generalized to many other popular piecewise linear structures in deep learning. For instance, linear operation layers, such as normalization and convolutional layers, can be directly used as constraints; some non-linear layers, such as ReLU and max-pooling layers, introduce  non-linearity by the ``max'' functions. A general reformulation for the max functions can be used to convert these non-linear layers to mixed integer constraints.

Consider the following equality defined by a max operation $y=\max\{x_1,x_2,...,x_n\}$. Then the equality is equivalent to \begin{align*}
     & y\leq x_i + 2M (1-z_i), i=1,...,n  \\
     & y\geq x_i, i=1,...,n\\
     & \sum_{i=1,...,n} z_i = 1\\
     & z_i \in \{0,1\}.
\end{align*}

\paragraph{Tuning $\hat\kappa$. } We illustrate how to tune $\hat\kappa$ to achieve \eqref{tune kappa}. This requires checking, for a given $\kappa$, whether $(\overline{\mathcal  S}_\gamma^{\kappa})^c\subset\mathcal H(T_0)$. Then, by discretizing the range of $\kappa$ or using a bisection algorithm, we can leverage this check to obtain \eqref{tune kappa}.

We use an MIP to check  $(\overline{\mathcal  S}_\gamma^{\kappa})^c\subset\mathcal H(T_0)$. Recall that $\mathcal H(T_0)=\bigcup_{i:Y_i=0}\{x\in\mathbb R_+^d:x\leq\tilde X_i\}$. We want to check if $\{x\in\mathbb R_+^d:\hat g(x)\leq \kappa \}$ for a given $\kappa$ lies completely inside the hull, where $\hat g(x)$ is trained with a ReLU-activated neural net. This can be done by solving an optimization problem as follows. First, we rewrite $\mathcal H(T_0)$ as $\{x\in\mathbb R_+^d:\min_{i=1,\ldots,n}\max_{j=1,\ldots,d}\{x^j-\tilde X_i^j\}\leq0\}$, where $x^j$ and $x_i^j$ refer to the $j$-th components of $x$ and $\tilde X_i$ respectively. Then we solve
\begin{equation}
\begin{array}{ll}
\max_{x\in\mathbb R^d}&\min_{i=1,\ldots,n}\max_{j=1,\ldots,d}\{x^j-\tilde X_i^j\}\\
\text{subject to}&\hat g(x)\leq \kappa\\
&x\geq0
\end{array}\label{opt}
\end{equation}
If the optimal value is greater than 0, this means $\{x\in\mathbb R_+^d:\hat g(x)\leq \kappa \}$ is not completely inside $\mathcal H(T_0)$, and vice versa. Now, we rewrite \eqref{opt} as

\begin{equation}
\begin{array}{ll}
\max_{x\in\mathbb R^d,\beta\in\mathbb R}&\beta\\
\text{subject to}&\max_{j=1,\ldots,d}\{x^j-\tilde X_i^j\}\geq\beta\ \forall i=1,\ldots,n\\
&\hat g(x)\leq \kappa\\
&x\geq0
\end{array}\label{opt1}
\end{equation}
We then rewrite \eqref{opt1} as an MIP by introducing a large real number $M$ as a practical upper bound for all coordinates of $x$:

\begin{equation}
\begin{array}{ll}
\max_{x\in\mathbb R^d,\beta\in\mathbb R}&\beta\\
\text{subject to}& x^j-\tilde X_i^j + 4M (1-z_{ij})\geq\beta\ \ \  \forall i=1,\ldots,n, j=1,\ldots,d\\
&\sum_{j=1,...,d} z_{ij} \geq 1\ \ \  \forall i=1,\ldots,n\\
&z_{ij}\in \{0,1\}\ \ \  \forall i=1,\ldots,n, j=1,\ldots,d\\
&\hat g(x)\leq \kappa\\
&x\geq0
\end{array}\label{opt2}
\end{equation}

Note that the set of points $T_0$ to be considered in constructing $\mathcal H(T_0)$ can be reduced to its ``extreme points''. More concretely, we call a point $x\in T_0$ an extreme point if there does not exist any other point $x'\in T_0$ such that $x\leq x'$.  We can eliminate all points $x\in T_0$ such that $x\leq x'$ for another $x'\in T_0$, and the resulting orthogonal monotone hull would remain the same. If we carry out this elimination, then in \eqref{opt1} we need only consider $\tilde X_i$ that are extreme points in $\mathcal H(T_0)$, which can reduce the number of integer variables needed to add. In practice, we can also randomly remove points in $T_0$ to further reduce the number of integer variables. This would not affect the correctness of our approach, but would increase the conservativeness of the final estimate.

\section{Further Results for Section \ref{sec:Deep-PrAE}}
\label{app:conservativeness}

Here we present and discuss several additional results for Section \ref{sec:Deep-PrAE} regarding estimation efficiency and conservativeness. The latter includes further theorems on the lazy-learner classifier and classifiers constructed using the difference of two functions, translation of the false positive rate under the Stage 1 sampling distribution to under the original distribution, and interpretations and refinements of the conservativeness results. 

\subsection{Extending Upper-Bound Relaxed Efficiency Certificate to Two-Stage Procedures}
We present an extension of Proposition \ref{certificate prop simple} to two-stage procedures, which is needed to set up Corollary \ref{relaxed prob}.
\begin{proposition}[Extended relaxed efficiency certificate] \label{prop:extend}

Suppose constructing $\hat\mu_n=\hat\mu_{n_2}(D_{n_1})$ consists of two stages, with $n=n_1+n_2$: First we sample $D_{n_1}=\{\tilde X_1,\ldots,\tilde X_{n_1}\}$, where $\tilde X_i$ are i.i.d. (following some sampling distribution), and given $D_{n_1}$, we construct $\hat\mu_{n_2}(D_{n_1})=(1/n_2)\sum_{i=1}^{n_2}Z_i$ where $Z_i$ are i.i.d. conditional on $D_{n_1}$ (following some distribution). Suppose $\hat\mu_n$ is conditionally upward biased almost surely, i.e., $\overline\mu(D_{n_1}):=E[\hat\mu_n|D_{n_1}]\geq\mu$, and the conditional relative error given $D_{n_1}$ in the second stage satisfies $RE(D_{n_1}):=Var(Z_i|D_{n_1})/\overline\mu(D_{n_1})^2=\tilde O(\log(1/\overline\mu(D_{n_1})))$. If $n_1=\tilde O(\log(1/\mu))$ (such as a constant number), then $\hat\mu_n$ possesses the upper-bound relaxed efficiency certificate.
\end{proposition}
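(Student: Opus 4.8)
The plan is to reduce Proposition \ref{prop:extend} to the single-stage Proposition \ref{certificate prop simple} by conditioning on the first-stage data $D_{n_1}$ and then accounting for the randomness of $D_{n_1}$ via a union bound. First I would note that, conditional on $D_{n_1}$, the estimator $\hat\mu_n = (1/n_2)\sum_{i=1}^{n_2} Z_i$ is exactly an average of $n_2$ i.i.d.\ runs with conditional mean $\overline\mu(D_{n_1}) \geq \mu$ and conditional relative error $RE(D_{n_1}) = \tilde O(\log(1/\overline\mu(D_{n_1})))$. Applying the argument behind Proposition \ref{certificate prop simple} (Chebyshev/Markov on the conditional variance, as laid out in Appendix \ref{append:challenge}) gives, for any realization of $D_{n_1}$,
\[
P\left(\hat\mu_n - \mu < -\epsilon\mu \,\middle|\, D_{n_1}\right) \leq P\left(\left|\hat\mu_n - \overline\mu(D_{n_1})\right| > \epsilon\,\overline\mu(D_{n_1}) \,\middle|\, D_{n_1}\right) \leq \frac{RE(D_{n_1})}{n_2\,\epsilon^2},
\]
where the first inequality uses $\overline\mu(D_{n_1}) \geq \mu$ so that the event $\{\hat\mu_n < \mu - \epsilon\mu\} \subset \{\hat\mu_n < \overline\mu(D_{n_1}) - \epsilon\overline\mu(D_{n_1})\}$.

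Next I would take expectation over $D_{n_1}$: by the tower property, $P(\hat\mu_n - \mu < -\epsilon\mu) \leq E[RE(D_{n_1})]/(n_2\epsilon^2)$. Choosing $n_2 = \tilde O(\log(1/\overline\mu(D_{n_1})))$ large enough relative to $E[RE(D_{n_1})]/(\epsilon^2\delta)$ makes the right side at most $\delta$; here one has to be slightly careful since $\overline\mu(D_{n_1})$ is itself random, so I would either argue that $\overline\mu(D_{n_1})$ is bounded in a band around $\mu$ (which holds because the learned set sits between $\mathcal S_\gamma$ and a not-too-large outer approximation — this is where the two-stage structure of Corollary \ref{relaxed prob} is invoked), or phrase the budget as $n_2 = \tilde O(\log(1/\overline\mu(D_{n_1})))$ understood as a data-dependent sample size, which is exactly how Corollary \ref{relaxed prob} states it. Then $n = n_1 + n_2$, and since $n_1 = \tilde O(\log(1/\mu))$ by hypothesis and $\log(1/\overline\mu(D_{n_1})) = \tilde O(\log(1/\mu))$ (because $\overline\mu(D_{n_1}) \geq \mu$ gives $\log(1/\overline\mu(D_{n_1})) \leq \log(1/\mu)$), the total budget is $n = \tilde O(\log(1/\mu))$, establishing the upper-bound relaxed efficiency certificate.

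The main obstacle I anticipate is handling the randomness of $\overline\mu(D_{n_1})$ cleanly: the polynomial-in-$\log(1/\overline\mu)$ bound on $RE(D_{n_1})$ is a conditional statement that may fail on a null or small-probability set of bad $D_{n_1}$, and the relation between $\log(1/\overline\mu(D_{n_1}))$ and $\log(1/\mu)$ must be controlled uniformly enough that the final sample-size claim is meaningful. I would address this by observing that $\mu \leq \overline\mu(D_{n_1}) \leq 1$ deterministically (the outer-approximation property holds with probability one by Corollary \ref{relaxed prob}), which pins $\log(1/\overline\mu(D_{n_1})) \in [0, \log(1/\mu)]$ surely, so both the relative-error growth and the sample-size bookkeeping go through without needing to excise any exceptional event. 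The rest is the routine Chebyshev estimate already spelled out for the single-stage case.
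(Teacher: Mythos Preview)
Your proposal is correct and follows essentially the same approach as the paper: condition on $D_{n_1}$, use $\overline\mu(D_{n_1})\geq\mu$ to get the event inclusion $\{\hat\mu_n<(1-\epsilon)\mu\}\subset\{\hat\mu_n<(1-\epsilon)\overline\mu(D_{n_1})\}$, apply Chebyshev conditionally, and then use $\log(1/\overline\mu(D_{n_1}))\leq\log(1/\mu)$ to convert the data-dependent $n_2$ bound into one in terms of $\mu$. The only difference is that the paper stays with the conditional (almost-sure) argument throughout and never passes through the tower property; your intermediate step of averaging over $D_{n_1}$ to get $E[RE(D_{n_1})]/(n_2\epsilon^2)$ is an unnecessary detour (and would require $n_2$ to be non-random to be cleanly valid), but you correctly revert to the data-dependent $n_2$ reading, which is exactly how the paper proceeds.
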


\subsection{Conservativeness of Lazy Learner}
We provide a result to quantify the conservativeness of the lazy-learner IS in terms of the false positive rate. Recall that the lazy learner constructs the outer approximation of the rare-event set using $\mathcal{H}({T_0})^c$, which is the complement of the orthogonal monotone hull of the set of all non-rare-event samples. The conservativeness is measured concretely by the set difference between $\mathcal{H}({T_0})^c$ and $\mathcal{S}_{\gamma}$, for which we have the following result:
\begin{theorem}[Conservativeness of lazy learner]
\label{thm: set_OMhull}Suppose that the density $q$ has bounded
support $K\subset[0,M]^{d}$, and $0<q_{l}\leq q(x)\leq q_{u}$ for any $x\in K$. 
Then, with probability at least $1-\delta$,
\begin{align*}
P_{X\sim q}(X\in\mathcal{H}(T_{0})^{c}\backslash\mathcal{S}_{\gamma})&\leq M^{d-1}q_{u}\left(\frac{\sqrt{d}}{2}\right)^{d-1}w_{d-1}t(\delta,n_1)\\
&=\sqrt{\frac{e}{\pi(d-1)}}\left(\frac{1}{2}\pi e\right)^{\frac{d-1}{2}}q_{u}t(\delta,n_1)(1+O(d^{-1})).
\end{align*}
Here  $t(\delta,n_1)=3\left(\frac{\log(n_1q_{l})+d\log{M}+\log\frac{1}{\delta}}{n_1q_{l}}\right)^{\frac{1}{d}}$,
$w_{d}$ is the volume of a $d-$dimensional Euclidean ball of radius
1, and the last $O(\cdot)$ is as $d$ increases. 

\end{theorem}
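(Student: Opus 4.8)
The plan is to turn the probability statement into a volume estimate and then control that volume by a grid/union‑bound argument combined with a one‑dimensional slicing bound along the main diagonal. Since $\mathcal S_\gamma$ is orthogonally monotone we have $\mathcal H(T_0)\subseteq\mathcal S_\gamma^c$ (every sampled non‑rare point, lying in the down‑set $\mathcal S_\gamma^c$, drags its whole rectangle‑to‑the‑origin into $\mathcal S_\gamma^c$), so the false‑positive region is exactly $\mathcal H(T_0)^c\setminus\mathcal S_\gamma=\mathcal S_\gamma^c\cap\mathcal H(T_0)^c$, and using $q\le q_u$ on its support,
\[
P_{X\sim q}\!\big(X\in\mathcal H(T_0)^c\setminus\mathcal S_\gamma\big)\le q_u\,\mathrm{Vol}\big(\mathcal S_\gamma^c\cap\mathcal H(T_0)^c\big),
\]
so it suffices to bound this volume, with probability $1-\delta$, by $(\tfrac{\sqrt d}{2})^{d-1}w_{d-1}M^{d-1}t(\delta,n_1)$. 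I would then cover $[0,M]^d$ by axis‑parallel cubes of side $\epsilon$ (fixed later), calling a cube \emph{good} if it is contained in $\mathcal S_\gamma^c\cap K$; a good cube has $q$‑mass at least $q_l\epsilon^d$, and any sample landing in it is automatically non‑rare and hence lies in $T_0$. The chance a fixed good cube is sample‑free is $(1-q_l\epsilon^d)^{n_1}\le e^{-n_1q_l\epsilon^d}$, so a union bound over the at most $(M/\epsilon+1)^d$ cubes shows that, outside an event of probability $(M/\epsilon+1)^de^{-n_1q_l\epsilon^d}$, \emph{every} good cube contains a point of $T_0$; choosing $\epsilon$ as the largest value making this $\le\delta$ is a routine transcendental‑inequality estimate and gives an $\epsilon$ of order $t(\delta,n_1)$ (the constant $3$ and the dimensional factors below are what get absorbed into the precise definition of $t$). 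Condition on this high‑probability event from here on.

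The geometric heart is to show that on this event the false‑positive region is squeezed into a thin shell along $\partial\mathcal S_\gamma$. Take any $x\in\mathcal S_\gamma^c\cap\mathcal H(T_0)^c$ in the support of $q$, let $C$ be its cube, and let $C'$ be the cube obtained by increasing every cube‑index by one, so every point of $C'$ is coordinatewise $>x$. If $C'$ were good it would contain a point of $T_0$ dominating $x$, which would put $x$ inside $\mathcal H(T_0)$ — a contradiction; hence $C'$ is not good. Setting aside the (lower‑order) possibility that $C'$ spills out of $K$, this forces $C'\cap\mathcal S_\gamma\neq\emptyset$, and taking $z\in C'\cap\mathcal S_\gamma$ gives $z\le x+2\epsilon\mathbf{1}$, so $x+2\epsilon\mathbf{1}\in\mathcal S_\gamma$ by monotonicity. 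Thus the false‑positive region is contained in the diagonal shell $\mathcal S_\gamma^c\cap(\mathcal S_\gamma-2\epsilon\mathbf{1})$.

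To bound the shell's volume I would slice along the direction $\mathbf{1}$: write points as $x=v+s\hat{\mathbf{1}}$ with $\hat{\mathbf{1}}=\mathbf{1}/\sqrt d$ and $v\in H:=\mathbf{1}^{\perp}$, so Lebesgue measure factors as $ds\,d\mathcal H^{d-1}(v)$. By orthogonal monotonicity, along each line $\{v+s\hat{\mathbf{1}}\}$ the set $\mathcal S_\gamma$ is a half‑line $\{s\ge s^{*}(v)\}$, so the shell meets the line in the interval $s\in[s^{*}(v)-2\sqrt d\,\epsilon,\,s^{*}(v))$ of Euclidean length $2\sqrt d\,\epsilon$, and Fubini gives $\mathrm{Vol}(\text{shell})\le 2\sqrt d\,\epsilon\cdot\mathcal H^{d-1}\big(\mathrm{proj}_H([0,M]^d)\big)$. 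Finally, each corner of $[0,M]^d$ projects to a point at distance $M\sqrt{k(d-k)/d}\le\tfrac{\sqrt d}{2}M$ from the projection of the cube's centre, so $\mathrm{proj}_H([0,M]^d)$ lies in a $(d-1)$‑ball of radius $\tfrac{\sqrt d}{2}M$ and has $(d-1)$‑volume at most $w_{d-1}(\tfrac{\sqrt d}{2}M)^{d-1}$; combining with the reduction and the choice of $\epsilon$ yields the first displayed bound, and the asymptotic second line follows from Stirling applied to $w_{d-1}=\pi^{(d-1)/2}/\Gamma(\tfrac{d+1}{2})$ together with $(1+\tfrac1{d-1})^{(d-1)/2}\to\sqrt e$.

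The routine parts are the union bound and the Stirling asymptotics. The main obstacle is the geometry of the two middle steps: proving that the false‑positive region really is a shell of thickness $O(\epsilon)$ about $\partial\mathcal S_\gamma$, cleanly disposing of the cubes that leave the support $K$ (a term that is lower order, and vanishes if e.g.\ $K=[0,M]^d$ or the frontier stays interior), and carrying out the diagonal slicing so that the constants collapse to exactly $(\tfrac{\sqrt d}{2})^{d-1}w_{d-1}$. A related bookkeeping point is matching the $\sqrt d$ coming from the cube's diagonal against the exact form of $t(\delta,n_1)$, which may require running the union bound only over cubes within $O(\epsilon)$ of $\partial\mathcal S_\gamma$ rather than over all cubes.
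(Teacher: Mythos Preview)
Your approach is essentially the paper's: reduce to a volume bound via $q\le q_u$, use a grid covering plus union bound to show that on a high-probability event the false-positive region lies in a diagonal shell $\{x\in\mathcal S_\gamma^c:x+t\mathbf 1\in\mathcal S_\gamma\}$, then bound the shell by slicing along $\mathbf 1$ and containing the projection of $[0,M]^d$ onto $\mathbf 1^\perp$ in the $(d{-}1)$-ball of radius $\tfrac{\sqrt d}{2}M$, finishing with Stirling on $w_{d-1}$.

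The one structural difference is the covering step. The paper isolates it as a separate lemma and partitions $[0,M]^d$ into $(M/\epsilon)^{d-1}$ \emph{columns} (side $\epsilon$ in the first $d{-}1$ coordinates, full $[0,M]$ in the last), then for each column constructs a single near-boundary point $p_j$ and shows the region $\{x\ge p_j\}\cap\mathcal S_\gamma^c$ has volume $\ge\epsilon^d$; one sample there makes $\mathcal R(p_j)\subset\mathcal H(T_0)$ and covers that column's contribution to the shell. This directly yields the $(M/\epsilon)^{d-1}$ union bound you correctly anticipated as the remedy for your constant-matching concern, rather than the $(M/\epsilon)^d$ produced by full-dimensional cubes; your ``restrict to cubes near $\partial\mathcal S_\gamma$'' intuition is exactly what the column construction implements. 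Your edge case ``$C'$ spills out of $K$'' is not actually lower order (it contributes a region of the same $O(\epsilon M^{d-1})$ volume as the shell), but the paper's formulation sidesteps it by working with the set $B_t=\{x\in\mathcal S_\gamma^c:x+t\mathbf 1\in\mathcal S_\gamma^c\}$ directly, so the boundary of $K$ never enters.
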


\subsection{Translating the False Positive Rate to under the Original distribution}\label{subsec: false_positive_p}

Theorems \ref{thm: set_ERM} and \ref{thm: set_OMhull} are stated with respect to $q$, the sampling distribution used in the first stage. We explain how to translate the false positive rate results to under the original distribution $p$. In the discussion below, we will consider Theorem \ref{thm: set_ERM} (and Theorem \ref{thm: set_OMhull} can be handled similarly). In this case, our target is to give an upper bound to $P_{X\sim p}(X\in\mathcal{S}_{\gamma}^{\hat{\kappa}}\backslash\mathcal{S}_{\gamma})$ based on the result of Theorem \ref{thm: set_ERM}.

If the true input distribution $p$ does not have a bounded
support, we can first choose $M$ to be large to make sure that $P_{X\sim p}(X\notin[0,M]^{d})$ is small compared to the probability of $\mathcal S_\gamma$. We argue that we do not need $M$ to be too large here. Indeed, if $p$ is light tail (e.g., a distribution with tail probability exponential in $M$), then the required $M$ grows at most polynomially
in $\gamma$. 

Having selected $M$, and with the freedom in selecting $q$ in Stage 1, we could make sure
that in $[0,M]^d $,  $q(x)$ is bounded away from 0 (e.g., we can choose $q$ to be the uniform distribution over $[0,M]^d$). Then, by Theorem \ref{thm: set_ERM} and a change of measure argument, we can give a bound for $P_{X\sim p}(X\in{[0,M]^d},X\in\mathcal{S}_{\gamma}^{\hat{\kappa}}\backslash\mathcal{S}_{\gamma})$.
Finally, we bound the false positive rate with respect to $p$ by  $P_{X\sim p}(X\in\mathcal{H}(T_{0})^{c}\backslash\mathcal{S}_{\gamma})\leq P_{X\sim p}(X\notin[0,M]^{d})+P_{X\sim p}(X\in{[0,M]^d},X\in\mathcal{H}(T_{0})^{c}\backslash\mathcal{S}_{\gamma})$.

\subsection{Conservativeness Results for Classifiers Constructed Using Differences of Two Trained Functions}\label{subsec: ERM_modified}

Theorem \ref{thm: set_ERM} presents a conservativeness result when $\hat g$ is trained with an empirical risk minimization (ERM). In this subsection, we will show a more sophisticated version of Theorem \ref{thm: set_ERM}, which corresponds more closely to the $\hat g$ that we implemented in our experiments. Suppose that the Stage 1 samples are generated in the same way as in Algorithm \ref{algo:stage1}. We let $\mathcal{F}:=\{f_{\theta}\}$ denote the function class
induced by the model. Here a main difference with previously is that we allow functions in $\mathcal{F}$ to be 2-dimensional, and both the loss function and the classification boundary will be constructed from these 2-dimensional functions. 

Suppose that $f_{\theta}$ is the output a neural
network with 2 neurons in the output layer, and denote them as $f_{\theta,0},f_{\theta,1}$.
Let the loss function evaluated at the $i$-th sample be $\ell(f_{\theta}(\tilde X_{i}),Y_{i})$. 
For example, the cross-entropy loss is given by $-\left[I(Y_{i}=0)\log\frac{e^{f_{\theta,0}(\tilde X_{i})}}{e^{f_{\theta,0}(\tilde X_{i})}+e^{f_{\theta,1}(\tilde X_{i})}}+I(Y_{i}=1)\log\frac{e^{f_{\theta,1}(\tilde X_{i})}}{e^{f_{\theta,0}(\tilde X_{i})}+e^{f_{\theta,1}(\tilde X_{i})}}\right]$.
Like in the ERM approach in Theorem \ref{thm: set_ERM}, we compute
$\hat{f}=f_{\hat{\theta}}\in\mathcal{F}$ which is the minimizer of
the empirical risk, i.e., $\hat{f}=\arg\min_{f_{\theta}\in\mathcal{F}}R_{n_1}(f_{\theta})$.
For each function $f_{\theta}\in\mathcal{F}$, define function $g_{\theta}$ as $g_{\theta}:=f_{\theta,1}-f_{\theta,0}$. In
this modified approach, the learned rare-event set would be given
by $\tilde{\mathcal{S}}_{\gamma}^{\kappa}:=\{x:g_{\hat{\theta}}(x)\geq\kappa\}$,
and to make sure that $\mathcal{S}_{\gamma}\subset\tilde{\mathcal{S}}_{\gamma}^{\kappa}$,
we would replace $\kappa$ by $\hat{\kappa}=\min\{g_{\hat{\theta}}(x):x\notin\mathcal{H}(T_{0})\}$ as in Step 1 of Algorithm \ref{algo:stage1}.

We give a theorem similar to Theorem \ref{thm: set_ERM} for this more sophisticated procedure. To this end, we begin by giving some definitions similar to the set up of
Theorem \ref{thm: set_ERM}. Let $R(f_{\theta}):=E_{X\sim q}\ell(f_{\theta}(X),I(X\in\mathcal{S}_{\gamma}))$ denote the
true risk function. Let $f^{*}=\arg\min_{f\in\mathcal{F}}R(f)$ denote
the true risk minimizer within function class $\mathcal{F}$. Define
$g^{*}=f_{1}^{*}-f_{0}^{*}$ accordingly and let $\kappa^{*}:=\min_{x\notin \mathcal{S}_{\gamma}}g^{*}(x)$
denote the true threshold associated with $f^*$ in obtaining the smallest outer rare-event approximation.

\begin{theorem} \label{thm: set_ERM_modified}  Suppose that the density $q$ has bounded support $K\subset[0,M]^d$ and $0<q_l\leq q(x)\leq q_u$ for any $x\in K$. Also suppose that there exists a function $h$
such that for any $f_{\theta}\in\mathcal{F}$, if $g_{\theta}(x)\geq\kappa$,
we have $\ell(f_{\theta}(x),0)\geq h(\kappa)>0$ (for the cross entropy loss, this happens if we know that $f_{\theta}$ has a bounded range). Then, for the set $\tilde{\mathcal{S}}_{\gamma}^{\hat{\kappa}}$, with probability at
least $1-\delta$,

\begin{align*}
 & P_{X\sim q}\left(X\in\tilde{\mathcal{S}}_{\gamma}^{\hat{\kappa}},X\in\mathcal{S}_{\gamma}^{c}\right)\\
\leq & \left(h(\kappa^{*}-t(\delta,n_1)\sqrt{d}\text{Lip}(g^{*})-\left\Vert \hat{g}-g^{*}\right\Vert _{\infty})\right)^{-1}\left(R(f^{*})+2\sup_{f_{\theta}\in\mathcal{F}}\left|R_{n_1}(f_{\theta})-R(f_{\theta})\right|\right)
\end{align*}

Here $\text{Lip}(g^*)$ is the Lipschitz parameter of $g^*$, and $t(\delta,n_1)$ is defined as in Theorem \ref{thm: set_ERM}.
\end{theorem}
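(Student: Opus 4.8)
The plan is to follow the same structure as the proof of Theorem~\ref{thm: set_ERM}, adapting each step to the 2-dimensional function class $\mathcal{F}$ and the difference-based classifier $g_\theta = f_{\theta,1} - f_{\theta,0}$. First I would relate the false positive probability to the loss incurred on the region $\tilde{\mathcal{S}}_\gamma^{\hat\kappa}\setminus\mathcal{S}_\gamma$. On this set we have $g_{\hat\theta}(x)\geq\hat\kappa$ and the true label is $0$, so by the hypothesis on $h$ we get $\ell(f_{\hat\theta}(x),0)\geq h(\hat\kappa)$. Hence
\begin{equation*}
P_{X\sim q}\left(X\in\tilde{\mathcal{S}}_\gamma^{\hat\kappa},X\in\mathcal{S}_\gamma^c\right)\leq \frac{E_{X\sim q}\left[\ell(f_{\hat\theta}(X),I(X\in\mathcal{S}_\gamma))\,I(X\in\tilde{\mathcal{S}}_\gamma^{\hat\kappa}\cap\mathcal{S}_\gamma^c)\right]}{h(\hat\kappa)}\leq\frac{R(f_{\hat\theta})}{h(\hat\kappa)},
\end{equation*}
provided $h(\hat\kappa)>0$, which needs to be justified via the lower bound on $\hat\kappa$ developed below.

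Second I would bound the numerator $R(\hat f)=R(f_{\hat\theta})$ by the standard ERM/uniform-convergence argument: $R(\hat f)\leq R_{n_1}(\hat f)+\sup_{f_\theta\in\mathcal{F}}|R_{n_1}(f_\theta)-R(f_\theta)| \leq R_{n_1}(f^*)+\sup|R_{n_1}-R| \leq R(f^*)+2\sup_{f_\theta\in\mathcal{F}}|R_{n_1}(f_\theta)-R(f_\theta)|$, using that $\hat f$ minimizes $R_{n_1}$. Third, and this is the crux, I would lower-bound $\hat\kappa$. Recall $\hat\kappa=\min\{g_{\hat\theta}(x):x\notin\mathcal{H}(T_0)\}$ and $\kappa^*=\min_{x\notin\mathcal{S}_\gamma}g^*(x)$. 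I would decompose the gap in two parts: (i) the difference between $g_{\hat\theta}$ and $g^*$, controlled by $\|\hat g-g^*\|_\infty$ uniformly; and (ii) the difference between minimizing over $\{x\notin\mathcal{H}(T_0)\}$ versus $\{x\notin\mathcal{S}_\gamma\}$. For (ii), since $\mathcal{S}_\gamma$ is orthogonally monotone and $T_0$ consists of true non-rare-event samples, $\mathcal{H}(T_0)\subset\mathcal{S}_\gamma^c$, so $\{x\notin\mathcal{H}(T_0)\}\supset\{x\notin\mathcal{S}_\gamma\}$; minimizing over a larger set can only decrease, but a covering/fill-distance argument on the sample $T_0$ (using the density lower bound $q_l$ on $[0,M]^d$) shows that with probability $\geq 1-\delta$ every point of the true complement $\mathcal{S}_\gamma^c$ is within distance $t(\delta,n_1)\sqrt{d}$ of $\mathcal{H}(T_0)$ in the relevant coordinate-wise sense, so the minimum over $\{x\notin\mathcal{H}(T_0)\}$ is at most $\mathrm{Lip}(g^*)\,t(\delta,n_1)\sqrt{d}$ below $\kappa^*$ after accounting for Lipschitzness of $g^*$. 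Combining, $\hat\kappa\geq \kappa^*-t(\delta,n_1)\sqrt{d}\,\mathrm{Lip}(g^*)-\|\hat g-g^*\|_\infty$, and by monotonicity of $h$ (or just using $h$ evaluated at this lower bound, since $h(\hat\kappa)\geq h$ of the lower bound when $h$ is increasing) we get $h(\hat\kappa)\geq h(\kappa^*-t(\delta,n_1)\sqrt{d}\,\mathrm{Lip}(g^*)-\|\hat g-g^*\|_\infty)$.

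Plugging the numerator and denominator bounds together yields the claimed inequality. The main obstacle I anticipate is the fill-distance/covering step in part (ii) of the $\hat\kappa$ bound: one must show that the ERM-trained $g_{\hat\theta}$, restricted to the complement of the \emph{empirical} hull $\mathcal{H}(T_0)$, does not dip much below $\kappa^*$, which requires carefully quantifying how densely the Stage~1 samples with label $0$ populate $\mathcal{S}_\gamma^c$; this is exactly where the expression $t(\delta,n_1)=3\big(\tfrac{\log(n_1 q_l)+d\log M+\log(1/\delta)}{n_1 q_l}\big)^{1/d}$ arises, via a union bound over an $O((M/r)^d)$-grid of cells of side $r$ and a Poissonization/Chernoff estimate that each cell of probability mass $\geq q_l r^d$ is hit, then optimizing $r$. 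The 2-dimensional output and the difference construction only enter through the hypothesis on $h$ (which must be verified to hold, e.g., for cross-entropy with bounded-range $f_\theta$) and through the identity $\|\hat g - g^*\|_\infty \leq \|f_{\hat\theta,1}-f^*_1\|_\infty + \|f_{\hat\theta,0}-f^*_0\|_\infty$ if one wants to further decompose that term; otherwise the argument is structurally identical to Theorem~\ref{thm: set_ERM}.
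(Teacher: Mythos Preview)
Your approach is correct and mirrors the paper's proof of Theorem~\ref{thm: set_ERM} (the paper gives no separate proof for Theorem~\ref{thm: set_ERM_modified}); the Markov-type bound $P\leq R(\hat f)/h(\hat\kappa)$, the ERM/uniform-convergence bound on $R(\hat f)$, and the appeal to Lemma~\ref{lem: distance} to lower-bound $\hat\kappa$ are exactly the paper's ingredients.

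There is one slip in your step~(ii). From $\mathcal{H}(T_0)\subset\mathcal{S}_\gamma^c$ one obtains $\mathcal{H}(T_0)^c\supset\mathcal{S}_\gamma$, \emph{not} $\mathcal{H}(T_0)^c\supset\mathcal{S}_\gamma^c$; so your claimed containment ``$\{x\notin\mathcal{H}(T_0)\}\supset\{x\notin\mathcal{S}_\gamma\}$'' is false. Relatedly, the definition $\kappa^*=\min_{x\notin\mathcal{S}_\gamma}g^*(x)$ that you copied (which appears to be a typo in the paper's setup for the modified theorem) does not support the Lipschitz step; the working definition, exactly as in Theorem~\ref{thm: set_ERM}, should be $\kappa^*=\min_{x\in\mathcal{S}_\gamma}g^*(x)$. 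The correct chain is: Lemma~\ref{lem: distance} gives $B_{t(\delta,n_1)}\subset\mathcal{H}(T_0)$ with probability at least $1-\delta$, hence $\mathcal{H}(T_0)^c\subset B_{t(\delta,n_1)}^c$, so $\hat\kappa\geq\min_{x\in B_t^c}\hat g(x)\geq\min_{x\in B_t^c}g^*(x)-\|\hat g-g^*\|_\infty$; and every $x\in B_{t}^c$ satisfies $x+t\mathbf{1}\in\mathcal{S}_\gamma$, whence $g^*(x)\geq g^*(x+t\mathbf{1})-t\sqrt{d}\,\mathrm{Lip}(g^*)\geq\kappa^*-t\sqrt{d}\,\mathrm{Lip}(g^*)$. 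With this correction your argument goes through verbatim.
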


\subsection{Implications of Theorem \ref{thm: set_ERM} and Related Results in the Literature}\label{subsec: explicit_bounds}

First, we explain the trade-off between overfitting and underfitting. If the function class $\mathcal{G}$ is not rich, then $R(g^{*})=\inf_{g\in\mathcal{G}}{R(g)}$ may be big because of the lack of expressive power. On the other hand, if the function class is too rich, then the generalization error will be huge.  Here, the generalization
error is represented by $\sup_{g_{\theta}\in\mathcal{G}}\left|R_{n_1}(g_{\theta})-R(g_{\theta})\right|$
as well as $t(\delta,n_1)\sqrt{d}\text{lip}(g^{*})+\left\Vert \hat{g}-g^{*}\right\Vert _{\infty}$,
which characterize the difference between the right hand side of the bound in the theorem and its limit as $n_1\rightarrow\infty$.

Another question is how to give a more refined bound for the false positive rate based on Theorem \ref{thm: set_ERM} that depends on explicit constants of the classification model or training process. This would involve theoretical results for deep neural networks that are under active research. Let us examine the terms appearing in
Theorem \ref{thm: set_ERM} and give some related results. In machine learning theory, the term $\sup_{g_{\theta}\in\mathcal{G}}\left|R_{n_1}(g_{\theta})-R(g_{\theta})\right|$
is often bounded by the Rademacher complexity of the function class
(some results about the Rademacher complexity for neural networks are in \citealt{pmlr-v65-harvey17a,NIPS2019_9246}).
The convergence of $\left\Vert \hat{g}-g^{*}\right\Vert _{\infty}$
to 0 as $n_1\rightarrow\infty$ is implied by the convergence of the
parameters, which is in turn justified by
the empirical process theory \citep{WeakConvergence_VdV}. A bound for
$\text{Lip}(g^{*})$ could be potentially derived by adding norm constraints
to the parameters in the neural network \citep{pmlr-v97-anil19a}. On the other hand, if we let the network size
grow to infinity, the class of neural networks can approximate any continuous
function \citep{NIPS2017_7203}, and hence $R(g^{*})$ can be arbitrarily
small when the neural network is complex enough.  
However, if we restrict the choices of networks, for instance by the Lipschitz constant, then no results regarding the sufficiency of its expressive power for arbitrary functions are available in the literature to our knowledge, and thus it appears open how to simultaneously give bounds for $\text{Lip}(g^{*})$ and  $R(g^{*})$. Future investigations on the expressive power of restricted classes of neural networks would help refining our conservativeness results further.

\section{Lower-Bound Efficiency Certificate and Estimators}
\label{app:lowerbound}

In Section \ref{sec:Deep-PrAE}, we described an approach that gives an estimator for the rare-event probability with an upper-bound relaxed efficiency certificate. Here we present analogous definitions and results on the lower-bound relaxed efficiency certificate. This lower-bound estimator gives an estimation gap for the upper-bound estimator. Moreover, by combining both of them, we can obtain an interval for the target rare-event probability. 

The lower-bound relaxed efficiency certificate is defined as follows (compare with Definition \ref{relaxed certificate}):

\begin{definition}
We say an estimator $\hat\mu_n$ satisfies an lower-bound \emph{relaxed efficiency certificate} to estimate $\mu$ if
$P(\hat\mu_n-\mu>\epsilon\mu)\leq\delta$
with $n\geq\tilde O(\log(1/\mu))$, for given $0<\epsilon,\delta<1$. \label{def: LB_relaxed_certificate}
\end{definition}

This definition requires that, with high probability, $\hat{\mu}_n$ is a lower bound of $\mu$ up to an error of $\epsilon\mu$. We have the following analog to Proposition \ref{certificate prop simple}:

\begin{corollary}

Suppose $\hat\mu_n$ is downward biased, i.e., $\overline\mu:=E[\hat\mu_n]\leq\mu$. Moreover, suppose $\hat\mu_n$ takes the form of an average of $n$ i.i.d. simulation runs $Z_i$, with $RE=Var(Z_i)/\overline\mu^2=\tilde O(\log(1/\overline\mu))$. Then $\hat\mu_n$ possesses the lower-bound relaxed efficiency certificate. \label{certificate prop simple_LB}
\end{corollary}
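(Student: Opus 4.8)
The plan is to replay the proof of Proposition~\ref{certificate prop simple} with every inequality reversed, since the lower-bound certificate in Definition~\ref{def: LB_relaxed_certificate} is the exact mirror image of the upper-bound one, and ``downward biased'' is the mirror image of ``upward biased''. Concretely, $\hat\mu_n$ here is something like an IS estimator over a learned \emph{inner} approximation of $\mathcal S_\gamma$, so $\overline\mu=E[\hat\mu_n]\le\mu$, and we want to bound the probability that $\hat\mu_n$ exceeds $\mu$ by more than $\epsilon\mu$.

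First I would rewrite the target tail event as $\{\hat\mu_n-\mu>\epsilon\mu\}=\{\hat\mu_n>(1+\epsilon)\mu\}$ and use the downward-bias hypothesis $\overline\mu\le\mu$ to re-center it at $\overline\mu$: since $(1+\epsilon)\overline\mu\le(1+\epsilon)\mu$ for $\epsilon\in(0,1)$, we get the inclusion $\{\hat\mu_n>(1+\epsilon)\mu\}\subseteq\{\hat\mu_n>(1+\epsilon)\overline\mu\}=\{\hat\mu_n-\overline\mu>\epsilon\overline\mu\}$. This is the one and only place the downward-bias assumption is used, exactly as $\overline\mu\ge\mu$ is used once in the upper-bound argument. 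Next, because $\hat\mu_n=(1/n)\sum_{i=1}^n Z_i$ is an average of i.i.d.\ runs with mean $\overline\mu$, Chebyshev's inequality gives
\[
P\bigl(\hat\mu_n-\overline\mu>\epsilon\overline\mu\bigr)\le P\bigl(|\hat\mu_n-\overline\mu|\ge\epsilon\overline\mu\bigr)\le\frac{Var(\hat\mu_n)}{\epsilon^2\overline\mu^2}=\frac{Var(Z_i)}{n\epsilon^2\overline\mu^2}=\frac{RE}{n\epsilon^2},
\]
so, combining with the inclusion above, $P(\hat\mu_n-\mu>\epsilon\mu)\le RE/(n\epsilon^2)$. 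Choosing $n\ge RE/(\delta\epsilon^2)$ then yields $P(\hat\mu_n-\mu>\epsilon\mu)\le\delta$, which is precisely Definition~\ref{def: LB_relaxed_certificate}; since $RE=\tilde O(\log(1/\overline\mu))$ by hypothesis, this sample size is $\tilde O(\log(1/\overline\mu))$.

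The only point requiring care — and the one place the lower-bound statement is genuinely not a verbatim mirror of the upper-bound one — is the final bookkeeping step converting $n=\tilde O(\log(1/\overline\mu))$ into $n=\tilde O(\log(1/\mu))$ as written in the certificate. In the upper-bound case this is automatic because $\overline\mu\ge\mu$ forces $\log(1/\overline\mu)\le\log(1/\mu)$; here $\overline\mu\le\mu$, so one must invoke (or make explicit) that the inner-approximation bias does not deflate the probability by more than a polynomial-in-$\gamma$ factor, i.e.\ $\log(1/\overline\mu)=\Theta(\log(1/\mu))$, which holds for any reasonable inner approximation of $\mathcal S_\gamma$ and is in any case implicit in the regime where the hypothesis $RE=\tilde O(\log(1/\overline\mu))$ is itself meaningful. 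Modulo this remark there is no substantive obstacle: the conclusion follows from Markov/Chebyshev exactly as Proposition~\ref{certificate prop simple} does, and an entirely parallel two-stage extension (analogous to Proposition~\ref{prop:extend}) follows by conditioning on the Stage~1 sample.
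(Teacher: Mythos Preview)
Your approach is exactly the mirror of the paper's proof of Proposition~\ref{certificate prop simple}, which is precisely what the paper intends (it states Corollary~\ref{certificate prop simple_LB} without a separate proof, presenting it as an immediate analog). The inclusion step and the Chebyshev bound are identical in spirit and correct.

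Your remark about the final bookkeeping step is perceptive and worth highlighting: the paper's proof of Proposition~\ref{certificate prop simple} writes $\tilde O(\log(1/\overline\mu))=\tilde O(\log(1/\mu))$, which in the upper-bound case is justified because $\overline\mu\ge\mu$ forces $\log(1/\overline\mu)\le\log(1/\mu)$. In the lower-bound case the inequality reverses, so this step is \emph{not} automatic, and the paper does not address it. Your proposed resolution --- that one needs $\log(1/\overline\mu)=\Theta(\log(1/\mu))$, i.e., the inner approximation does not shrink the probability by more than a polynomial-in-$\gamma$ factor --- is the right implicit assumption and is consistent with how the paper uses this corollary downstream (Theorem~\ref{thm: LB_achieved_by_algo}, where $\overline{\mathcal S}_\gamma^{\hat\kappa}$ is assumed to satisfy the same dominating-point conditions as $\mathcal S_\gamma$). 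So you have in fact identified a small gap that the paper glosses over, and your handling of it is correct.
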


This motivates us to learn an inner approximation of the rare-event set in Stage 1 and then in Stage 2, we use IS as in Theorem \ref{general IS simplified} to estimate the probability of this inner approximation set. For the inner set approximation, like the outer approximation case, we use our Stage 1 samples $\{(\tilde X_i,  Y_i) \}_{i=1,\ldots,n_1}$ to construct an approximation set $\overline{\mathcal{S}}_{\gamma}$ that has zero false positive rate, i.e., 
\begin{equation}
P(X\in\overline{\mathcal{S}}_{\gamma},Y=0)=0.\label{eq: zero_FN}
\end{equation}
To make sure of (\ref{eq: zero_FN}), we again exploit the knowledge that the rare event set $\mathcal{S}_{\gamma}$ is orthogonally monotone.  Indeed, denote $T_1:=\{\tilde X_i : Y_i=1\}$ as the rare-event sampled points and for each point $x\in\mathbb{R}^d_+$, let $\mathcal Q(x):=\{x^{\prime} : x^{\prime}\geq x\}$. We construct $\mathcal J(T_1):=\cup_{x\in T_1}{\mathcal Q(x)}$ which serves as the ``upper orthogonal monotone hull" of $T_1$. The orthogonal monotonicity property of $\mathcal{S}_{\gamma}$ implies that $\mathcal J(T_1)\subset \mathcal{S}_{\gamma}$. Moreover, $\mathcal J(T_1)$ is the largest choice of $\overline{\mathcal{S}}_{\gamma}$ such that \eqref{eq: zero_FN} is guaranteed. Based on this observation, in parallel to Section \ref{sec:Deep-PrAE}, depending on how we construct the inner approximation to the rare-event set, we propose the following two approaches.

\textbf{Lazy-Learner IS (Lower Bound). }We now consider an estimator for $\mu$ where in Stage 1, we sample a constant $n_1$ i.i.d. random points from some density, say $q$. Then, we use the mixture IS depicted in Theorem 1 to estimate $P(X\in\mathcal J(T_1))$ in Stage 2. Since $\mathcal J(T_1)$ takes the form $\cup_{x\in T_1}{\mathcal Q(x)}$, it has a finite number of dominating points, which can be found by a sequential algorithm. But as explained in Section \ref{sec:Deep-PrAE}, this leads to a large number of mixture components that degrades the IS efficiency.

\textbf{Deep-Learning-Based IS (Lower Bound). }We train a neural network classifier, say $\hat g$, using all the Stage 1 samples $\{(\tilde X_i,Y_i)\}$, and obtain an approximate rare-event region $\overline{\mathcal S}_\gamma^\kappa=\{x:\hat g(x)\geq\kappa\}$, where $\kappa$ is say $1/2$. Then we adjust $\kappa$ minimally away from $1/2$, say to $\hat\kappa$, so that $\overline{\mathcal  S}_\gamma^{\hat\kappa}\subset\mathcal J(T_1)$, i.e., $\hat\kappa=\min\{\kappa\in\mathbb R: \overline{\mathcal  S}_\gamma^{\hat\kappa}\subset\mathcal J(T_1)\}$. Then $\overline{\mathcal  S}_\gamma^{\hat\kappa}$ is an inner approximation for $\mathcal S_\gamma$ (see Figure \ref{fig:illustration}(c), where $\hat\kappa=0.83$). Stage 1 in Algorithm \ref{algo: Deep-PrAE-LB} shows this procedure. With this, we can run mixture IS to estimate $P(X\in\overline{\mathcal  S}_\gamma^{\hat\kappa})$ in Stage 2.

\begin{algorithm}[h]
\KwIn{Black-box evaluator $I(\cdot\in\mathcal S_\gamma)$, initial Stage 1 samples $\{(\tilde X_i, Y_i) \}_{i=1,\ldots,n_1}$, Stage 2 sampling budget $n_2$, input distribution $N(\lambda,\Sigma)$.}
\KwOut{IS estimate $\hat\mu_n$.}

\nl \textbf{Stage 1 (Set Learning):}\\
\nl Train classifier with positive decision region $\overline{\mathcal  S}_\gamma^{\kappa}=\{x:\hat{g}(x) \geq\kappa\}$ using $\{(\tilde X_i, Y_i) \}_{i=1,\ldots,n_1}$;\\
\nl Replace $\kappa$ by $\hat\kappa=\min\{\kappa\in\mathbb R: \overline{\mathcal  S}_\gamma^{\hat\kappa}\subset \mathcal{J}(T_1)\}$;\\

\nl \textbf{Stage 2 (Mixture IS based on Searched dominating points):}\\
\nl The same as Stage 2 of Algorithm \ref{algo:stage1}.

    \caption{{\bf Deep-PrAE to estimate $\mu=P(X\in\mathcal S_\gamma)$ (lower bound).} 
    \label{algo: Deep-PrAE-LB}}
\end{algorithm}

As we can see, compared with Algorithm \ref{algo:stage1}, the only difference is how we adjust $\kappa$ in Stage 1. And similar to Theorem \ref{NN main}, we also have that Algorithm \ref{algo: Deep-PrAE-LB} attains the lower-bound relaxed efficiency certificate:

\begin{theorem}[Lower-bound relaxed efficiency certificate for deep-learning-based mixture IS]
Suppose $\mathcal S_\gamma$ is orthogonally monotone, and $\overline{\mathcal S}_\gamma^{\hat\kappa}$ satisfies the same conditions for $\mathcal S_{\gamma}$ in Theorem \ref{general IS simplified}. Then Algorithm \ref{algo: Deep-PrAE-LB} attains the lower-bound relaxed efficiency certificate by using a constant number of Stage 1 samples.\label{thm: LB_achieved_by_algo}
\end{theorem}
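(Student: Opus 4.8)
The plan is to mirror the proof of Theorem \ref{NN main}, interchanging the roles of outer/inner approximation and of upward/downward bias. Condition throughout on the Stage 1 sample $D_{n_1}=\{(\tilde X_i,Y_i)\}_{i=1,\ldots,n_1}$ and write $\overline\mu(D_{n_1}):=P(X\in\overline{\mathcal S}_\gamma^{\hat\kappa})$; by the change-of-measure identity the Stage 2 IS estimator is conditionally unbiased for this quantity, so $E[\hat\mu_n\mid D_{n_1}]=\overline\mu(D_{n_1})$. The first step is to verify the \emph{downward} bias $\overline\mu(D_{n_1})\leq\mu$ almost surely. By the choice of $\hat\kappa$ in Algorithm \ref{algo: Deep-PrAE-LB}, $\overline{\mathcal S}_\gamma^{\hat\kappa}\subset\mathcal J(T_1)=\cup_{x\in T_1}\mathcal Q(x)$; since each $x\in T_1$ lies in $\mathcal S_\gamma$ and $\mathcal Q(x)=\{x':x'\geq x\}$ contains only points coordinate-wise dominating $x$, the orthogonal monotonicity of $\mathcal S_\gamma$ (Definition \ref{OM def}) gives $\mathcal Q(x)\subset\mathcal S_\gamma$, hence $\overline{\mathcal S}_\gamma^{\hat\kappa}\subset\mathcal S_\gamma$ and $\overline\mu(D_{n_1})\leq\mu$. (The hypothesis that $\overline{\mathcal S}_\gamma^{\hat\kappa}$ satisfies the conditions of Theorem \ref{general IS simplified} forces it to be nonempty, hence $T_1\neq\emptyset$; in the complementary degenerate event $\hat\mu_n\equiv0$ and the lower-bound certificate is vacuous.)

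Second, I establish conditional efficiency of the Stage 2 IS. Since $\hat g$ is ReLU-activated, $\overline{\mathcal S}_\gamma^{\hat\kappa}=\{x:\hat g(x)\geq\hat\kappa\}$ is a finite union of polytopes, so the cutting-plane loop (identical to that of Algorithm \ref{algo:stage1}) terminates after finitely many iterations with a finite dominating set $\hat A_\gamma$: a finite union of polytopes has only finitely many dominating points in the sense of Definition \ref{dominating point def}, and each iteration removes the local cone of a newly found one. By hypothesis $\overline{\mathcal S}_\gamma^{\hat\kappa}$ satisfies the conditions on $\mathcal S_\gamma$ in Theorem \ref{general IS simplified} (in particular its dominating points have polynomial-in-$\gamma$ components and the interior non-degeneracy condition holds), so the mixture $\sum_{a\in\hat A_\gamma}(1/|\hat A_\gamma|)N(a,\Sigma)$ yields $RE(D_{n_1})=Var(Z_i\mid D_{n_1})/\overline\mu(D_{n_1})^2=\tilde O(\log(1/\overline\mu(D_{n_1})))$.

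Finally, I assemble these via the downward-biased, two-stage version of the certificate argument --- the lower-bound analog of Proposition \ref{prop:extend}, built from Corollary \ref{certificate prop simple_LB} exactly as Proposition \ref{prop:extend} is built from Proposition \ref{certificate prop simple}. The step I expect to be the main obstacle is checking that $n_2=\tilde O(\log(1/\overline\mu(D_{n_1})))$ is still $\tilde O(\log(1/\mu))$: here, unlike in the upper-bound case, the direction $\overline\mu(D_{n_1})\leq\mu$ does not help, so one must argue that the inner approximation is not too small. This follows from the polynomial-growth hypothesis: $\overline{\mathcal S}_\gamma^{\hat\kappa}$ is nonempty with dominating points bounded polynomially in $\gamma$, so a Laplace lower bound gives $\log(1/\overline\mu(D_{n_1}))=\mathrm{poly}(\gamma)$, which in the large-deviations regime (where $\log(1/\mu)$ itself grows at least like a positive power of $\gamma$) is $\tilde O(\log(1/\mu))$. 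With $n_1$ a constant and $n_2=\tilde O(\log(1/\mu))$, the lower-bound analog of Proposition \ref{prop:extend} then gives $P(\hat\mu_n-\mu>\epsilon\mu)\leq\delta$ with $n=n_1+n_2=\tilde O(\log(1/\mu))$, which is the asserted lower-bound relaxed efficiency certificate.
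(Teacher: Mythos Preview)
Your proposal is correct and follows essentially the same route the paper takes: the paper does not give a separate proof of this theorem, treating it as the direct analog of Theorem \ref{NN main} (invoke Theorem \ref{general IS simplified}/\ref{general IS} for the Stage 2 efficiency certificate on $\overline\mu(D_{n_1})$, then apply the lower-bound counterpart of Corollary \ref{relaxed prob}/Proposition \ref{prop:extend}). Your verification of the inner containment $\overline{\mathcal S}_\gamma^{\hat\kappa}\subset\mathcal J(T_1)\subset\mathcal S_\gamma$ from orthogonal monotonicity is exactly the argument stated in Appendix \ref{app:lowerbound}.

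One point worth noting: you correctly flag that the last step of Proposition \ref{prop:extend}, namely $\tilde O(\log(1/\overline\mu(D_{n_1})))=\tilde O(\log(1/\mu))$, does not follow from $\overline\mu(D_{n_1})\leq\mu$ alone, and you resolve it by invoking the polynomial-growth and interior non-degeneracy conditions on $\overline{\mathcal S}_\gamma^{\hat\kappa}$ (part of the hypotheses imported from Theorem \ref{general IS simplified}/\ref{general IS}) together with the standing large-deviations assumption that $\log(1/\mu)$ grows at least like a positive power of $\gamma$. The paper leaves this step implicit --- it simply declares the lower-bound results analogous --- so your argument here is more complete than what the paper writes down. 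Your reasoning is sound: the proof of Theorem \ref{general IS} gives $\log(1/\overline\mu(D_{n_1}))\sim (a^*-\lambda)^T\Sigma^{-1}(a^*-\lambda)/2$, which is polynomial in $\gamma$ and hence $\tilde O(\log(1/\mu))$ under the regime assumption.
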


Finally, we investigate the conservativeness of this bound, which is measured by the false negative rate $P(X\notin\overline{\mathcal{S}}_{\gamma}^{\hat{k}},Y=1)$.  Like in Section \ref{sec:Deep-PrAE}, we use ERM to train $\hat g$, i.e., $\hat g:=\text{argmin}_{g\in\mathcal{G}}\{ {R}_{n_1}(g):=\frac{1}{n_1}\sum_{i=1}^{n_1}\ell(g(\tilde X_{i}),Y_{i})\}$
where $\ell$ is a loss function and $\mathcal G$ is the considered hypothesis class. Let $g^*$ be the true risk minimizer as described in Section \ref{sec:Deep-PrAE}. For inner approximation, we let $\kappa^{*}:=\max_{x\in\mathcal{S}_{\gamma}^c}g^{*}(x)$ be
the true threshold associated with $g^{*}$ in obtaining the largest inner rare-event set approximation. Then we have the following result analogous to Theorem \ref{thm: set_ERM}.

\begin{theorem}[Lower-bound estimation conservativeness]\label{thm: inn_set_ERM} 
Consider $\overline{\mathcal S}_\gamma^{\hat\kappa}$ obtained in Algorithm \ref{algo: Deep-PrAE-LB} where $\hat g$ is trained from an ERM. 
Suppose  the density $q$ has bounded 
support $K\subset[0,M]^{d}$ and $0<q_{l}\leq q(x)\leq q_{u}$ for any $x\in K$. Also suppose there exists
a function $h$ such that for any $g\in\mathcal{G}$, $g(x)\leq\kappa$ implies $\ell(g(x),1)\geq h(\kappa)>0$.
(e.g., if $\ell$ is the squared loss, then $h(\kappa)$ could be chosen 
as $h(\kappa)=(1-\kappa)^{2}$). Then, with probability at least $1-\delta$,
\begin{equation*}
P_{X\sim q}\left(X\in\overline{\mathcal{S}}_{\gamma}^{\hat{\kappa}}\setminus\mathcal{S}_{\gamma}\right)
\leq \frac{R(g^{*})+2\sup_{g\in\mathcal{G}}\left|R_{n_1}(g)-R(g)\right|}{h(\kappa^{*}+t(\delta,n_1)\sqrt{d} \text{Lip}(g^{*})+\left\Vert \hat{g}-g^{*}\right\Vert _{\infty})}.
\end{equation*}
Here, $\text{Lip}(g^{*})$ is the Lipschitz parameter of 
$g^{*}$, and $t(\delta,n_1)=3\left(\frac{\log(n_1q_{l})+d\log M+\log\frac{1}{\delta}}{n_1q_{l}}\right)^{\frac{1}{d}}$.
\end{theorem}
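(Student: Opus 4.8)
The plan begins with the stated left-hand side $P_{X\sim q}(X\in\overline{\mathcal S}_\gamma^{\hat\kappa}\setminus\mathcal S_\gamma)$ itself. Algorithm~\ref{algo: Deep-PrAE-LB} fixes $\hat\kappa=\min\{\kappa:\overline{\mathcal S}_\gamma^{\kappa}\subset\mathcal J(T_1)\}$, so $\overline{\mathcal S}_\gamma^{\hat\kappa}\subset\mathcal J(T_1)$; and orthogonal monotonicity of $\mathcal S_\gamma$ gives $\mathcal J(T_1)=\bigcup_{x\in T_1}\mathcal Q(x)\subset\mathcal S_\gamma$ with probability one. Hence $\overline{\mathcal S}_\gamma^{\hat\kappa}\subset\mathcal S_\gamma$ almost surely, the event $\{X\in\overline{\mathcal S}_\gamma^{\hat\kappa}\setminus\mathcal S_\gamma\}$ is empty, and the left-hand side equals $0$, which is trivially dominated by the non-negative right-hand side. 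The quantitative content of the bound is therefore carried by the companion conservativeness gap for the inner approximation, the false-negative rate $P_{X\sim q}(X\in\mathcal S_\gamma\setminus\overline{\mathcal S}_\gamma^{\hat\kappa})$ singled out in the text preceding the statement; I would establish the displayed estimate for this quantity, arguing as a mirror image of the proof of Theorem~\ref{thm: set_ERM} with the two labels and the inner/outer roles interchanged.

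The numerator comes from the textbook ERM excess-risk inequality: because $\hat g$ minimizes $R_{n_1}$ over $\mathcal G$ and $g^*\in\mathcal G$, chaining $R(\hat g)\le R_{n_1}(\hat g)+\sup_{g}|R_{n_1}(g)-R(g)|\le R_{n_1}(g^*)+\sup_{g}|R_{n_1}(g)-R(g)|$ and then bounding $R_{n_1}(g^*)$ by $R(g^*)+\sup_{g}|R_{n_1}(g)-R(g)|$ yields $R(\hat g)\le R(g^*)+2\sup_{g\in\mathcal G}|R_{n_1}(g)-R(g)|$. The denominator requires the threshold-control lemma that, with probability at least $1-\delta$,
\[
\hat\kappa\le\kappa^*+t(\delta,n_1)\sqrt d\,\text{Lip}(g^*)+\|\hat g-g^*\|_\infty .
\]

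I would prove this lemma in two steps, and this is where I expect the real work to lie. First, a covering estimate: partition $[0,M]^d$ into cells of side of order $t(\delta,n_1)$, note each cell carries mass at least $q_l t^d$, and union-bound the chance that some cell is missed by the $n_1$ i.i.d.\ draws; solving the resulting inequality for the admissible cell side reproduces the explicit constant and exponent in $t(\delta,n_1)$, and guarantees with probability $\ge1-\delta$ that every axis-aligned box of side $t(\delta,n_1)$ inside $K$ contains a Stage~1 sample (the absorbed geometric constants being the reason for the factor $3$). Second, I combine this net with orthogonal monotonicity: if $x\in\mathcal S_\gamma$ satisfies $\mathrm{dist}(x,\mathcal S_\gamma^c)>t\sqrt d$, then the box $[x-t\mathbf 1,x]$ lies in $\mathcal S_\gamma$ (its lowest corner is within $t\sqrt d$ of $x$ and $\mathcal S_\gamma$ is up-closed), hence contains some $\tilde X_i\in T_1$ with $\tilde X_i\le x$, forcing $x\in\mathcal Q(\tilde X_i)\subset\mathcal J(T_1)$. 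Contrapositively, $\mathcal S_\gamma\setminus\mathcal J(T_1)$ is a shell of metric width $\le t\sqrt d$ about $\partial\mathcal S_\gamma$, so every $x\in\mathcal J(T_1)^c$ is within $t\sqrt d$ of $\mathcal S_\gamma^c$; bounding $g^*$ there by $\kappa^*=\max_{x\in\mathcal S_\gamma^c}g^*(x)$ plus $\text{Lip}(g^*)\,t\sqrt d$, then passing from $g^*$ to $\hat g$ through $\|\hat g-g^*\|_\infty$, controls $\hat\kappa=\sup_{x\in\mathcal J(T_1)^c}\hat g(x)$ as displayed.

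It remains to turn risk into probability. On the miss region $\mathcal S_\gamma\setminus\overline{\mathcal S}_\gamma^{\hat\kappa}=\{x\in\mathcal S_\gamma:\hat g(x)<\hat\kappa\}$ the true label is $1$ and $\hat g(x)\le\hat\kappa$, so the hypothesis on $h$ gives $\ell(\hat g(x),1)\ge h(\hat\kappa)$; restricting the expectation in $R(\hat g)$ to this region yields $R(\hat g)\ge h(\hat\kappa)\,P_{X\sim q}(X\in\mathcal S_\gamma\setminus\overline{\mathcal S}_\gamma^{\hat\kappa})$. Combining with the numerator bound and the threshold-control lemma, and using monotonicity of $h$ so that $h(\hat\kappa)\ge h(\kappa^*+t\sqrt d\,\text{Lip}(g^*)+\|\hat g-g^*\|_\infty)$, gives the claimed inequality. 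The principal obstacle is the threshold-control lemma: $\mathcal J(T_1)$ is defined through the coordinatewise domination order whereas the perturbations $\text{Lip}(g^*)$ and $\|\hat g-g^*\|_\infty$ act metrically, so the delicate point is reconciling order and metric — showing via orthogonal monotonicity and the box-covering net that the domination hull exhausts all but a thin metric shell of $\mathcal S_\gamma$ — while tracking the geometric constants absorbed into the factor $3$ and the $\sqrt d$ inside $t(\delta,n_1)$.
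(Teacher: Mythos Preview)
Your proposal is correct and follows the paper's intended route: the paper gives no standalone proof of this theorem but treats it as the direct mirror of Theorem~\ref{thm: set_ERM}, obtained by swapping labels $0/1$, $\mathcal H(T_0)/\mathcal J(T_1)$, outer/inner, and the direction of the threshold inequality --- which is exactly your Markov bound $R(\hat g)\ge h(\hat\kappa)\,P(\cdot)$, ERM chain for the numerator, and mirrored Lemma~\ref{lem: distance} for the threshold control $\hat\kappa\le\kappa^*+t\sqrt d\,\mathrm{Lip}(g^*)+\|\hat g-g^*\|_\infty$. You also correctly flag that the left-hand side as printed is trivially zero; the paper's preceding sentence makes clear the intended quantity is the false-negative rate $P_{X\sim q}(X\in\mathcal S_\gamma\setminus\overline{\mathcal S}_\gamma^{\hat\kappa})$.
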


\section{Cross Entropy and Adaptive Multilevel Splitting}\label{app:ce_ams}

We provide some details on the cross-entropy method and adaptive multilevel splitting (or subset simulation), and also discuss their challenges in black-box problems.

\paragraph{Cross Entropy.}
The cross-entropy (CE) method \citep{de2005tutorial,rubinstein2013cross} uses a sequential optimization approach to iteratively solve for the optimal parameter in a parametric class of IS distributions. The objective in this optimization sequence is to minimize the Kullback–Leibler divergence between the IS distribution and the zero-variance IS distribution (the latter is theoretically known to be the conditional distribution given the occurrence of the rare event, but is unimplementable as it requires knowing the rare-event probability itself). Specifically, assume we are interested in estimating $P(g(X)>\gamma)$ and a parametric class $p_\theta$ is considered. The cross-entropy method adaptively chooses $\gamma_1<\gamma_2<...<\gamma$. At each intermediate level $k$, we use the updated IS distribution $p_{\theta^*_k}$, designed for simulating $P(g(X)>\gamma_k)$, as the sampling distribution to draw samples of $X$ that sets up an empirical optimization, from which the next $\theta^*_{k+1}$ is obtained.

While flexible and easy to use, the efficiency of CE depends crucially on the expressiveness of the parametric class $p_\theta$ and the parameter convergence induced by the empirical optimization sequence. There are good approaches to determine the parametric classes (e.g., \citealt{botev2016semiparametric}), and also studies on the efficiency of IS distributions parametrized by empirical optimization \citep{tuffin2012probabilistic}. However, it is challenging to obtain an efficiency certificate for CE that requires iterative empirical optimization in the common form depicted above. Insufficiency on either the choice of the parametric class or the parameter convergence may lead to the undetectable under-estimation issue (e.g., as in Theorem~\ref{counterexample}). 

\paragraph{Adaptive Multilevel Splitting.}
Adaptive multilevel splitting (AMS) (or subset simulation) \citep{cerou2007adaptive,au2001estimation} decomposes the rare-event estimation problem into estimating a sequence of conditional probabilities. We adaptively choose a threshold sequence $\gamma_1<\gamma_2<...<\gamma_{K}=\gamma$. Then $P(g(x)>\gamma)$ can be rewritten as  $P(g(x)>\gamma)=P(g(x)>\gamma_1) \prod_{k=2}^{K} P(g(x)>\gamma_{k}|g(x)>\gamma_{k-1})$. AMS then aims to estimate $P(g(x)>\gamma_1)$ and $P(g(x)>\gamma_{k}|g(x)>\gamma_{k-1})$ for each intermediate level $k=2,...,K$. In standard implementation, these conditional probabilities are estimated using samples from $p(g(x)>\gamma_{k}|g(x)>\gamma_{k-1})$ through variants of the Metropolis-Hasting (MH) algorithms.

Theoretical studies have shown some nice properties of AMS, including unbiasedness and asymptotic normality (e.g., see \citealt{cerou2019adaptive}). However, the variance of the estimator depends on the mixing property of the proposal distribution in the MH steps \citep{cerou2016fluctuation}. Under ideal settings when direct sampling from $P(g(x)>\gamma_{k}|g(x)>\gamma_{k-1})$ is possible, it is shown that AMS is ``almost'' asymptotically optimal \citep{guyader2011simulation}. However, to our best knowledge, there is yet any study on provable efficiency of rare-event estimators with consideration of both AMS and MH sampling. In practice, to achieve a good performance, AMS requires a proposal distribution in the MH algorithm that can efficiently generate samples with low correlations.

\section{Further Details for Numerical Experiments}
\label{app:experiment}

This section provides more details on the two experimental examples in Section \ref{sec:numerics}.

\subsection{2D Example}
In the 2D example, the rarity parameter $\gamma$ governs the shape of the rare-event set $\mathcal S_\gamma=\{x:g(x)\geq\gamma\}$. We consider a linear combination of sigmoid functions
$g(x) = \|\theta_1\psi(x-c_1-\gamma) +\theta_2\psi(x-c_2-\gamma)+\theta_3\psi(x-c_3-\gamma)+\theta_4\psi(x-c_4-\gamma)\|$ where $\theta,c$ are some constant vectors and $\psi(x) = \frac{ \exp(x)}{1+\exp(x)}$. A point $x$ is a rare-event if $g(x) > \gamma$, where we take $\gamma=1.8$ in Section \ref{sec:numerics}.  We use $p = N([5, 5]^T, 0.25I_{2 \times 2})$. Figure \ref{fig:exp_2d_rare_set} shows the rare-event set and its approximations for various $\gamma$'s. The Deep-PrAE boundaries seem tight in most cases, attributed to both the sufficiently trained NN classifier and the bisection algorithm implemented for tuning $\hat \kappa$ after the NN training.

\begin{figure}[h]
    % subfig a
    \begin{subfigure}{\textwidth}
      \begin{subfigure}{.33\textwidth}
      \centering
        \includegraphics[width=\textwidth]{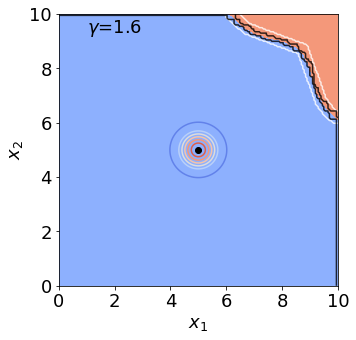}
        \caption{}
      \end{subfigure}%
       \begin{subfigure}{.33\textwidth}
      \centering
        \includegraphics[width=\textwidth]{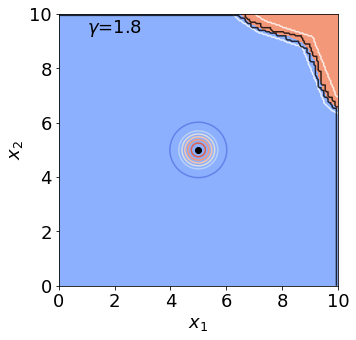}
        \caption{}
     \end{subfigure}
      \begin{subfigure}{.33\textwidth}
      \centering
        \includegraphics[width=\textwidth]{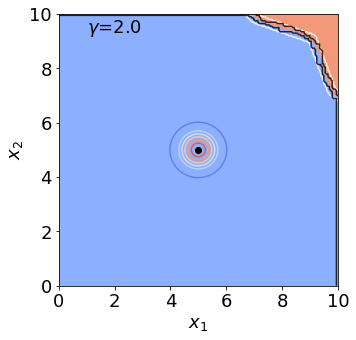}
        \caption{}
     \end{subfigure}
    \end{subfigure}
     
     % subfig b
    \begin{subfigure}{\textwidth}
      \begin{subfigure}{.33\textwidth}
      \centering
        \includegraphics[width=\textwidth]{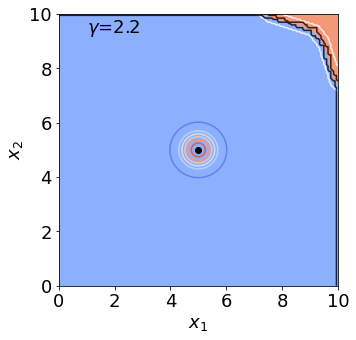}
        \caption{}
      \end{subfigure}%
       \begin{subfigure}{.33\textwidth}
      \centering
        \includegraphics[width=\textwidth]{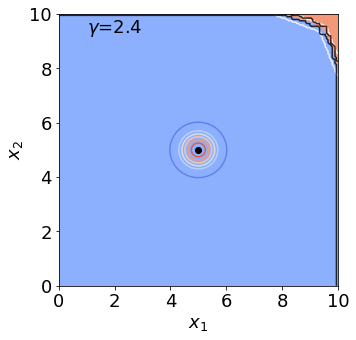}
        \caption{}
     \end{subfigure}
      \begin{subfigure}{.33\textwidth}
      \centering
        \includegraphics[width=\textwidth]{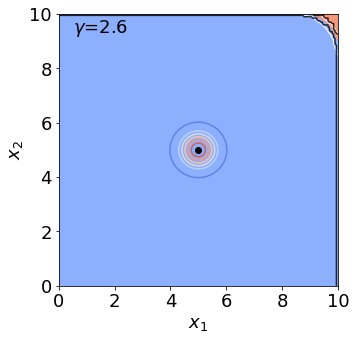}
        \caption{}
     \end{subfigure}
     
   \end{subfigure}
  
  \caption{The contour of $p$, rare-event set $\mathcal S_\gamma$ (reddish region), outer- and inner- approximation boundaries (black lines) and Deep-PrAE UB and LB decision boundaries (white lines) for some $\gamma$ values in the 2D example. }
  \label{fig:exp_2d_rare_set}
\end{figure}

\subsection{Intelligent Driving Safety Testing Example}
We provide more details about the self-driving example, which simulates the interaction of an autonomous vehicle (AV) model that follows a human-driven lead vehicle (LV). The AV is controlled by the Intelligent Driver Model (IDM), widely used for autonomy evaluation and microscopic transportation simulation, that maintains a safety distance while ensuring smooth ride and maximum efficiency. The states of the AV are $s_t = [x_{\text{follow}}, x_{\text{lead}}, v_{\text{follow}}, v_{\text{lead}}, a_{\text{follow}}, a_{\text{lead}}]_t$ which are the position, velocity and acceleration of the AV and LV respectively. The throttle input to the AV is defined as $u_t$ which has an affine relationship with the acceleration of the vehicle. Similarly, the randomized throttle of the LV is represented by $w_t$. With a car length of $L$, the distance between the LV and AV at time $t$ is given by $r_t=x_{\text{lead}, t}-x_{\text{follow}, t}-L$, which has to remain below the crash threshold for safety.We describe the dynamics in more detail below. Figure \ref{fig:sim_state} gives a pictorial overview of the interaction.

\begin{figure}[h]
  \centering
    \includegraphics[width=0.8\textwidth]{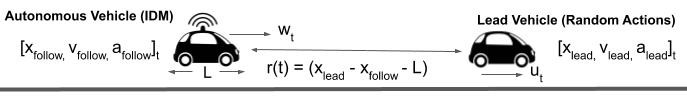}
    \caption{The states $s_t$ and input $u_t$ of the self-driving safety-testing simulation. $w_t$ denotes the throttle input of the AV from the IDM.
    }
    \label{fig:sim_state}
\end{figure} 

\paragraph{LV actions.}
The LV action contains human-driving uncertainty in decision-making modeled as Gaussian increments. For every $\Delta t$ time-steps, a Gaussian random variable is generated with the mean centered at the previous action $u_{t-\Delta t}$. We initialize $u_0 = 10$ (unitless) and $\Delta t = 4$ sec, which corresponds to zero initial acceleration and an acceleration change in the LV once every 4 seconds.

\paragraph{Intelligent Driver Model (IDM) for AV.} The IDM is governed by the following equations (the subscripts ``follow'' and ``lead'' defined in Figure \ref{fig:sim_state} is abbreviated to ``f'' and ``l'' for conciseness):
\begin{align*}
    \dot x_{f} &= v_{f}\\
    \dot x_{l} &= v_{l}\\
     \dot v_{f} & = \max \left(a(1-\left(\frac{v_{f}}{v_0}\right)^{\delta}-\left(\frac{s^{*}(v_{f},\Delta v_{f})}{s_{f}}\right)^{2}), -d\right)\\
    s^{*}(v_{f},\Delta v_{f}) &= s_0 + v_{f}\bar T + \frac{v_{f}\Delta v_{f}}{2\sqrt{ab}}\\
    s_{f} &= x_{l} - x_{f} - L\\
    \Delta v_{f} &= v_{f} - v_{l},
\end{align*}
The parameters are presented in Table \ref{idm-params}, and $v_l \propto u_t$ and $v_f \propto w_t$, . The randomness of LV actions $u_t$'s propagates into the system and affects all the simulation states $s_t$.
The IDM is governed by simple first-order kinematic equations for the position and velocity of the vehicles. The acceleration of the AV is the decision variable where it is defined by a sum of non-linear terms which dictate the ``free-road'' and ``interaction'' behaviors of the AV and LV. The acceleration of the AV is constructed in such a way that certain terms of the equations dominate when the LV is far away from the AV to influence its actions and other terms dominate when the LV is in close proximity to the AV.

\begin{table}
\caption{Parameters of the Intelligent Drivers Model (IDM)}
\label{idm-params}
\centering
\begin{tabular}{ll}
\toprule
Parameters                         & Value              \\
\midrule
Safety distance, $s_0$                                    & 2 m                         \\ 
Speed of AV in free traffic, $v_0$                        & 30 m/s                      \\ 
Maximum acceleration of AV, $a$                          & $2\gamma$ m/s$^2$    \\ 
Comfortable deceleration of AV, $b$                      & 1.67 m/s$^2$ \\ 
Maximum deceleration of AV, $d$                          & $2\gamma$ m/s$^2$    \\ 
Safe time headway, $\bar T$                                   & 1.5 s                       \\ 
Acceleration exponent parameter, $\delta$ & 4                           \\
Car length, $L$                                          & 4 m                         \\
\bottomrule
\end{tabular}
\end{table}

\paragraph{Rarity parameter $\gamma$.} Parameter $\gamma$ signifies the range invoked by the AV acceleration and deceleration pedals. Increasing $\gamma$ implies that the AV can have sudden high deceleration and hence avoid crash scenarios better and making crashes rarer. In contrast, decreasing $\gamma$ reduces the braking capability of the AV and more easily leads to crashes. For instance, $\gamma=1.0$ corresponds to AV actions in the range $[5, 15]$ or correspondingly $a_{\text{follow}, t} \in [-2, 2]$, and  $\gamma=2.0$ corresponds to $a_{\text{follow}, t} \in [-4, 4]$. Figure \ref{fig:lv_slices_gamma} shows the approximate rare-event set by randomly sampling points and evaluating the inclusion in the set, for the two cases of $\gamma=1.0$ and $\gamma=2.0$. In particular, we slice the 15-dimensional space onto pairs from five of the dimensions. In all plots, we see that the crash set (red) are monotone, thus supporting the use of our Deep-PrAE framework. Although the crash set is not located in the ``upper-right corner'', we can implement Deep-PrAE framework for such problems by simple re-orientation.

\begin{figure}[h]
    \begin{subfigure}{\textwidth}
       \begin{subfigure}{.49\textwidth}
      \centering
        \includegraphics[width=\textwidth]{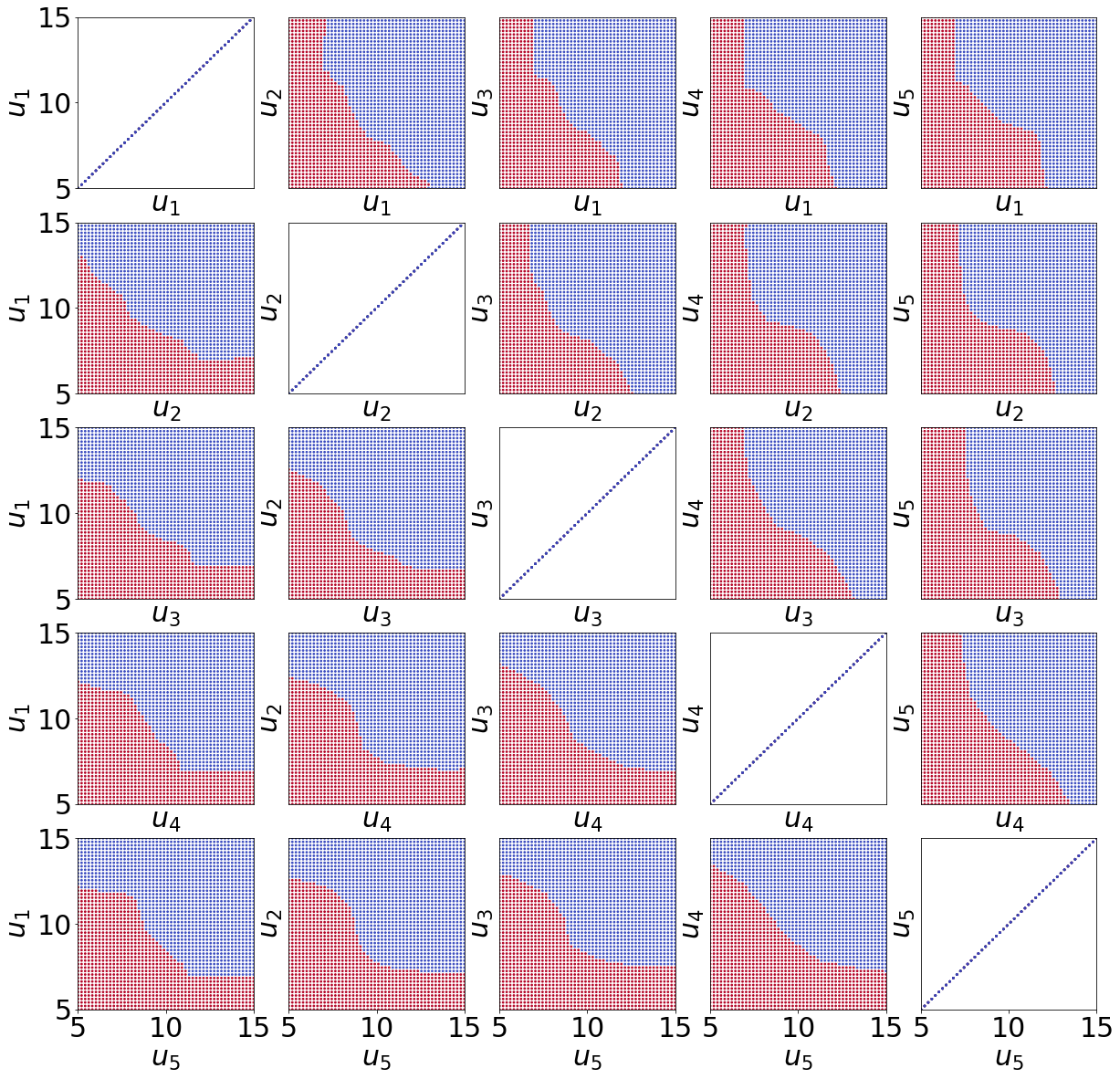}
        \caption{Case $\gamma=1.0$}
        \label{fig:lv_slices_gamma10}
     \end{subfigure}
      \begin{subfigure}{.49\textwidth}
      \centering
        \includegraphics[width=\textwidth]{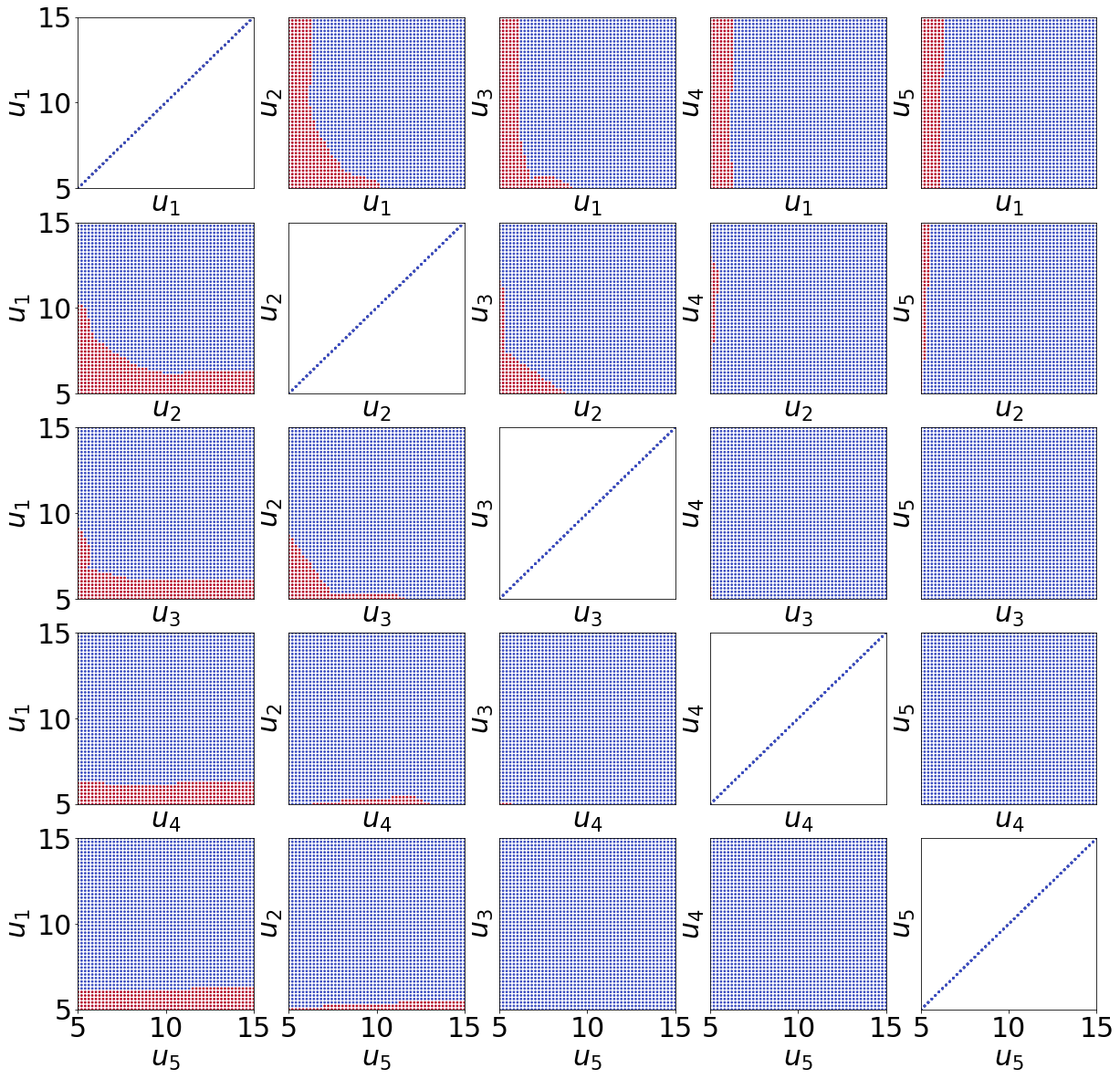}
        \caption{Case $\gamma=2.0$}
        \label{fig:lv_slices_gamma20}
      \end{subfigure}%
   \end{subfigure}
  \centering
    \caption{Slice of pairs of the first 5 dimensions of LV action space. For any $(u_i, u_{i^\prime})$ shown, $u_j, j \not \in \{i, i^\prime\}$ is fixed at a constant value. Blue dots = non-crash cases, red dots= crash cases.}
  
  \label{fig:lv_slices_gamma}
\end{figure}

\textbf{Sample trajectories. }Figure \ref{fig:carfollowtraj} shows two examples of sample trajectories, one successfully maintaining a safe distance, and the other leading to a crash. In Figure \ref{fig:carfollowtraj}(e)-(h) where we show the crash case, the AV maintains a safe distance behind the LV until the latter starts rapidly decelerating (Figure \ref{fig:carfollowtraj}(h)). Here the action corresponds to the throttle input that has an affine relationship with the acceleration of the vehicle. The LV ultimately decelerates at a rate that the AV cannot attain and its deceleration saturates after a point which leads to the crash.

\begin{figure}[h]
  \centering
      \begin{subfigure}{.22\textwidth}
      \centering
        \includegraphics[width=\textwidth]{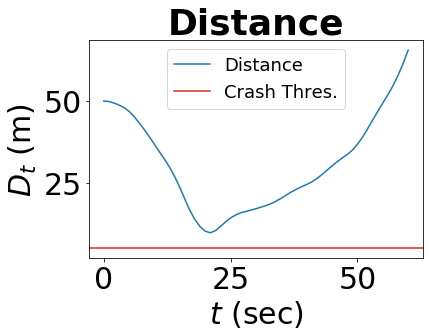}
        \caption{}
      \end{subfigure}
      \begin{subfigure}{.22\textwidth}
      \centering
        \includegraphics[width=\textwidth]{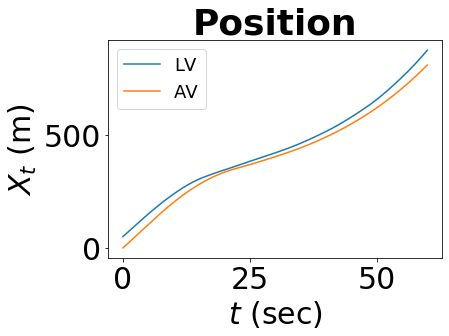}
        \caption{}
      \end{subfigure}
      \begin{subfigure}{.22\textwidth}
      \centering
        \includegraphics[width=\textwidth]{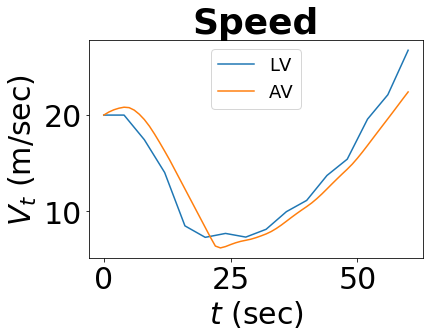}
        \caption{}
      \end{subfigure}
       \begin{subfigure}{.22\textwidth}
      \centering
        \includegraphics[width=\textwidth]{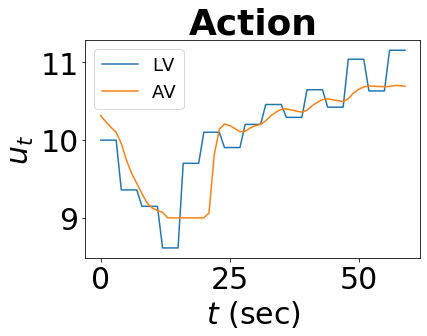}
        \caption{}
      \end{subfigure}
        \begin{subfigure}{.22\textwidth}
      \centering
        \includegraphics[width=\textwidth]{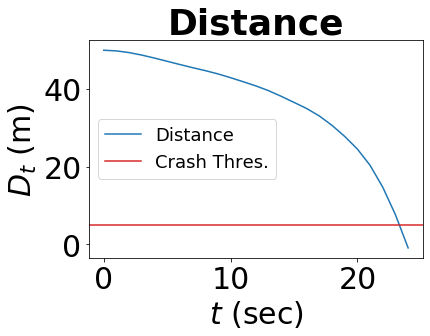}
        \caption{}
      \end{subfigure}
      \begin{subfigure}{.22\textwidth}
      \centering
        \includegraphics[width=\textwidth]{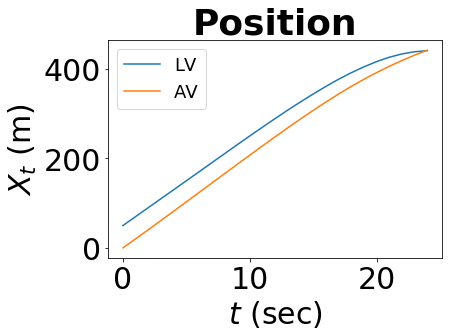}
        \caption{}
      \end{subfigure}
      \begin{subfigure}{.22\textwidth}
      \centering
        \includegraphics[width=\textwidth]{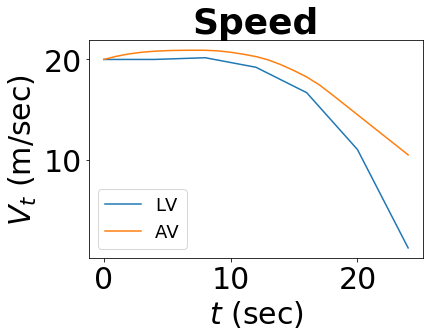}
        \caption{}
      \end{subfigure}
       \begin{subfigure}{.22\textwidth}
      \centering
        \includegraphics[width=\textwidth]{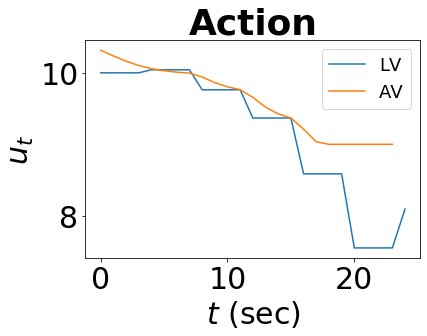}
        \caption{}
      \end{subfigure}
    \caption{Autonomous Car Following Experiment Trajectories. Figures (a) - (d) represent a simulation episode without a crash occurring where the AV follows the LV successfully at a safe distance. Figures (e) - (h) represents a simulation episode where crash occurs at $t=23$ seconds due to the repeated deceleration of the LV.}
  \label{fig:carfollowtraj}
\end{figure}

\subsection{Code}
The code and environment settings for the experiments are available at
\href{https://github.com/safeai-lab/Deep-PrAE/}{https://github.com/safeai-lab/Deep-PrAE/}.

\section{Proofs}\label{app:proofs}

\subsection{Proofs for the Dominating Point Methodologies}

\begin{proof}[Proof of Proposition \ref{naive Monte Carlo}]

Since $\mu$ is exponentially decaying in $\gamma$ while $n$ is polynomially growing in $\gamma$, we know that $\lim_{\gamma\rightarrow\infty}n\mu=0$. Since $n\hat{\mu}_n$ takes values in $\{0,1,\dots,n\}$, we get that $P(|\hat{\mu}_n-\mu|>\varepsilon\mu)=P(|n\hat{\mu}_n-n\mu|>\varepsilon n\mu)\rightarrow 1$ as $\gamma\rightarrow\infty$.
\end{proof}
\begin{proof}[Proof of Theorem \ref{general IS}]
Throughout this proof, we write $f(\gamma)\sim g(\gamma)$ if $f(\gamma)/g(\gamma)$ changes at most polynomially in $\gamma$. We know that 
$$
\tilde{E}[Z^2]=\sum_j \tilde{E}[I(X\in \mathcal S_\gamma^j)L^2(X)]\leq \sum_j e^{-(a_j-\lambda)^T\Sigma^{-1}(a_j-\lambda)}/\alpha_j\sim e^{-(a^*-\lambda)^T\Sigma^{-1}(a^*-\lambda)}.
$$
Denote $Y=B(X-\lambda)\sim N(0, B\Sigma B^T)$ and $s=B(a^*-\lambda)$. Define $\tilde{\varepsilon}=\varepsilon\min_{u:u^T(B\Sigma B^T)^{-1}u=1}\|u\|_{\infty}$. Then we also know that 
$$
\begin{aligned}
&\tilde{E}[I(X\in \mathcal S_\gamma)L(X)]\\
\geq & P(B(X-a^*)\geq 0,(X-a^*)^T\Sigma^{-1}(X-a^*)\leq \varepsilon^2)\\
= & P(Y\geq s,(Y-s)^T(B\Sigma B^T)^{-1}(Y-s)\leq \varepsilon^2)\\
= & \int_{y\geq s,(y-s)^T(B\Sigma B^T)^{-1}(y-s)\leq\varepsilon^2} (2\pi)^{-d/2}|B\Sigma B^T|^{-1/2}e^{-y^T(B\Sigma B^T)^{-1}y/2}\mathrm{d}y\\
\geq & (2\pi)^{-d/2}|B\Sigma B^T|^{-1/2}e^{-\varepsilon^2/2}e^{-(a^*-\lambda)^T\Sigma^{-1}(a^*-\lambda)/2}\\
&\int_{y\geq s,(y-s)^T(B\Sigma B^T)^{-1}(y-s)\leq\varepsilon^2}e^{-s^T(B\Sigma B^T)^{-1}(y-s)}\mathrm{d}y\\
\geq & (2\pi)^{-d/2}|B\Sigma B^T|^{-1/2}e^{-\varepsilon^2/2}e^{-(a^*-\lambda)^T\Sigma^{-1}(a^*-\lambda)/2}\prod_{i=1}^d \int_{0}^{\tilde{\varepsilon}} e^{-s^T(B\Sigma B^T)^{-1}e_iu_i}\mathrm{d}u_i\\
=&(2\pi)^{-d/2}|B\Sigma B^T|^{-1/2}e^{-\varepsilon^2/2}e^{-(a^*-\lambda)^T\Sigma^{-1}(a^*-\lambda)/2}\prod_{i=1}^d \frac{1-e^{-s^T(B\Sigma B^T)^{-1}e_i\tilde{\varepsilon}}}{s^T(B\Sigma B^T)^{-1}e_i}.
\end{aligned}
$$ 
Note that it is easy to verify that $s^T(B\Sigma B^T)^{-1}e_i\geq 0$. If $s^T(B\Sigma B^T)^{-1}e_i=0$, then we naturally use $\tilde{\varepsilon}$ to substitute $\frac{1-e^{-s^T(B\Sigma B^T)^{-1}e_i\tilde{\varepsilon}}}{s^T(B\Sigma B^T)^{-1}e_i}$. Since we have assumed that the components of $a^*$ are at most polynomially growing in $\gamma$, finally we get that 
$$
\tilde{E}[I(X\in \mathcal S_\gamma)L(X)]\sim  e^{-(a^*-\lambda)^T\Sigma^{-1}(a^*-\lambda)/2}
$$
and hence $\tilde{E}[Z^2]/\tilde{E}[Z]^2$ is at most polynomially growing in $\gamma$.

\end{proof}

\begin{proof}[Proof of Theorem \ref{counterexample}]
We know that $\tilde{E}[Z]=\bar{\Phi}(\gamma)+\bar{\Phi}(k\gamma)$. Moreover, 
    $$
    \tilde{E}[Z^2]=e^{\gamma^2}(\bar{\Phi}(2\gamma)+\bar{\Phi}((k-1)\gamma)).
    $$
    If $0<k\leq 1$, then $\tilde{E}[Z]=O\left( e^{-k^2\gamma^2/2}/\gamma\right)$ and $\tilde{E}[Z^2]=O\left( e^{\gamma^2}\right)$ as $\gamma\rightarrow\infty$. If $1<k<3$, then $\tilde{E}[Z]=O\left( e^{-\gamma^2/2}/\gamma\right)$ and $\tilde{E}[Z^2]=O\left( e^{(1-(k-1)^2/2)\gamma^2}/\gamma\right)$ as $\gamma\rightarrow\infty$. In both cases, we get that $\tilde{E}[Z^2]/\tilde{E}[Z]^2$ grows exponentially in $\gamma$. On the other hand, we know that 
    $$
    \begin{aligned}
    &\tilde{P}\left(\left|\frac{1}{n}\sum_i Z_i-\bar{\Phi}(\gamma)\right|>\varepsilon\bar{\Phi}(\gamma)\right)\\
    \leq& \tilde{P}(\exists i:X_i\leq -k\gamma)+\tilde{P}\left(\left|\frac{1}{n}\sum_i I(X_i\geq\gamma)e^{\gamma^2/2-\gamma X_i}-\bar{\Phi}(\gamma)\right|>\varepsilon\bar{\Phi}(\gamma)\right).
    \end{aligned}
    $$
    Clearly $\tilde{P}(\exists i:X_i\leq -k\gamma)=1-(1-\bar{\Phi}((k+1)\gamma))^n=O\left( n\bar{\Phi}((k+1)\gamma)\right)$, which is exponentially decreasing in $\gamma$ as $n$ is polynomial in $\gamma$. Moreover, by Chebyshev's inequality,
    $$
    \begin{aligned}
    &\tilde{P}\left(\left|\frac{1}{n}\sum_i I(X_i\geq\gamma)e^{\gamma^2/2-\gamma X_i}-\bar{\Phi}(\gamma)\right|>\varepsilon\bar{\Phi}(\gamma)\right)\\
    \leq & \frac{\tilde{E}[I(X_i\geq\gamma)e^{\gamma^2-2\gamma X_i}]}{n\varepsilon^2\bar{\Phi}^2(\gamma)}=\frac{e^{\gamma^2}\bar{\Phi}(2\gamma)}{n\varepsilon^2\bar{\Phi}^2(\gamma)}=O\left( \frac{\gamma}{n\varepsilon^2}\right).
    \end{aligned}
    $$
    Thus $P(|\hat{\mu}_n-\bar{\Phi}(\gamma)|>\varepsilon\bar{\Phi}(\gamma))=O\left(\frac{\gamma}{n\varepsilon^2}\right)$. Moreover, we know that $P(\exists i:Z_i>0)\geq 1-1/2^n$ and if $Z_i>0$ for some $i$, then we have that 
    $$
    \frac{\sum_{i}Z_i^2/n}{(\sum_{i}Z_i/n)^2}\leq n^2.
    $$
\end{proof}

\subsection{Proofs for the Relaxed Efficiency Certificate}

\begin{proof}[Proof of Proposition \ref{certificate prop simple}]
We have
$$P(\hat\mu_n-\mu<-\epsilon\mu)\leq P(\hat\mu_n-\overline\mu<-\epsilon\overline\mu)$$
since $\overline\mu\geq\mu$ and $1-\epsilon>0$. Note that the Markov inequality gives
$$P(\hat\mu_n-\overline\mu<-\epsilon\overline\mu)\leq\frac{\widetilde{Var}(Z_i)}{n\epsilon^2\overline\mu^2}$$
so that 
$$n\geq\frac{\widetilde{Var}(Z_i)}{\delta\epsilon^2\overline\mu^2}=\frac{RE}{\delta\epsilon^2}=\tilde O\left(\log\frac{1}{\overline\mu}\right)=\tilde O\left(\log\frac{1}{\mu}\right)$$
achieves the relaxed efficiency certificate.
\end{proof}

\begin{proof}[Proof of Proposition \ref{prop:extend}]
The proof follows from that of Proposition \ref{certificate prop simple} with a conditioning on $D_{n_1}$. We have
\begin{align*}
P(\hat\mu_n-\mu<-\epsilon\mu|D_{n_1})&\leq P(\hat\mu_n-\overline\mu(D_{n_1})<-\epsilon\overline\mu(D_{n_1})|D_{n_1})
\end{align*}
since $\overline\mu(D_{n_1})\geq\mu$ almost surely and $1-\epsilon>0$. Note that the Markov inequality gives
$$P(\hat\mu_n-\overline\mu(D_{n_1})<-\epsilon\overline\mu(D_{n_1})|D_{n_1})\leq\frac{Var(Z_i|D_{n_1})}{n_2\epsilon^2\overline\mu(D_{n_1})^2}$$
so that 
$$n_2\geq\frac{Var(Z_i|D_{n_1})}{\delta\epsilon^2\overline\mu(D_{n_1})^2}=\frac{RE(D_{n_1})}{\delta\epsilon^2}=\tilde O\left(\log\left(\frac{1}{\overline\mu(D_{n_1})}\right)\right)=\tilde O\left(\log\frac{1}{\mu}\right)$$
almost surely. Thus,
$$n=n_1+n_2\geq\tilde O\left(\log\frac{1}{\mu}\right)$$
achieves the relaxed efficiency certificate.

\end{proof}

\begin{proof}[Proof of Corollary \ref{relaxed prob}]
Follows directly from Proposition \ref{prop:extend}, since $\overline{\mathcal S}_\gamma\supset\mathcal S_\gamma$ implies $\overline\mu(D_{n_1})\geq\mu$ almost surely. 
\end{proof}

\begin{proof}[Proof of Theorem \ref{NN main}]
 We have assumed that $\overline{\mathcal S}_\gamma^{\hat\kappa}$ satisfies the assumptions for $\mathcal S_\gamma$ in Theorem \ref{general IS}. Then following the proof of Theorem \ref{general IS}, we obtain the efficiency certificate for the IS estimator in estimating its mean. Theorem \ref{NN main} is then proved by directly applying Corollary \ref{relaxed prob}.
\end{proof}

\subsection{Proofs for Conservativeness}

Recall that $T_0=\{\tilde{X}_i:Y_i=0\}$ where the samples are generated as in Algorithm \ref{algo:stage1}. By some combinitorial argument, we can prove the following lemma which
says that with high probability, each point in  $\mathcal{S}_{\gamma}^c$ that has sufficient distance
to its boundary could be covered by $\mathcal{H}(T_{0})$. 
\begin{lemma} Suppose that the density $q$ has bounded support $K\subset[0,M]^{d}$,
and for any $x\in K$, suppose that $0<q_{l}\leq q(x)\leq q_{u}$.
Define $B_{t}:=\{x\in\mathcal{S}_{\gamma}^{c}:x+t\mathbf{1}_{d\times1}\in\mathcal{S}_{\gamma}^{c}\}$.
Then with probability at least $1-\delta$, we have that $B_{t(\delta,n_1)}\subset\mathcal{H}(T_{0})$.
Here $t(\delta,n_1)=3\left(\frac{\log(n_1q_{l})+d\log{M}+\log\frac{1}{\delta}}{n_1q_{l}}\right)^{\frac{1}{d}}.$\label{lem: distance}
\end{lemma}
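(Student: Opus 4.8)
The plan is to reduce the set-inclusion claim to a covering event over a fine grid and then apply a union bound. The starting observation is that, by Definition \ref{OM hull def}, a point $x\in\mathbb R_+^d$ lies in $\mathcal H(T_0)$ exactly when some sampled point $\tilde X_i$ with $Y_i=0$ dominates it coordinatewise, i.e.\ $\tilde X_i\ge x$. Combined with the orthogonal monotonicity of $\mathcal S_\gamma$ (Definition \ref{OM def}), this yields a crucial simplification: if $x\in\mathcal S_\gamma^c$ and $x+t\mathbf 1\in\mathcal S_\gamma^c$, then \emph{any} Stage-1 sample landing in the axis-aligned box $[x,\,x+t\mathbf 1]$ is automatically a non-rare-event sample --- since such a sample is $\le x+t\mathbf 1\notin\mathcal S_\gamma$, monotonicity forces it outside $\mathcal S_\gamma$ as well --- and it dominates $x$, hence it certifies $x\in\mathcal H(T_0)$. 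So it suffices to show that, with probability at least $1-\delta$, every box $[x,x+t\mathbf 1]$ with $x\in B_t$ contains at least one Stage-1 sample.

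Next I would discretize: partition $[0,M]^d$ into a grid of cubes of side length $\ell:=t(\delta,n_1)/3$. A short geometric check shows that whenever $\ell\le t/2$ --- which holds by construction --- every box of side $t$ contained in $[0,M]^d$ contains a full grid cube, because for each coordinate $j$ the interval $[x_j,\,x_j+t-\ell]$ has length $t-\ell\ge\ell$ and thus contains an integer multiple of $\ell$. Therefore the desired covering event is implied by the event that every grid cube contains at least one sample.

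The probabilistic step is a union bound over the at most $(M/\ell)^d$ grid cubes. For a cube $C$ of volume $\ell^d$, the density lower bound gives $q(C)\ge q_l\ell^d$, so the probability that none of the $n_1$ i.i.d.\ samples lands in $C$ is at most $(1-q_l\ell^d)^{n_1}\le e^{-n_1 q_l\ell^d}$; hence the probability that some cube is empty is at most $(M/\ell)^d e^{-n_1 q_l \ell^d}$. Substituting $\ell=t(\delta,n_1)/3$ gives the identity $n_1 q_l\ell^d=\log(n_1 q_l)+d\log M+\log(1/\delta)$, and the bound telescopes to $\delta/(n_1 q_l\ell^d)\le\delta$ (using the mild condition $n_1 q_l\ell^d\ge 1$, which the form of $t(\delta,n_1)$ makes essentially automatic). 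Taking complements yields $B_{t(\delta,n_1)}\subset\mathcal H(T_0)$ with probability at least $1-\delta$.

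The main delicate point is the interaction with the boundary of the support: a grid cube not entirely inside $K$ may carry $q$-mass strictly below $q_l\ell^d$ (possibly zero), so the per-cube no-sample estimate degrades there, and a point $x\in B_t$ near $\partial[0,M]^d$ has its forward box $[x,x+t\mathbf 1]$ stick out of $[0,M]^d$ where $\mathcal H(T_0)$ cannot reach. I would handle this by noting that $\mathcal H(T_0)\subseteq[0,M]^d$ always, so the inclusion need only be argued inside $[0,M]^d$ (equivalently, one may take $K=[0,M]^d$ without loss for this step, or restrict $B_t$ to the relevant inner region), and the resulting thin outer layer of volume $O(M^{d-1}t)$ is absorbed when the lemma is used, e.g.\ in the proof of Theorem \ref{thm: set_OMhull}. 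The only other care needed is checking that the constant $3$ in $\ell=t/3$ is simultaneously compatible with the ``full-cube-inside-the-box'' requirement ($\ell\le t/2$) and with making the union-bound exponent reproduce exactly the stated $t(\delta,n_1)$; the choice $\ell=t/3$ does both comfortably.
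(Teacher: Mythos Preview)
Your argument is correct (modulo the same boundary caveat the paper itself glosses over) but it takes a genuinely different route from the paper's proof. The paper does \emph{not} grid all of $[0,M]^d$; instead it slices $[0,M]^d$ into $(M/\delta')^{d-1}$ columns by partitioning only the first $d-1$ coordinates, and in each column $Z_j$ that meets $B_{2\delta'}$ it picks a single near-maximal point $p_j\in B_{\delta'}$ and shows $B_{3\delta'}\cap Z_j\subset\mathcal R(p_j)$. It then argues that if a sample lands in $G_j=\{x\in\mathcal S_\gamma^c:x\ge p_j\}$ (which contains a cube of side $\delta'$), that single sample covers the whole column, so the union bound is over $(M/\delta')^{d-1}$ regions rather than your $(M/\ell)^d$ cubes. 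Your approach is more elementary---no ``highest point'' construction, just a direct box-contains-a-grid-cube pigeonhole---and pleasantly it reproduces the same $t(\delta,n_1)$ because the extra factor $M/\ell$ in your union bound is absorbed by the $\log(n_1q_l)$ slack in the exponent (indeed, with $\ell=t/2$ your argument would even yield the sharper constant $2$ in place of $3$). The trade-off is robustness to the support assumption: you demand that \emph{every} grid cube in $[0,M]^d$ receive a sample, which has probability zero whenever $K\subsetneq[0,M]^d$, whereas the paper only needs samples near the boundary of $\mathcal S_\gamma$; your suggested fix of taking $K=[0,M]^d$ is a hypothesis, not a deduction, so the hand-wave about absorbing the outer layer in downstream uses is really a modification of the lemma rather than a proof of it as stated. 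That said, the paper's own proof tacitly assumes the cubes $[p_j,p_j+\delta'\mathbf 1]$ lie in $K$, so it carries a milder version of the same gap.
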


\begin{proof} The basic idea is to construct a finite number of regions,
such that when there is at least one sample point in each of these
regions, we would have that $B_{t}\subset\mathcal{H}(T_{0})$. Then
we could give a lower bound to the probability of $B_{t}\subset\mathcal{H}(T_{0})$
in terms of the number of regions and the volume of each of these
regions.

By dividing the first $d-1$ coordinates into $\frac{M}{\delta}$
equal parts, we partition the region $[0,M]^{d}$ into rectangles,
each with side length $\delta$, except for the $d-$th dimension
(the $\delta$ here is not exactly the $\delta$ in the statement
of the lemma, since we will do a change of variable in the last step).
To be more precise, the rectangles are given by 
\[
Z_{j}=\left(\prod_{i=1}^{d-1}[(j_{i}-1)\delta,j_{i}\delta]\right)\times[0,M].
\]
Here $j\in J$ and $J$ is defined by 
\[
J:=\{j=(j_{1},\cdots,j_{d-1}),j_{i}=1,2,\cdots,\frac{M}{\delta}\}.
\]
Denote by $J_{0}$ the set which consists of $j\in J$ such that there
exist a point in $B_{2\delta}$ whose first $d-1$ coordinates are
$j_{1}\delta,j_{2}\delta,\cdots,j_{d-1}\delta$ respectively, i.e.,
$J_{0}=\left\{ j\in J:B_{2\delta}\cap\left(\left(\prod_{i=1}^{d-1}\{j_{i}\delta\}\right)\times[0,M]\right)\neq\emptyset\right\} .$
For all $j\in J_{0}$, let $p_{j}$ be the point such that

i) $p_{j}\in B_{\delta}$

ii) The first $d-1$ coordinates of $p_{j}$ are $j_{1}\delta,j_{2}\delta,\cdots,j_{d-1}\delta$
respectively

iii) $p_{j}$ has $d-$th coordinate larger than $-\delta+\sup_{p\text{ satisfies i),ii)}}\left(d\text{-th coordinate of \ensuremath{p}}\right)$.

From the definition of $J_{0}$ and the fact that $B_{\delta}\supset B_{2\delta}$,
$p_{j}$ is guaranteed to exist. We claim that $B_{2\delta}\cap Z_{j}\subset\mathcal{R}(p_{j})$,
where $\mathcal{R}({p_j})$ is the rectangle that contains 0 and $p_j$ as two of its corners. Clearly, from the definition of $Z_{j}$, for
any point $x\in B_{2\delta}\cap Z_{j}$, its first $d-1$ coordinates
are smaller than $j_{1}\delta,j_{2}\delta,\cdots,j_{d-1}\delta$ respectively.
For the $d-$th coordinate, suppose on the contrary that there exists
$x\in B_{2\delta}\cap Z_{j}$ with $d-$th coordinate greater than
the $d-$th coordinate of $p_{j}$. Since $x\in Z_{j}$, the first
$d-1$ coordinates of $x$ are at least $(j_{1}-1)\delta,(j_{2}-1)\delta,\cdots,(j_{d-1}-1)\delta$,
so we have that $x+\delta\mathbf{1}_{d\times1}\geq p_{j}+\delta e_{d}$.
Since $x\in B_{2\delta}$, we know that $x+2\delta\mathbf{1}_{d\times1}\in\mathcal{S}_{\gamma}^{c}$.
Hence by the previous inequality and the orthogonal monotonicity of
$\mathcal{S}_{\gamma}$, $p_{j}+\delta e_{d}+\delta\mathbf{1}_{d\times1}\in \mathcal{S}_{\gamma}^c$.
By definition of $B_{\delta}$, this implies $p_{j}+\delta e_{d}\in B_{\delta}$. This contradicts iii) in
the definition of $p_{j}$. By contradiction, we have shown that each
point in $B_{2\delta}\cap Z_{j}$ has $d-$th coordinate smaller than
the $d-$th coordinate of $p_{j}$. So the claim that $B_{2\delta}\cap Z_{j}\subset\mathcal{R}(p_{j})$
for any $j\in J_0$ is proved.

Then we consider those $j$ such that $j\in J-J_{0}$.
For any point $x\in Z_{j}$, the first $d-1$ coordinates
of $x+\delta\mathbf{1}_{d\times1}$ are at least $j_{1}\delta,j_{2}\delta,\cdots,j_{d-1}\delta$
respectively. Since $j\notin J_{0}$,  we have that $x+\delta\mathbf{1}_{d\times1}\notin B_{2\delta}$.
This implies $x+3\delta\mathbf{1}_{d\times1}\notin \mathcal{S}_{\gamma}^c$, or $x\notin B_{3\delta}$.
So we have shown that for any $j\notin J_{0}$, $B_{3\delta}\cap Z_{j}=\emptyset$.
This implies $B_{3\delta}$ has a partition given by $B_{3\delta}=\cup_{j\in J}\left(B_{3\delta}\cap Z_{j}\right)=\cup_{j\in J_{0}}\left(B_{3\delta}\cap Z_{j}\right)$.
Notice that $B_{3\delta}\subset B_{2\delta}$, from the result in
the preceding paragraph, we conclude that $B_{3\delta}\subset\cup_{j\in J_{0}}\mathcal{R}(p_{j})$.

For each $j\in J_{0}$ and the constructed $p_{j}$, consider the
region 
\[
G_{j}:=\{x\in S_{\gamma}^{c}:x\geq p_{j}\}.
\]
Observe that, if there exists a sample point in $T_0$ that lies in $G_{j}$,
then we have $p_{j}\subset\mathcal{H}(T_{0})$ which implies $\mathcal{R}(p_{j})\subset\mathcal{H}(T_{0})$.
Since $p_{j}\in B_{\delta}$ and $\mathcal{S}_{\gamma}$ is orthogonally monotone, we have that $G_j$ contains the rectangle which contains $p_j$ and $p_j+\delta \mathbf{1}_{d\times 1}$ as two of its corners, so $\text{Vol}(G_{j})\ge\delta^{d}$.
Hence the probability that $\mathcal{R}(p_{j})\subset\mathcal{H}(T_{0})$
has a lower bound given by 
\[
P(\mathcal{R}(p_{j})\subset\mathcal{H}(T_{0}))\geq P(T_0 \cap G_j \neq \emptyset)\geq1-\left(1-\delta^{d}q_{l}\right)^{n_1}\geq1-e^{-n_1q_{l}\delta^{d}}.
\]
Notice that $\left|J_{0}\right|\leq\left(\frac{M}{\delta}\right)^{d-1}$,
by union bound we have that 
\[
P(\cup_{j\in J_{0}}\mathcal{R}(p_{j})\subset\mathcal{H}(T_{0}))\geq1-\frac{M^{d-1}}{\delta^{d-1}}e^{-n_1q_{l}\delta^{d}}.
\]
Since we have shown that $B_{3\delta}\subset\cup_{j\in J_{0}}\mathcal{R}(p_{j})$,
this implies 
\[
P(B_{3\delta}\subset\mathcal{H}(T_{0}))\geq1-\frac{M^{d-1}}{\delta^{d-1}}e^{-n_1q_{l}\delta^{d}}.
\]
Based on this inequality, it is not hard to check that for $t(\delta,n_1)=3\left(\frac{\log(n_1q_{l})+d\log{M}+\log\frac{1}{\delta}}{n_1q_{l}}\right)^{\frac{1}{d}}$, we have that $P(B_{t(\delta)}\subset\mathcal{H}(T_0))\geq 1-\delta$. 
\end{proof}

\begin{proof}[Proof of Theorem \ref{thm: set_OMhull}]
First, we show the inequality in the theorem, i.e., $P_{X\sim q}(X\in\mathcal{H}(T_0)^c\backslash\mathcal{S}_{\gamma})\leq M^{d-1}q_u\left(\frac{\sqrt{d}}{2}\right)^{d-1}w_{d-1}t(\delta,n_1)$. It suffices to show that with probability at least $1-\delta$, $\text{Vol}\left(\mathcal{H}(T_{0})^{c}\backslash\mathcal{S}_{\gamma}\right)\leq M^{d-1}\left(\frac{\sqrt{d}}{2}\right)^{d-1}w_{d-1}t(\delta,n_1)$,
or equivalently $\text{Vol}(\mathcal{S}_{\gamma}^{c}\backslash\mathcal{H}(T_{0}))\leq M^{d-1}\left(\frac{\sqrt{d}}{2}\right)^{d-1}w_{d-1}t(\delta,n_1)$.
Since by lemma \ref{lem: distance} we have that $B_{t(\delta,n_1)}\subset\mathcal{H}(T_{0})$
with probability at least $1-\delta$, it suffices to show that $\text{Vol}(\mathcal{S}_{\gamma}^{c}\backslash B_{t(\delta,n_1)})\leq M^{d-1}\left(\frac{\sqrt{d}}{2}\right)^{d-1}w_{d-1}t(\delta,n_1)$.
This latter inequality actually follows from the definition of $B_{t(\delta,n_1)}$
and some geometric argument. Indeed, by definition of $B_{t(\delta,n_1)}$,
for each $x\in\mathcal{S}_{\gamma}^{c}\backslash B_{t(\delta,n_1)}$,
$x$ belongs to the area which is obtained by moving the boundary
of $\mathcal{S}_{\gamma}$ in direction $-\frac{\mathbf{1}_{d\times1}}{\sqrt{d}}$
for a distance of $t(\delta,n_1)\sqrt{d}$. So the volume of $\mathcal{S}_{\gamma}^{c}\backslash B_{t(\delta,n_1)}$
is bounded by 
\begin{align*}
 & t(\delta,n_1)\sqrt{d}\times\text{Vol}_{d-1}(\text{projection of the boundary of \ensuremath{S_{0}} in direction}\ \mathbf{1}_{d\times1})\\
\leq & t(\delta,n_1)\sqrt{d}\times\text{Vol}_{d-1}(\text{projection of \ensuremath{[0,M]^{d}} in direction}\ \mathbf{1}_{d\times1})
\end{align*}
Here $\text{Vol}_{d-1}$ means computing volume in the $d-1$ dimensional
space. Notice that $[0,M]^{d}$ is contained in a ball with radius
$\frac{M\sqrt{d}}{2}$, we have that 
\[
\text{Vol}_{d-1}(\text{projection of \ensuremath{[0,M]^{d}} in direction}\ \mathbf{1}_{d\times1})\leq M^{d-1}\left(\frac{\sqrt{d}}{2}\right)^{d-1}w_{d-1}.
\]
Combining the preceding two inequalities, we have proved the inequality in the theorem. Next we show the equality in the theorem. Indeed, when $d$ is large, we have asymptotic formula
$w_{d}=\frac{1}{\sqrt{d\pi}}\left(\frac{2\pi e}{d}\right)^{\frac{d}{2}}(1+O(d^{-1}))$.
Plugging this into the RHS above, we will obtain the asymptotic bound
as stated in the theorem. \end{proof}
 
\begin{proof}[Proof of Theorem \ref{thm: set_ERM}]
By Markov inequality and the definition of $h$,$\bar{\mathcal{S}}_{\gamma}^{\hat{\kappa}}$, we
know that 
\begin{equation}
P_{X\sim q}\left(X\in\bar{\mathcal{S}}_{\gamma}^{\hat{\kappa}},X\in\mathcal{S}_{\gamma}^{c}\right)= P_{X\sim q}(\hat{g}(X)\geq\hat\kappa,X\in{\mathcal{S}_{\gamma}^c})\leq\frac{R(\hat{g})}{h(\hat{\kappa})}.\label{eq: Markov}
\end{equation}
We will compare the numerator and denominator of the RHS of (\ref{eq: Markov}) with their counterparts for the true minimizer $g^*$. For the numerator, since $\hat{g}$ is the empirical risk minimizer, we have that 
\begin{align*}
R(\hat{g}) & \leq R_{n_1}(g)+\sup_{g_{\theta}\in\mathcal{G}}\left|R_{n_1}(g_{\theta})-R(g_{\theta})\right|\leq R_{n_1}(g^{*})+\sup_{g_{\theta}\in\mathcal{G}}\left|R_{n_1}(g_{\theta})-R(g_{\theta})\right|\\
 & \leq R(g^{*})+2\sup_{g_{\theta}\in\mathcal{G}}\left|R_{n_1}(g_{\theta})-R(g_{\theta})\right|.
\end{align*}
For the denominator, from the definition of $\bar{\mathcal{S}}_{\gamma}^{\hat{\kappa}}$, it is not hard to verify that, in Algorithm \ref{algo:stage1}, our choice of $\hat{\kappa}$ is given by $\hat{\kappa}=\min\{\hat{g}(x):x\in\mathcal{H}(T_0)^c\}$. By lemma \ref{lem: distance}, we have that with probability at least
$1-\delta$, $B_{t(\delta,n_1)}\subset\mathcal{H}(T_{0})$, which implies
that with probability at least $1-\delta$, 
\begin{align*}
\hat{\kappa} & \geq\min\{\hat{g}(x):x\in B_{t(\delta,n_1)}^{c}\}\geq\min\{g^{*}(x):x\in B_{t(\delta,n_1)}^{c}\}-\left\Vert \hat{g}-g^{*}\right\Vert _{\infty}\\
 & \geq\min\{g^{*}(x):x\in \mathcal{S}_{\gamma }\}-t(\delta,n_1)\sqrt{d}\text{Lip}(g^{*})-\left\Vert \hat{g}-g^{*}\right\Vert _{\infty}\\
 & =\kappa^{*}-t(\delta,n_1)\sqrt{d}\text{Lip}(g^{*})-\left\Vert \hat{g}-g^{*}\right\Vert _{\infty}.
\end{align*}
Putting the preceding two inequalities into the Markov inequality
(\ref{eq: Markov}), and notice that $h$ is non decreasing by its definition, the theorem is proved. \end{proof}

\end{document}